\DeclareMathAlphabet{\mathmybb}{U}{bbold}{m}{n}
\newcommand{\1}{\mathmybb{1}}
\def\eps{\bm{\epsilon}}
\def\rvc{{\mathbf{c}}}
\def\rvv{{\mathbf{v}}}
\def\rvx{{\mathbf{x}}}
\def\rvy{{\mathbf{y}}}
\def\rvz{{\mathbf{z}}}
\def\vf{{\bm{f}}}
\def\vg{{\bm{g}}}
\def\mG{{\bm{G}}}
\def\mI{{\bm{I}}}
\DeclareMathAlphabet{\mathsfit}{\encodingdefault}{\sfdefault}{m}{sl}
\SetMathAlphabet{\mathsfit}{bold}{\encodingdefault}{\sfdefault}{bx}{n}
\def\gF{{\mathcal{F}}}
\def\gH{{\mathcal{H}}}
\def\gL{{\mathcal{L}}}
\def\gN{{\mathcal{N}}}
\def\gO{{\mathcal{O}}}
\def\gU{{\mathcal{U}}}
\def\emzero{{0}}
\def\emI{{I}}
\newcommand{\E}{\mathbb{E}}
\newcommand{\R}{\mathbb{R}}
\newcommand{\defeq}{\vcentcolon=}
\newcommand{\gradnd}[2]{\nabla_{#2} #1}
\newcommand\numeq[1]%
\newcommand{\ttimes}{{\mkern-1mu\times\mkern-1mu}}
\newcommand{\ud}{\mathrm{d}}
\theoremstyle{plain}
\theoremstyle{definition}
\theoremstyle{remark}
\definecolor{Gray}{gray}{0.85}
\definecolor{LightCyan}{rgb}{0.88,1,1}
\def\@onedot{\ifx\@let@token.\else.\null\fi\xspace}
\DeclareRobustCommand\onedot{\futurelet\@let@token\@onedot}
\newcommand{\figref}[1]{Figure~\ref{#1}}
\newcommand{\equref}[1]{Eq\onedot~\eqref{#1}}
\newcommand{\twoequref}[2]{Eq\onedot~\eqref{#1} and \eqref{#2}}
\newcommand{\secref}[1]{Section~\ref{#1}}
\newcommand{\tabref}[1]{Table~\ref{#1}}
\newcommand{\twotabref}[2]{Table~\ref{#1} and \ref{#2}}
\newcommand{\lemref}[1]{Lemma~\ref{#1}}
\newcommand{\appref}[1]{Appendix~\ref{#1}}
\newcommand{\algref}[1]{Algorithm~\ref{#1}}
\def\eg{\emph{e.g}\onedot}
\def\ie{\emph{i.e}\onedot}
\def\etc{\emph{etc}\onedot}
\def\wrt{w.r.t\onedot}
\def\iid{i.i.d\onedot}
\newcommand{\mypara}[1]{\noindent\textbf{#1}}
\newcommand{\model}[1][]{IMM}
\icmltitlerunning{Inductive Moment Matching}
\begin{document}

\let\oldtwocolumn\twocolumn
\renewcommand\twocolumn[1][]{%
    \oldtwocolumn[{#1}{
    \begin{center}
           \includegraphics[width=0.9\textwidth]{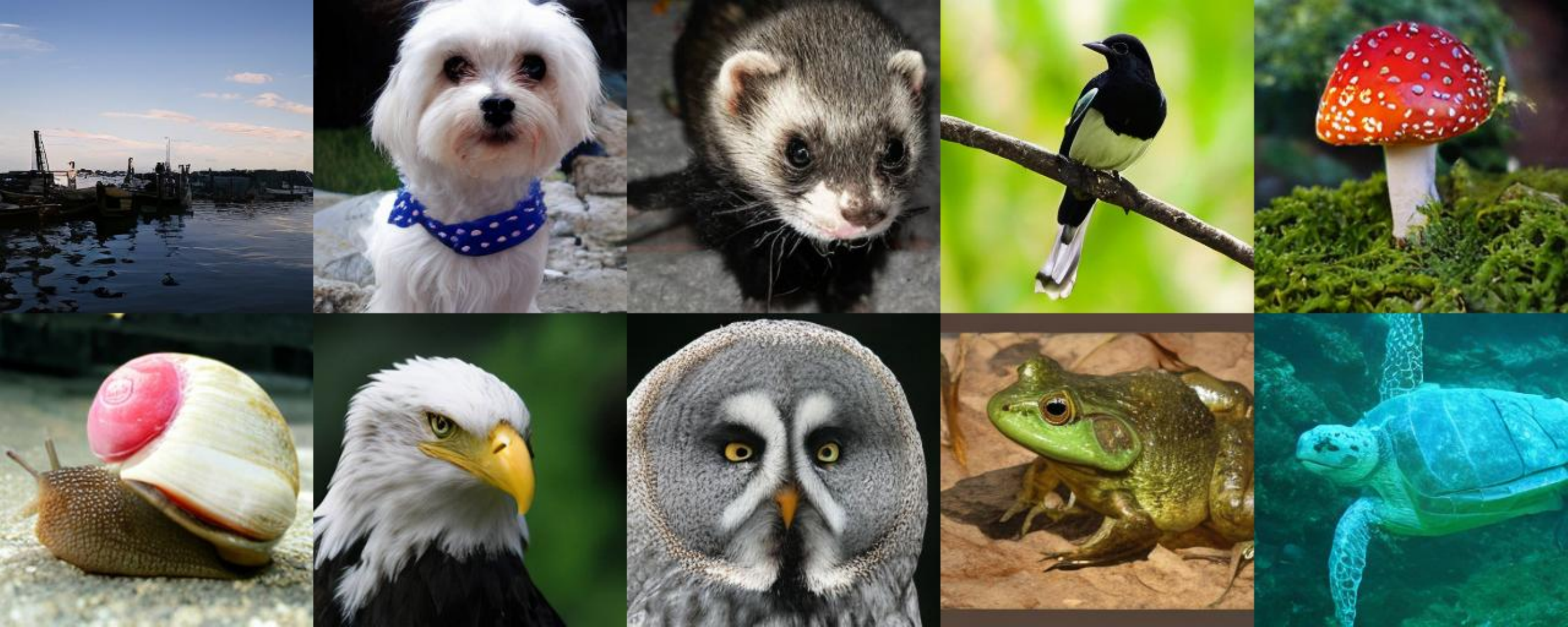}
           \vspace{-1mm}
           \captionof{figure}{Generated samples on ImageNet-$256\ttimes 256$ using 8 steps.}
           \label{fig:teaser}
        \end{center}
    }\vspace{6mm}
    ]
}
\twocolumn[

\icmltitle{Inductive Moment Matching}



\icmlsetsymbol{equal}{*}

\begin{icmlauthorlist}
\icmlauthor{Linqi Zhou}{luma}
\icmlauthor{Stefano Ermon}{stanford}
\icmlauthor{Jiaming Song}{luma}
\end{icmlauthorlist}

\icmlaffiliation{luma}{Luma AI}
\icmlaffiliation{stanford}{Stanford University}

\icmlcorrespondingauthor{Linqi Zhou}{alexzhou@lumalabs.ai}

\icmlkeywords{Machine Learning, ICML}

\vskip 0.3in
 
] 


\printAffiliationsAndNotice{}  

\def\arxiv{1}

\everypar{\looseness=-1}

\begin{abstract}
    Diffusion models and Flow Matching generate high-quality samples but are slow at inference, and distilling them into few-step models often leads to instability and extensive tuning. To resolve these trade-offs, we propose Inductive Moment Matching (IMM), a new class of generative models for one- or few-step sampling with a \textit{single-stage} training procedure. Unlike distillation, IMM does not require pre-training initialization and optimization of two networks; and unlike Consistency Models, IMM guarantees distribution-level convergence and remains stable under various hyperparameters and standard model architectures. IMM surpasses diffusion models on ImageNet-$256\ttimes 256$ with 1.99 FID using only 8 inference steps and achieves state-of-the-art 2-step FID of 1.98 on CIFAR-10 for a model trained from scratch.
\end{abstract}    
\section{Introduction}

Generative models for continuous domains have enabled numerous applications in images~\citep{rombach2022high,saharia2022photorealistic,esser2024scaling}, videos~\citep{ho2022imagen,blattmann2023stable,sora}, and audio~\citep{chen2020wavegrad,kong2020diffwave,liu2023audioldm}, yet achieving high-fidelity outputs, efficient inference, and stable training remains a core challenge — a trilemma that continues to motivate research in this domain. 
Diffusion models~\citep{sohl2015deep,ho2020denoising,song2020score}, one of the leading techniques, require many inference steps for high-quality results, while step-reduction methods, such as diffusion distillation~\citep{yin2024one,sauer2025adversarial,zhou2024score,luo2024diff} and Consistency Models~\citep{song2023consistency, geng2024consistency, lu2024simplifying, kim2023consistency}, often risk training collapse without careful tuning and regularization (such as pre-generating data-noise pair and early stopping).

To address the aforementioned trilemma, we introduce Inductive Moment Matching (\model{}), a stable, \textit{single-stage} training procedure that learns generative models from scratch for single- or multi-step inference. \model{} operates on the time-dependent marginal distributions of stochastic interpolants \citep{albergo2023stochastic} — continuous-time stochastic processes that connect two arbitrary probability density functions (data at $t=0$ and prior at $t=1$). By learning a (stochastic or deterministic) mapping from any marginal at time $t$ to any marginal at time $s<t$, it can naturally support one- or multi-step generation (\figref{fig:main}). 

\model{} models can be trained efficiently from mathematical induction. For time $s < r < t$, we form two distributions at $s$ by running a one-step \model{} from samples at $r$ and $t$. We then minimize their divergence, enforcing that the distributions at $s$ are independent of the starting time-steps. This construction by induction guarantees convergence to the data distribution.
To help with training stability, we model \model{} based on certain stochastic interpolants and optimize the objective with stable sample-based divergence estimators such as moment matching~\citep{gretton2012kernel}. Notably, we prove that Consistency Models (CMs) are a single-particle, first-moment matching special case of \model{}, which partially explains the training instability of CMs.

On ImageNet-$256\ttimes 256$, \model{} surpasses diffusion models and achieves 1.99 FID with only 8 inference steps using standard transformer architectures. On CIFAR-10, \model{} similarly achieves state-of-the-art of 1.98 FID with 2-step generation for a model trained from scratch. 

\section{Preliminaries}

\subsection{Diffusion, Flow Matching, and Interpolants}

For a data distribution $q(\rvx)$, Variance-Preserving (VP) diffusion models~\citep{ho2020denoising,song2020score} and Flow Matching (FM)~\citep{lipman2022flow,liu2022flow} construct time-augmented variables $\rvx_t$ as an interpolation between data $\rvx \sim q(\rvx)$ and prior $\eps \sim \gN(\emzero,\emI)$ such that $\rvx_t = \alpha_t\rvx + \sigma_t\eps$ where $\alpha_0 = \sigma_1 = 1, \alpha_1 = \sigma_0 = 0$. VP diffusion commonly chooses $\alpha_t = \cos(\frac{\pi}{2} t), \sigma_t = \sin(\frac{\pi}{2} t)$ and FM chooses $\alpha_t = 1-t, \sigma_t =t$. Both $v$-prediction diffusion~\citep{salimans2022progressive} and FM are trained by matching the conditional velocity $\rvv_t = \alpha_t' \rvx + \sigma_t' \eps$ such that a neural network $\mG_\theta(\rvx_t,t)$ approximates $\E_{\rvx,\eps}\left[\rvv_t | \rvx_t \right]$. 
Samples can then be generated via probability-flow ODE (PF-ODE) $\odv{\rvx_t}{t} = \mG_\theta(\rvx_t,t)$ starting from $\eps\sim  \gN(\emzero,\emI)$.

\mypara{Stochastic interpolants.} Unifying diffusion models and FM, stochastic interpolants~\citep{albergo2023stochastic,albergo2022building} construct a conditional interpolation $q_t(\rvx_t|\rvx, \eps) = \gN(\mI_t(\rvx, \eps), \gamma_t^2\emI)$ between any data $\rvx\sim q(\rvx)$ and prior $\eps \sim p(\eps)$ and sets constraints $\mI_1(\rvx, \eps) = \eps$, $\mI_0(\rvx, \eps) = \rvx$, and $ \gamma_1 = \gamma_0 = 0$. Similar to FM, a deterministic sampler can be learned by explicitly matching the conditional interpolant velocity $\rvv_t = \partial_t \mI_t(\rvx, \eps) + \dot{\gamma}_t\rvz$  where $\rvz\sim\gN(0,\emI)$ such that $\mG_\theta(\rvx_t, t)\approx \E_{\rvx, \eps, \rvz}[\rvv_t | \rvx_t]$. Sampling is performed following the PF-ODE $\odv{\rvx_t}{t} = \mG_\theta(\rvx_t, t)$ similarly starting from prior $\eps \sim p(\eps)$. 

When $\gamma_t\equiv 0$ and $\mI_t(\rvx, \eps) = \alpha_t \rvx + \sigma_t \eps$ for $\alpha_t,\sigma_t$ defined in FM, the intermediate variable $\rvx_t = \alpha_t \rvx + \sigma_t \eps$ becomes a deterministic interpolation and its interpolant velocity $\rvv_t = \alpha_t' \rvx + \sigma_t' \eps$ reduces to FM velocity. Thus, its training and inference both reduce to that of FM. When $\eps\sim \gN(0,\emI)$, stochastic interpolants reduce to $v$-prediction diffusion. 

\subsection{Maximum Mean Discrepancy}  
Maximum Mean Discrepancy (MMD,~\citet{gretton2012kernel}) between distribution $p(\rvx),q(\rvy)$ for $\rvx,\rvy\in \R^D$ is an integral probability metric~\citep{muller1997integral} commonly defined on Reproducing Kernel Hilbert Space (RKHS) $\gH$ with a positive definite kernel $k: \R^D\times \R^D \to \R$ as
\begin{align}\label{eq:mmd}
\text{MMD}^2(p(\rvx), q(\rvy)) &=  \norm{ \E_{\rvx} [k(\rvx,\cdot)]  - \E_{\rvy} [k(\rvy, \cdot)] }_{\gH}^2
\end{align}
where the norm is in $\gH$. 
Choices such as the RBF kernel imply an inner product of infinite-dimensional feature maps consisting of \textit{all} moments of $p(\rvx)$ and $q(\rvy)$,  \ie $\E[\rvx^j]$ and $\E[\rvy^j]$ for integer $j\ge 1$ ~\citep{steinwart2008support}.  


\section{Inductive Moment Matching}
\begin{figure}
    \centering
    \includegraphics[width=\linewidth]{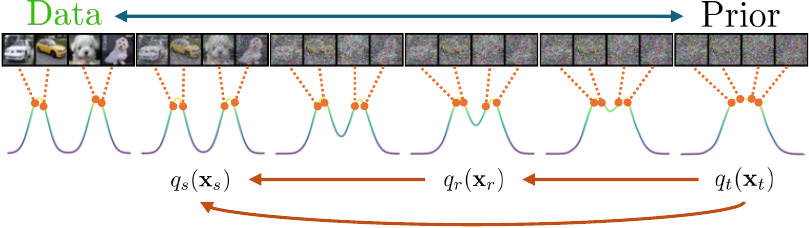}
    \vspace{-7mm}
    \caption{Using an interpolation from data to prior, we define a one-step sampler that moves from any $t$ to $s<t$, directly transforming $q_t(\rvx_t)$ to $q_s(\rvx_s)$. This can be repeated by jumping to an intermediate $r<t$ before moving to $s<r$.}
    \label{fig:main} 
\end{figure}


We introduce Inductive Moment Matching (IMM), a method that trains a model of both high quality and sampling efficiency in a single stage. To do so, we assume a time-augmented interpolation between data (distribution at $t=0$) and prior (distribution at $t=1$) and propose learning an implicit one-step model (\ie a one-step sampler) that transforms the distribution at time $t$ to the distribution at time $s$ for any $s<t$ (\secref{sec:model-construction}). The model enables direct one-step sampling from $t=1$ to $s=0$ and few-step sampling via recursive application from any $t$ to any $r<t$ and then to any $s<r$ until $s=0$; this allows us to learn the model from its own samples via bootstrapping (\secref{sec:bootstrap}).
\subsection{Model Construction via Interpolants}
\label{sec:model-construction}

Given data $\rvx \sim q(\rvx)$ and prior $\eps\sim p(\eps)$, the time-augmented interpolation $\rvx_t$ defined in~\citet{albergo2023stochastic} follows $\rvx_t\sim q_t(\rvx_t | \rvx, \eps)$.
This implies a marginal interpolating distribution  
\begin{align}\label{eq:interpmarg}
    q_t(\rvx_t) &= \iint q_t(\rvx_t | \rvx, \eps)q(\rvx)p(\eps)\ud \rvx \ud \eps.
\end{align}
We learn a model distribution implicitly defined by a one-step sampler that transforms $q_t(\rvx_t)$ into $q_s(\rvx_s)$ for some $s\leq t$. This can be done via a special class of interpolants, which preserves the marginal distribution $q_s(\rvx_s)$ while interpolating between $\rvx$ and $\rvx_t$. We term these \textit{marginal-preserving} interpolants among a class of generalized interpolants. Formally, we define $\rvx_s$ as a generalized interpolant between $\rvx$ and $\rvx_t$ if, for all $s\in[0,t]$, its distribution follows
\begin{align}\label{eq:general-interp}
    q_{s|t}(\rvx_{s}|\rvx, \rvx_t) = \gN(\mI_{s|t}(\rvx, \rvx_t), \gamma_{s|t}^2\emI)
\end{align}
and satisfies constraints $\mI_{t|t}(\rvx, \rvx_t) = \rvx_t$, $\mI_{0|t}(\rvx, \rvx_t) = \rvx$, $\gamma_{t|t} = \gamma_{0|t} = 0$, and $q_{t|1}(\rvx_{t}|\rvx, \eps)\equiv q_t(\rvx_t|\rvx, \eps)$. When $t=1$, it reduces to regular stochastic interpolants. Next, we define marginal-preserving interpolants.
\begin{restatable}[Marginal-Preserving Interpolants]{definition}{selfcons}
A generalized interpolant $\rvx_s$ is \textit{marginal-preserving} if for all $t\in [0, 1]$ and for all $s\in[0, t]$, the following equality holds:
\begin{align}\label{eq:mp-interp}
   q_s(\rvx_s) = \iint q_{s|t}(\rvx_s | \rvx, \rvx_t)q_t(\rvx|\rvx_t)q_t(\rvx_t)\ud \rvx_t \ud \rvx,
\end{align}  
where  
\begin{align} 
    q_t(\rvx | \rvx_t) &= \int \frac{q_t(\rvx_t | \rvx, \eps)q(\rvx)p(\eps)}{q_t(\rvx_t)} \ud \eps.
\end{align}
  \end{restatable} 
That is, this class of interpolants has the same marginal at $s$ regardless of $t$. For all $t\in[0,1]$, we define our \textit{noisy} model distribution at $s\in[0,t]$ as
\begin{align}\label{eq:gen-dist}
    p_{s|t}^\theta(\rvx_s) = \iint q_{s|t}(\rvx_{s}|\rvx, \rvx_t)p_{s|t}^\theta(\rvx | \rvx_t)q_{t}(\rvx_t)\ud\rvx_t\ud\rvx
\end{align}
where the interpolant is marginal preserving and $p_{s|t}^\theta(\rvx | \rvx_t)$ is our \textit{clean} model distribution implicitly parameterized as a one-step sampler. This definition also enables multistep sampling. To produce a clean sample $\rvx$  given $\rvx_t\sim q_t(\rvx_t)$ in two steps via an intermediate $s$: (1) we sample $\hat{\rvx}\sim p_{s|t}^\theta(\rvx|\rvx_t)$ followed by $\hat{\rvx}_s\sim q_{s|t}(\rvx_{s}|\hat{\rvx}, \rvx_t)$ and (2) if the marginal of $\hat{\rvx}_s$ matches $q_s(\rvx_s)$, we can obtain $\rvx$ by $\rvx \sim p_{0|s}^\theta(\rvx|\hat{\rvx}_s)$. We are therefore motivated to minimize divergence between \twoequref{eq:mp-interp}{eq:gen-dist} using the objective below.

\mypara{Na\"ive objective.} 
 As one can easily draw samples from the model, it can be na\"ively learned by directly minimizing
\begin{align}\label{eq:naive-obj}
    \gL(\theta) = \E_{s,t}\left[D(q_{s}(\rvx_{s}), p_{s|t}^{\theta}(\rvx_{s})) \right] 
\end{align}
with time distribution $p(s,t)$ and a sample-based divergence metric $D(\cdot, \cdot)$ such as MMD or GAN~\citep{goodfellow2020generative}. If an interpolant $\rvx_s$ is marginal-preserving, then the minimum loss is 0 (see \lemref{lem:minimizer}). One might also notice the similarity between right-hand sides of \twoequref{eq:mp-interp}{eq:gen-dist}. However, $q_s(\rvx_s) = p_{s|t}^\theta(\rvx_s)$ does not necessarily imply $p_{s|t}^\theta(\rvx | \rvx_t) = q_t(\rvx|\rvx_t)$. In fact, the minimizer $p_{s|t}^\theta(\rvx | \rvx_t) $ is not unique and, under mild assumptions, a deterministic minimizer exists (see \secref{sec:simplify}).

\subsection{Learning via Inductive Bootstrapping}
\label{sec:bootstrap}

 While sound, the na\"ive objective in \equref{eq:naive-obj} is difficult to optimize in practice because when $t$ is far from $s$, the input distribution $q_{t}(\rvx_t)$ can be far from the target $q_{s}(\rvx_{s})$.  Fortunately, our interpolant construction implies that the model definition in \equref{eq:gen-dist} satisfies boundary condition $q_{s}(\rvx_s) = p_{s|s}^\theta(\rvx_s)$ regardless of $\theta$ (see \lemref{lem:boundary}), 
  which indicates that $p_{s|t}^\theta(\rvx_s)\approx q_s(\rvx_s)$ when $t$ is close to $s$. Furthermore, 
 the interpolant enforces $p_{s|t}^\theta(\rvx_s)\approx p_{s|r}^\theta(\rvx_s)$ for any $r<t$ close to $t$ as long as the model is continuous around $t$. Therefore, we can construct an inductive learning algorithm for $p_{s|t}^\theta(\rvx_s)$ by using samples from $p_{s|r}^\theta(\rvx_s)$. 
 
 For better analysis, we define a sequence number $n$ for parameter $\theta_n$  and function $r(s, t)$ where $s \leq r(s, t) < t$ such that $p_{s|t}^{\theta_{n}}(\rvx_s)$ learns to match $p_{s|r}^{\theta_{n-1}}(\rvx_s)$.\footnote{Note that $n$ is different from optimization steps. Advancing from $n-1$ to $n$ can take arbitrary number of optimization steps.} We omit $r$'s arguments when context is clear and let $r(s,t)$ be a finite decrement from $t$ but truncated at $s\le t$ (see \appref{app:rdef} for well-conditioned $r(s,t)$).

\mypara{General objective.} With marginal-preserving interpolants and mapping $r(s,t)$, we learn $\theta_n$ in the following objective:
\begin{equation}\label{eq:general-obj}
\resizebox{0.88\linewidth}{!}{$\gL(\theta_n) = \E_{s,t}\left[w(s,t) \mathrm{MMD}^2(p_{s|r}^{\theta_{n-1}}(\rvx_s), p_{s|t}^{\theta_n}(\rvx_s)) \right]$}
\end{equation}
where $w(s,t)$ is a weighting function. We choose MMD as our objective due to its superior optimization stability and show that this objective learns the correct data distribution.
\begin{restatable}{theorem}{main}\label{thm:main}
    Assuming $r(s,t)$ is well-conditioned, the interpolant is marginal-preserving, and $\theta_n^*$ is a minimizer of \equref{eq:general-obj} for each $n$ with infinite data and network capacity, for all $t\in [0,1]$, $s\in [0,t]$,
\begin{align}
    \lim_{n \rightarrow \infty}  \mathrm{MMD}^2( q_{s}(\rvx_s), p_{s|t}^{\theta_n^*}(\rvx_s) ) = 0.
\end{align}
\end{restatable}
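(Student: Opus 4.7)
The plan is to proceed by induction on the number of times the map $r(s,\cdot)$ must be iterated in order to reduce $t$ down to $s$. For fixed $(s,t)$, I would set $t_0 = t$ and $t_{k+1} = r(s, t_k)$, and let $N(s,t)$ denote the smallest index with $t_{N(s,t)} = s$. Since $r(s,t)$ is well-conditioned (a finite decrement from $t$ truncated at $s$, as formalized in \appref{app:rdef}), $N(s,t)$ is finite for every admissible pair.

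I would then prove the stronger claim that for every $n \ge N(s,t)$, the equality $p_{s|t}^{\theta_n^*}(\rvx_s) = q_s(\rvx_s)$ holds exactly. The base case $N(s,t)=0$, i.e.\ $t=s$, follows immediately from the boundary condition in \lemref{lem:boundary}, which gives $p_{s|s}^\theta(\rvx_s) = q_s(\rvx_s)$ regardless of $\theta$. For the inductive step, suppose the claim holds whenever $N(s,t') \le k$, and fix $(s,t)$ with $N(s,t) = k+1$, so that $N(s, r(s,t)) = k$. By the inductive hypothesis, $p_{s|r(s,t)}^{\theta_n^*}(\rvx_s) = q_s(\rvx_s)$ for all $n \ge k$. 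Infinite network capacity then lets the one-step sampler $p_{s|t}^\theta(\rvx \mid \rvx_t)$ be tuned independently for each $(s,t)$ to match any target, and since MMD with a characteristic kernel is a proper probability metric that vanishes iff the two measures coincide, any minimizer $\theta_{n+1}^*$ of \equref{eq:general-obj} must satisfy $p_{s|t}^{\theta_{n+1}^*}(\rvx_s) = p_{s|r(s,t)}^{\theta_n^*}(\rvx_s)$ almost everywhere in $(s,t)$. Chaining with the inductive hypothesis yields $p_{s|t}^{\theta_{n+1}^*}(\rvx_s) = q_s(\rvx_s)$ for $n+1 \ge k+1$, closing the induction. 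Sending $n \to \infty$ then gives the stated limit.

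The main obstacle I anticipate is the interaction between the \emph{pointwise} conclusion of the theorem and the fact that $\theta_n^*$ minimizes an \emph{integrated} objective over a time distribution $p(s,t)$: one must argue that infinite capacity permits a single parameterization to simultaneously realize the correct conditional sampler at every admissible $(s,t)$, not merely on a set of full $p(s,t)$-measure. A secondary but related subtlety is formalizing the well-conditioning of $r(s,t)$ tightly enough to ensure $N(s,t)$ is uniformly finite on the relevant compact region and that no exceptional measure-zero set of $(s,t)$ derails the recursion. I expect both issues to dissolve either by invoking continuity of the sampler in $(s,t)$ together with full support of $p(s,t)$, or by adopting the standard convention that ``infinite capacity'' asserts exact per-$(s,t)$ attainment of the MMD minimum.
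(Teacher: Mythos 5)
Your proposal is correct and takes essentially the same route as the paper's own proof: both are inductions anchored at the boundary condition $p_{s|s}^\theta(\rvx_s) = q_s(\rvx_s)$ (Lemma~\ref{lem:boundary}) and propagated by the minimizer lemma (Lemma~\ref{lem:minimizer}), with the well-conditioning of $r$ guaranteeing that the recursion covers the full interval in finitely many steps for each fixed $(s,t)$. Your bookkeeping — inducting on the iteration count $N(s,t)$ and asserting exactness for all $n \ge N(s,t)$ — is just the contrapositive repackaging of the paper's claim that after $n$ inductive rounds, all $t \le r_{n-1}^{-1}(s, c_0(s))$ are correctly matched by $\theta_n^*$; the two formulations are interchangeable because $N(s,t) \le n$ iff $t \le r_{n-1}^{-1}(s, c_0(s))$. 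The "obstacle" you flag about the pointwise conclusion versus the integrated $p(s,t)$-weighted objective is real but is not resolved in the paper either: the paper likewise treats "minimizer with infinite capacity" as conferring per-$(s,t)$ exactness without an explicit measure-theoretic argument, so your handling is consistent with the paper's level of rigor.
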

\vspace{-3mm}
In other words, $\theta_n$ eventually learns the target distribution $q_s(\rvx_s)$ by parameterizing a one-step sampler $p_{s|t}^{\theta_n}(\rvx | \rvx_t)$.

\section{Simplified Formulation and Practice}\label{sec:simplify}

We present algorithmic and practical decisions below. 
\vspace{-2mm}
\subsection{Algorithmic Considerations}

Despite theoretical soundness, it remains unclear how to empirically choose a marginal-preserving interpolant. First, we present a sufficient condition for marginal preservation.
\begin{restatable}[Self-Consistent Interpolants]{definition}{selfcons}
Given $s, t\in[0,1]$, $s\leq t$, an interpolant $\rvx_s\sim q_{s|t}(\rvx_{s}|\rvx, \rvx_t)$ is \textit{self-consistent} if for all $r\in[s,t]$, the following holds:
\vspace{-2mm}
\begin{align} 
   q_{s|t}(\rvx_{s}|\rvx, \rvx_t) = \int q_{s|r}(\rvx_{s}|\rvx, \rvx_{r}) q_{r|t}(\rvx_{r}|\rvx, \rvx_t) \ud\rvx_{r}
\end{align}  
\end{restatable}

In other words, $\rvx_s$ has the same distribution if one (1) directly samples it by interpolating $\rvx$ and $\rvx_t$ and (2) first samples any $\rvx_r$ (given $\rvx$ and $\rvx_t$) and then samples $\rvx_s$ (given $\rvx$ and $\rvx_r$). Furthermore, self-consistency implies marginal preservation (\lemref{lem:margpres}). 


\mypara{DDIM interpolant.} Denoising Diffusion Implicit Models~\citep{song2020denoising} was introduced as a fast ODE sampler for diffusion models, defined as
\begin{align}\label{eq:ddim}
    \operatorname{DDIM}(\rvx_t, \rvx, s, t) = \left(\alpha_s -  \frac{\sigma_s}{\sigma_t}\alpha_t\right) \rvx +  \frac{\sigma_s}{\sigma_t} \rvx_t 
\end{align}
and sample $\rvx_s = \operatorname{DDIM}(\rvx_t, \E_\rvx[\rvx|\rvx_t], s, t)$ can be drawn when $ \E_\rvx[\rvx|\rvx_t]$ is approximated by a network. We show in \appref{app:ddiminterp} that DDIM as an interpolant, \ie $\gamma_{s|t}\equiv 0$ and $\mI_{s|t}(\rvx, \rvx_t) = \operatorname{DDIM}(\rvx_t, \rvx, s, t)$, is self-consistent. Moreover, with deterministic interpolants such as DDIM, there exists a deterministic minimizer $p_{s|t}^\theta(\rvx|\rvx_t)$ of \equref{eq:naive-obj}.

\begin{restatable}{proposition}{existence} (Informal) If $\gamma_{s|t}\equiv 0$ and $\mI_{s|t}(\rvx, \rvx_t)$ satisfies mild assumptions, there exists  a deterministic $p_{s|t}^{\theta}(\rvx|\rvx_t)$ that attains 0 loss for \equref{eq:naive-obj}. 
\end{restatable}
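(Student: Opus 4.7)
The plan is to construct a deterministic minimizer explicitly. Parameterize $p_{s|t}^\theta(\rvx|\rvx_t) = \delta(\rvx - f_\theta(\rvx_t))$ for some measurable $f_\theta:\R^D\to\R^D$. Since $\gamma_{s|t}\equiv 0$, the kernel $q_{s|t}(\rvx_s|\rvx,\rvx_t)$ collapses to $\delta(\rvx_s - \mI_{s|t}(\rvx,\rvx_t))$, so \equref{eq:gen-dist} simplifies and $p_{s|t}^\theta(\rvx_s)$ becomes the pushforward of $q_t$ under the map $g_{f_\theta}(\rvx_t) \coloneqq \mI_{s|t}(f_\theta(\rvx_t),\rvx_t)$. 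Zero loss in \equref{eq:naive-obj} is then equivalent to $(g_{f_\theta})_{\#} q_t = q_s$ as Borel measures on $\R^D$.

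The mild assumption I expect to be used is that, for each fixed $\rvx_t$, the map $\rvx\mapsto \mI_{s|t}(\rvx,\rvx_t)$ is a bimeasurable bijection of $\R^D$ with a jointly measurable inverse (the DDIM interpolant satisfies this whenever its $\rvx$-coefficient is nonzero). Under this assumption, finding the desired $f_\theta$ reduces to finding a measurable map $h:\R^D\to\R^D$ with $h_{\#} q_t = q_s$: simply set $f_\theta(\rvx_t) = \mI_{s|t}^{-1}(h(\rvx_t);\rvx_t)$, which is measurable by the joint-measurability hypothesis.

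Existence of such an $h$ is classical. For $s,t\in(0,1)$, both $q_t$ and $q_s$ are absolutely continuous on $\R^D$ because they arise from convolving $q$ and $p$ through the Gaussian interpolation kernel, so one obtains $h$ from, \eg, Brenier's theorem, or from Rokhlin's measure-theoretic isomorphism of atomless standard Borel probability spaces. Composing with $\mI_{s|t}^{-1}$ produces the desired deterministic $f_\theta$, which by construction attains zero loss in \equref{eq:naive-obj}.

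The main obstacle will be the boundary case $s=0$, where $q_s = q$ may be singular (\eg, concentrated on a lower-dimensional manifold) and Brenier no longer applies directly. A careful ``mild assumptions'' statement will likely either (i) require $q$ to be atomless so Rokhlin isomorphism still yields a measurable $h$, or (ii) restrict the conclusion to $s>0$ and invoke continuity of $p_{s|t}^\theta$ in $s$ to extend to $s=0$ by passage to the limit. A secondary concern is joint measurability of $\rvx_t\mapsto\mI_{s|t}^{-1}(\,\cdot\,;\rvx_t)$, which is automatic from the affine structure in the DDIM case but may need an explicit smoothness hypothesis on $\mI_{s|t}$ in full generality.
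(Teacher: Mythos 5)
Your high-level strategy matches the paper's exactly: reduce to exhibiting a measurable $h$ with $h_{\#}q_t = q_s$ and then pull it back through $\mI_{s|t}^{-1}$ (which exists by the invertibility hypothesis) to define a deterministic $p_{s|t}^\theta(\rvx|\rvx_t)$. Where you diverge is the existence argument for $h$. You invoke an abstract transport theorem (Brenier, or Rokhlin isomorphism of atomless standard Borel spaces), while the paper gives a \emph{constructive} $h$: it takes $\hat h_{s|t}(\rvx_t)$ to be the flow map of the probability-flow ODE of the underlying interpolant, $\hat h_{s|t}(\rvx_t)=\rvx_t+\int_t^s \E_{\rvx,\eps}\bigl[\partial_u \mI_u(\rvx,\eps)\mid\rvx_u\bigr]\,\ud u$, and then sets $h_{s|t}=\mI_{s|t}^{-1}(\hat h_{s|t}(\cdot),\cdot)$. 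The paper's ``mild assumption'' $\norm{\mI_{t|1}(\rvx,\eps)}<C_1\norm{\rvx-\eps}$ is precisely the regularity condition from \citet{albergo2023stochastic} that guarantees this PF-ODE exists and transports $q_t$ to $q_s$ for \emph{all} $s\in[0,t]$, including $s=0$. This is the practical payoff of the constructive route: it sidesteps the boundary issue you correctly flag at $s=0$, where $q_0=q$ may be singular and absolute-continuity-based transport arguments like Brenier's no longer apply directly. Your two proposed workarounds (atomlessness of $q$ so Rokhlin still applies, or a limiting argument from $s>0$) would plug this hole, but they require either an additional hypothesis on $q$ or a continuity argument not needed in the paper's version. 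Your route does buy generality in one direction --- it never needs the PF-ODE to exist, so it could in principle cover interpolants whose velocity field is too rough for the ODE to be well-posed --- but in exchange you need regularity of the \emph{marginals} rather than regularity of the \emph{interpolant coefficients}, and the latter is what the paper actually has in hand (DDIM is affine in $\rvx$, which gives both the invertibility you both use and the Lipschitz-type bound the paper's PF-ODE argument needs). Both arguments are sound modulo the $s=0$ caveat; the paper's is the one that closes cleanly with the stated hypotheses.
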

See \appref{app:existence} for formal statement and proof. This allows us to define $p_{s|t}^{\theta}(\rvx|\rvx_t) = \delta(\rvx - \vg_\theta(\rvx_t, s, t))$ for a neural network $\vg_\theta(\rvx_t, s, t)$ with parameter $\theta$ by default.

\mypara{Eliminating stochasticity.} We use DDIM interpolant, deterministic model, and prior $p(\eps) = \gN(0, \sigma_d^2 \emI)$ where $\sigma_d$ is the data standard deviation~\citep{lu2024simplifying}.   As a result, one can draw $\rvx_s$ from model via $\rvx_s = \vf_{s, t}^{\theta}(\rvx_t) \defeq \operatorname{DDIM}(\rvx_t, \vg_\theta(\rvx_t, s, t), s, t)$ where $\rvx_t \sim q_{t}(\rvx_t)$. 

\mypara{Re-using $\rvx_t$ for $\rvx_r$.} Inspecting \twoequref{eq:general-obj}{eq:gen-dist}, one requires $\rvx_r\sim q_{r}(\rvx_r)$ to generate samples from the target distribution. Instead of sampling $\rvx_r$ given a new $(\rvx,\eps)$ pair, we can reduce variance by reusing $\rvx_t$ and $\rvx$ such that $\rvx_r = \operatorname{DDIM}(\rvx_t, \rvx, r, t)$. This is justified because $\rvx_r$ derived from $\rvx_t$ preserves the marginal distribution $q_{r}(\rvx_r)$ (see \appref{app:reusext}).

\mypara{Stop gradient.}  We set $n$ to optimization step number, \ie advancing from $n-1$ to $n$ is a single optimizer step where $\theta_{n}$ is initialized from $\theta_{n-1}$. Equivalently, we can omit $n$ from $\theta_n$ and write $\theta_{n-1}$ as the stop-gradient parameter $\theta^-$. 

\mypara{Simplified objective.} Let $\rvx_t,\rvx_t'$ be \iid random variables from $q_{t}(\rvx_t)$ and $\rvx_r,\rvx_r'$ are variables obtained by reusing $\rvx_t,\rvx_t'$ respectively, the training objective can be derived from the MMD definition in \equref{eq:mmd} (see \appref{app:simple-obj}) as 
 \begin{align}\label{eq:simple-obj}
     & \gL_{\text{\model{}}} (\theta) = \E_{\rvx_t, \rvx_t', \rvx_r, \rvx_r',s, t}  \Big[w(s,t) \Big[    k\Big(\rvy_{s, t}, \rvy_{s, t}'\Big) \\ \nonumber  
    &+ k\Big({\rvy}_{s, r},{\rvy}_{s, r}'\Big)  - k\Big(\rvy_{s, t}, {\rvy}_{s, r}'\Big) - k\Big(\rvy_{s, t}', {\rvy}_{s, r}\Big)  \Big] \Big]
\end{align}
where $\rvy_{s, t} = \vf_{s, t}^{\theta}(\rvx_t)$, $\rvy_{s, t}' = \vf_{s, t}^{\theta}(\rvx_t')$,  ${\rvy}_{s, r} = \vf_{s, r}^{\theta^-}(\rvx_r)$, ${\rvy}_{s, r}' = \vf_{s, r}^{\theta^-}(\rvx_r')$, $k(\cdot, \cdot)$ is a kernel function, and $w(s,t)$ is a prior weighting function.
 
\if\arxiv1
An empirical estimate of the above objective uses $M$ particle samples to approximate each distribution indexed by $t$. In practice, we divide a batch of model output with size $B$ into $B/M$ groups within which share the same $(s,t)$ sample, and the objective is approximated by instantiating $B/M$ number of $M\times M$ matrices. Note that the number of model passes does not change with respect to $M$ (see \appref{app:emp-estimate}). A $M=2$ version is visualized in \figref{fig:particle} and a simplified training algorithm is shown in \algref{alg:simple-training-algo}. A full training algorithm is shown in \appref{app:training-algo}. 
\else

An empirical estimate of the above objective uses $M$ particle samples to approximate each distribution indexed by $t$ (see \appref{app:emp-estimate}). A $M=2$ version is visualized in \figref{fig:particle} and a simplified training algorithm is shown in \algref{alg:simple-training-algo}. A full training algorithm is shown in \appref{app:training-algo}. 
\fi

\begin{figure}[t]
    \centering
    \includegraphics[width=\linewidth]{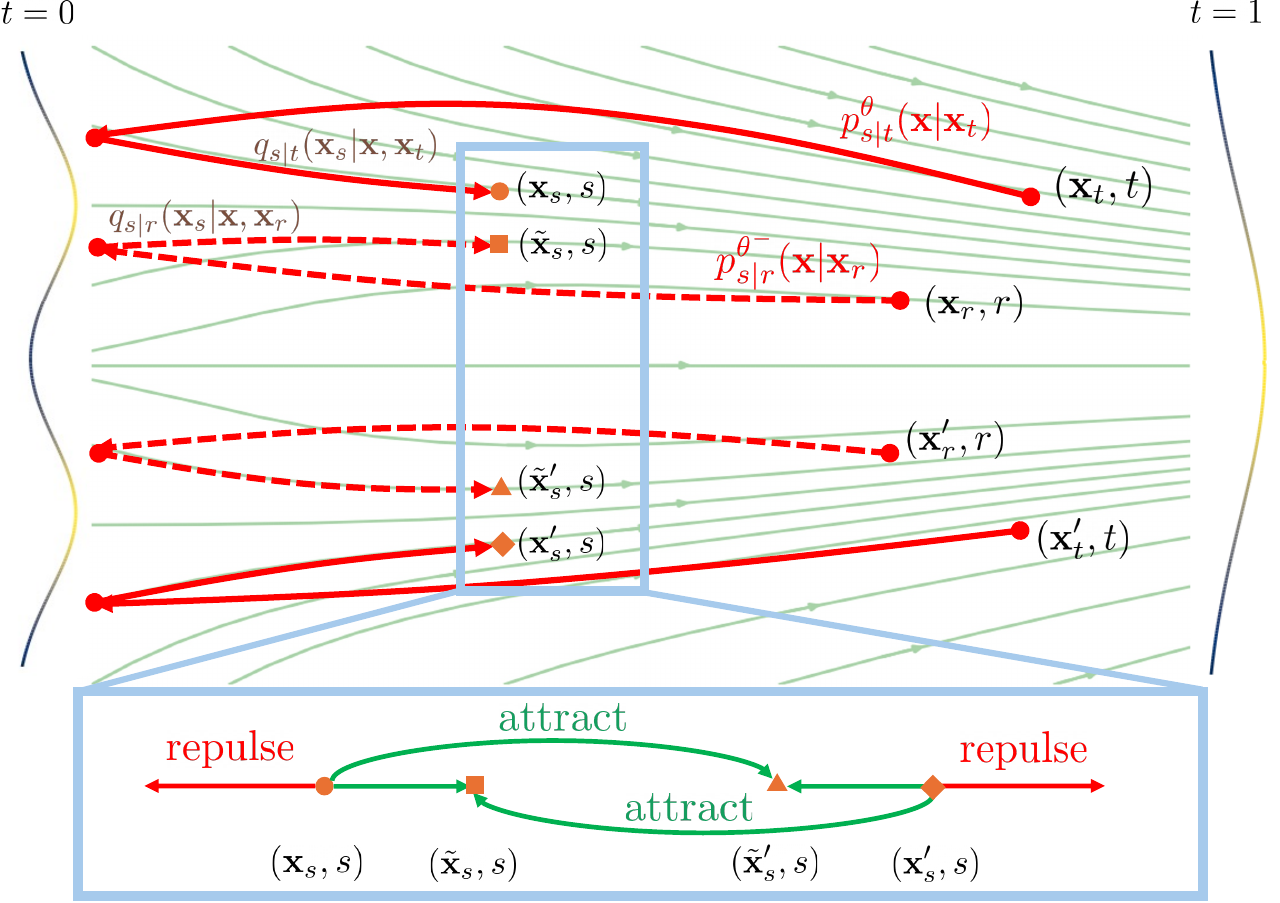}
    \caption{With self-consistent interpolants, \model{} uses $M$ particle samples ($M=2$ is shown above) for moment matching. Samples from $p_{s|t}^\theta(\rvx_s)$ are obtained by drawing from $p_{s|t}^\theta(\rvx | \rvx_t)$ followed by $q_{s|t}(\rvx_s | \rvx, \rvx_t)$. Solid and dashed red lines indicate sampling with and without gradient propagation respectively.  After $M$ samples are drawn, sample $\rvx_s$ is repulsed by $\rvx_s'$ and attracted towards samples of $\Tilde{\rvx}_s$ and $\Tilde{\rvx}_s'$ through kernel function $k(\cdot, \cdot)$.}
    \label{fig:particle}
    \vspace{-1.5mm}
\end{figure}

\subsection{Other Implementation Choices}
   
\if\arxiv1

We defer detailed  analysis of each decision to \appref{app:analysis}.

\mypara{Flow trajectories.} We investigate the two most used flow trajectories~\citep{nichol2021improved,lipman2022flow},
\begin{itemize}[leftmargin=*]
    \item \textbf{Cosine}. $\alpha_t = \cos(\frac{1}{2}\pi t)$, $\sigma_t = \sin(\frac{1}{2}\pi t)$. 
    \item \textbf{OT-FM}.  $\alpha_t = 1- t$, $\sigma_t = t$. 
\end{itemize}

\mypara{Network $\vg_\theta(\rvx_t, s, t)$.} We set $\vg_\theta(\rvx_t, s, t) = c_{\text{skip}}(t)\rvx_t + c_{\text{out}}(t) \mG_\theta(c_{\text{in}}(t)\rvx_t, c_{\text{noise}}(s), c_{\text{noise}}(t))$ with a neural network $\mG_\theta$, following EDM \citep{karras2022elucidating}.  For all choices we let $c_{\text{in}}(t) = 1/\sqrt{\alpha_t^2 + \sigma_t^2}/\sigma_d$~\citep{lu2024simplifying}. Listed below are valid choices for other coefficients.
\begin{itemize}[leftmargin=*]
    \item \textbf{Identity}. $c_{\text{skip}}(t) = 0$, $c_{\text{out}}(t) = 1$. 
    \item \textbf{Simple-EDM} \citep{lu2024simplifying}.  $c_{\text{skip}}(t) = \alpha_t / (\alpha_t^2 + \sigma_t^2)$, $c_{\text{out}}(t) = -  \sigma_d \sigma_t / \sqrt{\alpha_t^2 + \sigma_t^2}$. 
    \item \textbf{Euler-FM}. $c_{\text{skip}}(t) = 1$, $c_{\text{out}}(t) = - t\sigma_d$. This is specific to OT-FM schedule.
\end{itemize}  
We show in \appref{app:simple-f} that $\vf_{s,t}^\theta(\rvx_t)$ similarly follows the EDM parameterization of the form $\vf_{s,t}^\theta(\rvx_t) = c_{\text{skip}}(s,t)\rvx_t + c_{\text{out}}(s,t) \mG_\theta(c_{\text{in}}(t)\rvx_t, c_{\text{noise}}(s), c_{\text{noise}}(t))$. 
 \else
 
\mypara{Flow trajectories and $\vg_\theta(\rvx_t, s, t)$.} We set $\vg_\theta(\rvx_s,s,t)$ to
\begin{align}
    c_{\text{skip}}(t)\rvx_t + c_{\text{out}}(t) \mG_\theta(c_{\text{in}}(t)\rvx_t, c_{\text{noise}}(s), c_{\text{noise}}(t))
\end{align}
as in EDM \citep{karras2022elucidating} and use FM schedule by default, 
implying a similar parameterization for $\vf_{s,t}^\theta(\rvx_t)=c_{\text{skip}}(s,t)\rvx_t + c_{\text{out}}(s,t) \mG_\theta(c_{\text{in}}(t)\rvx_t, c_{\text{noise}}(s), c_{\text{noise}}(t))$. We detail concrete choices in \appref{app:analysis}.
 \fi

\mypara{Noise conditioning $c_{\text{noise}}(\cdot)$.} 
We choose $c_{\text{noise}}(t) = c t$ for some constant $c \ge 1$. 
We find our model convergence relatively insensitive to $c$ but recommend using larger $c$, \eg $1000$ \citep{song2020score,peebles2023scalable}, because it enables sufficient distinction between nearby $r$ and $t$.
  
\mypara{Mapping function $r(s,t)$.} We find that $r(s,t)$ via constant decrement in $\eta_t = \sigma_t / \alpha_t$ works well where the decrement is chosen in the form of $(\eta_\text{max} - \eta_\text{min}) / 2^k$ for some appropriate $k$ (details in \appref{app:time}).

\mypara{Kernel function.} We use time-dependent Laplace kernels of the form $k_{s,t}(x,y) = \exp(-\Tilde{w}(s,t) \max(\norm{x-y}_2, \epsilon) / D)$ for $x,y\in \R^D$, some $\epsilon > 0$ to avoid undefined gradients, and $\Tilde{w}(s,t) = 1/ \abs{c_\text{out}(s,t)}$.  We find Laplace kernels provide better gradient signals than RBF kernels. (see \appref{app:kernel}). 

\mypara{Weighting $w(s,t)$ and distribution $p(s,t)$.} We follow  VDM~\citep{kingma2021variational,kingma2024understanding} and define $p(t)=\gU(\epsilon, T)$ and $p(s|t) = \gU(\epsilon, t)$ for constants $\epsilon, T\in [0,1]$. Similarly, weighting is defined as 
\begin{align}
    w(s,t) = \frac{ 1}{2}  \sigma(b - \lambda_t)  \left(-\frac{\ud}{\ud t} \lambda_t\right) \frac{ \alpha_t^a}{ \alpha_t^2 + \sigma_t^2}
\end{align}
 where $\sigma(\cdot)$ is sigmoid function, $\lambda_t$ denotes $ \text{log-SNR}_t$ and $a\in\{1,2\}$, $\sigma(\cdot)$, $b\in \R$ are constants (see \appref{app:weight}).

\subsection{Sampling}

\mypara{Pushforward sampling.} A sample $\rvx_s$ can be obtained by directly pushing $\rvx_t \sim q_{t}(\rvx_t)$ through $\vf_{s,t}^\theta(\rvx_t)$. This can be iterated for an arbitrary number of steps starting from $\eps\sim \gN(0, \sigma_d^2 \emI)$ until $s=0$. We note that, by definition, one application of $\vf_{s,t}^\theta(\rvx_t)$ is equivalent to one DDIM step using the learned network $\vg_\theta(\rvx_t, s, t)$ as the $\rvx$ prediction. This sampler can then be viewed as a few-step sampler using DDIM where $\vg_\theta(\rvx_t, s, t)$ outputs a realistic sample $\rvx$ instead of its expectation $\E_{\rvx}[\rvx | \rvx_t]$ as in diffusion models. 

\mypara{Restart sampling.} Similar to~\citet{xu2023restart,song2023consistency}, one can introduce stochasticity during sampling by re-noising a sample to a higher noise-level before sampling again. For example, a two-step restart sampler from $\rvx_t$ requires $s \in (0,t)$ for drawing sample $\hat{\rvx} = \vf_{0,s}^\theta(\rvx_s)$ where $\rvx_s  \sim q_{s}(\rvx_s |  \vf_{0,t}^\theta(\rvx_t) ).$

\mypara{Classifier-free guidance.} Given a data-label pair $(\rvx,\rvc)$, during inference time, classifier-free guidance~\citep{ho2022classifier} with weight $w$ replaces conditional model output $\mG_\theta(\rvx_t, s, t, \rvc)$ by a reweighted model output via
\begin{align}\label{eq:cfg}
    w \mG_\theta(\rvx_t, s, t, \rvc) + (1 - w) \mG_\theta(\rvx_t, s, t, \varnothing) 
\end{align}
where $\varnothing$ denotes the null-token indicating unconditional output. Similarly, we define our guided model as $\vf_{s,t,w}^\theta(\rvx_t) = c_{\text{skip}}(s,t)\rvx_t + c_{\text{out}}(s,t) \mG_\theta^w( \rvx_t, s, t, \rvc)$ where $ \mG_\theta^w( \rvx_t, s, t, \rvc)$ is as defined in \equref{eq:cfg} and we drop $c_\text{in}(\cdot)$ and $c_\text{noise}(\cdot)$ for notational simplicity. We justify this decision in \appref{app:cfg}.  Similar to diffusion models, $\rvc$ is randomly dropped with probability $p$ during training without special practices.

\if\arxiv1
We present pushforward sampling in~\algref{alg:simple-pushforward} and detail both samplers in \appref{app:sampling-algo}. 
\else 
We detail the sampling algorithms in \appref{app:sampling-algo}.
\fi

\if\arxiv1
\setlength{\textfloatsep}{5mm}
\setlength{\floatsep}{0.5em}
\begin{algorithm}[t]
\caption{Training (see \appref{app:training-algo} for full version)}\label{alg:simple-training-algo}
\begin{algorithmic}
\STATE {\bf Input:} parameter $\theta$, $\operatorname{DDIM}(\rvx_t, \rvx, s, t)$,  $B$,  $M$, $p$
\STATE {\bf Output:} learned $\theta$
\WHILE{model not converged}
\STATE Sample data $x$, label $c$, and prior $\epsilon$ with batch size $B$ and split into $B/M$ groups. Each group shares a $(s,r,t)$ sample.
\STATE  For each group, $x_t  \leftarrow \operatorname{DDIM}(\epsilon , x , t, 1)$.
\STATE  For each group, $x_r \leftarrow \operatorname{DDIM}(x_t, x , r, t)$.
\STATE  For each instance, set $c=\varnothing$ with prob. $p$.
\STATE  Minimize the empirical loss $\hat{\gL}_\text{IMM}(\theta)$ in \equref{eq:empirical-loss}.
\ENDWHILE
\end{algorithmic} 
\end{algorithm}  
\begin{algorithm}[t]
\caption{Pushforward Sampling (details in \appref{app:sampling-algo})}\label{alg:simple-pushforward}
\begin{algorithmic}
\STATE {\bf Input:} model $\vf ^\theta$,  $\{t_i\}_{i=0}^{N}$, $\gN(0, \sigma_d^2\emI)$, (optional) $w$
\STATE {\bf Output:} $\rvx_{t_0}$
\STATE {\bf Sample} $\rvx_{N}\sim \gN(0, \sigma_d^2\emI)$
\FOR{$i = N,\dots, 1$}  
\STATE $\rvx_{t_{i-1}} \gets  \vf_{t_{i-1},t_{i}}^\theta(\rvx_{t_i})$ or $ \vf_{t_{i-1},t_{i},w}^\theta(\rvx_{t_i})$.
\ENDFOR
\end{algorithmic} 
\end{algorithm}

\else
\fi

\section{Connection with Prior Works}\label{sec:connection}
Our work is closely connected with many prior works. Detailed analysis is found in \appref{app:connection}.

\mypara{Consistency Models.} Consistency models (CMs)~\citep{song2023consistency,song2023improved,lu2024simplifying} uses a network $\vg_\theta(\rvx_t, t)$ that outputs clean data given noisy input $\rvx_t$. It requires point-wise consistency $\vg_\theta(\rvx_t, t) = \vg_{\theta}(\rvx_r, r)$ for any $r < t$ where $\rvx_r$ is obatined via an ODE solver from $\rvx_t$ using either pretrained model or groundtruth data. Discrete-time CM must satisfy $\vg_\theta(\rvx_0, 0) = \rvx_0$ and trains via loss $\E_{\rvx_t,\rvx,t}[d(\vg_\theta(\rvx_t,t), \vg_{\theta^-}(\rvx_r,r))]$ where $d(\cdot,\cdot)$ is commonly chosen as $\gL_2$ or LPIPS~\citep{zhang2018unreasonable}.  

We show in the following Lemma that CM objective with $\gL_2$ distance is a single-particle estimate of \model{} objective with energy kernel.
\begin{restatable}{lemma}{cmone} When $\rvx_t = \rvx_t'$, $\rvx_r=\rvx_r'$, $k(x,y) = -\norm{x-y}^2$,  and $s>0$ is a small constant, \equref{eq:simple-obj} reduces to CM loss $\E_{\rvx_t,\rvx,t}\left[w(t)\norm{\vg_\theta(\rvx_t,t) - \vg_{\theta^-}(\rvx_r,r)}^2 \right]$ for some valid mapping $r(t)<t$.
\vspace{-2mm}
\end{restatable}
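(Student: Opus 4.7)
The plan is to trace through the simplified objective~\equref{eq:simple-obj} under the stated specializations and show that it matches the CM loss up to a rescaling of the weighting function and a renaming of the mapping.

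First, I would substitute $\rvx_t=\rvx_t'$ and $\rvx_r=\rvx_r'$, which force $\rvy_{s,t}=\rvy_{s,t}'$ and $\rvy_{s,r}=\rvy_{s,r}'$. With the energy kernel $k(x,y)=-\norm{x-y}^2$, the two diagonal kernel evaluations vanish since $k(z,z)=0$, and the two cross evaluations each equal $-\norm{\rvy_{s,t}-\rvy_{s,r}}^2$. Thus the bracketed expression in~\equref{eq:simple-obj} collapses to $2\norm{\rvy_{s,t}-\rvy_{s,r}}^2$.

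Next, I would expand $\rvy_{s,t}=\operatorname{DDIM}(\rvx_t,\vg_\theta(\rvx_t,s,t),s,t)$ and $\rvy_{s,r}=\operatorname{DDIM}(\rvx_r,\vg_{\theta^-}(\rvx_r,s,r),s,r)$ using~\equref{eq:ddim}. Since $\alpha_s\to 1$ and $\sigma_s\to 0$ as $s\to 0^+$, the DDIM mixing coefficient $\sigma_s/\sigma_t$ becomes negligible and each DDIM step reduces to its $\rvx$-prediction, i.e.\ $\rvy_{s,t}\to\vg_\theta(\rvx_t,s,t)$ and $\rvy_{s,r}\to\vg_{\theta^-}(\rvx_r,s,r)$. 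Fixing $s$ at a small positive value and suppressing it in the network signature yields the two-argument form $\vg_\theta(\rvx_t,t)$ used by CM.

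Finally, I would absorb the factor of $2$ and the fixed-$s$ dependence into a CM-style weighting $w(t):=2w(s,t)$, and identify $r(t):=r(s,t)$, which automatically satisfies $r(t)<t$ by the construction of $r(s,t)$ in~\appref{app:rdef}. The resulting objective coincides with the discrete-time CM loss written with $\gL_2$ distance. The main obstacle is handling the small-$s$ condition rigorously: the identification is exact only in the limit $s\to 0^+$, since for fixed small $s>0$ the DDIM step contributes an additive residual of order $\sigma_s/\sigma_t$ from the linear mixing of $\rvx_t$ and $\rvx_r$ into $\rvy_{s,t}$ and $\rvy_{s,r}$. I would argue that under mild regularity of $\vg_\theta$ in its middle argument this residual is uniformly controlled and can be folded into the weight, so the correspondence is tight in the limit.
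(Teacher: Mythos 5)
Your proof is correct and follows essentially the same route as the paper's: collapse the four kernel terms to $2\norm{\rvy_{s,t}-\rvy_{s,r}}^2$ using $k(z,z)=0$ under equal particles, then send $s$ small so the DDIM step $\vf_{s,t}^\theta$ reduces to the $\rvx$-prediction $\vg_\theta$, absorbing the factor of $2$ into the weight. The paper handles the small-$s$ step slightly more cleanly by noting that when $\vg_\theta$ satisfies the boundary condition one may take $s=0$ exactly (so $\vf_{0,t}^\theta=\vg_\theta$ with no residual), whereas you keep $s>0$ and argue the $\sigma_s/\sigma_t$ residual is controlled — both are fine, and yours is no less rigorous.
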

This single-particle estimate ignores the repulsion force imposed by $k(\cdot,\cdot)$. Energy kernel also only matches the \textit{first} moment, ignoring all higher moments. These decisions can be significant contributors to training instability and performance degradation of CMs.

Improved CMs~\citep{song2023improved} propose pseudo-huber loss as $d(\cdot, \cdot)$ which we justify in the Lemma below. 
\begin{restatable}{lemma}{cmtwo} 
\vspace{-4mm}
Negative pseudo-huber loss $k_c(x,y) = c - \sqrt{\norm{x-y}^2 + c^2} $ for $c > 0$ is a conditionally positive definite kernel that matches \textit{all} moments of $x$ and $y$ where weights on higher moments depend on $c$.
\vspace{-1mm}
\end{restatable}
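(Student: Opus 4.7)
The plan is to rely on the classical Bernstein-function representation of the square root, namely
\[
\sqrt{a} \;=\; \frac{1}{2\sqrt{\pi}} \int_0^\infty \frac{1 - e^{-ta}}{t^{3/2}}\, dt \qquad (a>0),
\]
which one verifies by differentiating both sides in $a$ and recognizing $\Gamma(1/2)$. Applying this with $a=\|x-y\|^2+c^2$ and pulling out the factor $e^{-tc^2}$ yields
\[
k_c(x,y) \;=\; c \;-\; \frac{1}{2\sqrt{\pi}} \int_0^\infty \frac{1 - e^{-tc^2}\, e^{-t\|x-y\|^2}}{t^{3/2}}\, dt.
\]
This rewrites $k_c$ as a constant minus a positive-weighted integral of Gaussian RBF kernels (against the constant $1$). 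Both halves of the proof then flow from this representation.

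For conditional positive definiteness, I would take any $x_1,\dots,x_n$ and coefficients $\alpha_1,\dots,\alpha_n$ with $\sum_i\alpha_i = 0$, and compute the quadratic form $\sum_{i,j}\alpha_i\alpha_j k_c(x_i,x_j)$. The additive constant $c$ and the $1$ inside the integrand both contribute terms proportional to $(\sum_i\alpha_i)^2 = 0$, so only
\[
\frac{1}{2\sqrt{\pi}} \int_0^\infty \frac{e^{-tc^2}}{t^{3/2}} \sum_{i,j}\alpha_i\alpha_j e^{-t\|x_i-x_j\|^2}\, dt
\]
survives. The inner quadratic form is nonnegative for every $t>0$ because the Gaussian RBF kernel is positive definite, and the outer weight $e^{-tc^2}/t^{3/2}$ is nonnegative, giving CPD.

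For matching all moments, I would expand the Gaussian factor as
\[
e^{-t\|x-y\|^2} \;=\; \sum_{k=0}^{\infty} \frac{(-t)^k}{k!}\,\|x-y\|^{2k},
\]
and note that $\|x-y\|^{2k}$ is a polynomial in the components of $x$ and $y$ of total degree $2k$. Plugging into the integral representation and interchanging sum and integral, $\mathrm{MMD}^2(p,q)$ with respect to $k_c$ becomes a nonnegative linear combination (with positive, $c$-dependent coefficients obtained by integrating $t^{k-3/2}e^{-tc^2}$ against $\Gamma$-type factors) of the moment differences $\mathbb{E}_p[\|\cdot\|^{2k}]-\mathbb{E}_q[\|\cdot\|^{2k}]$-style polynomials. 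Since the Gaussian RBF kernel is characteristic for distributions with all moments finite, equality across all such integrals forces agreement of every moment; the $c$-dependence of the integrand then precisely controls how strongly each moment is weighted.

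The one subtle point I expect to spend care on is the interchange of sum/integral and the $t^{-3/2}$ singularity at $t=0$: this is tamed by the factor $1-e^{-tc^2}e^{-t\|x-y\|^2}=O(t)$ near the origin, and by decay of $e^{-tc^2}$ at infinity, but justifying Fubini rigorously needs a dominated-convergence argument which I would either work out by hand or defer to the standard Bernstein-function / multiquadric-kernel literature.
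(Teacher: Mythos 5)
Your argument is correct and takes a genuinely different route from the paper. For conditional positive definiteness, the paper argues pointwise: it first claims $-\|x-y\|$ is CPD via the triangle-inequality bound $-\|x-y\| \ge -\|x\|-\|y\|$, and then uses $c - \sqrt{\|z\|^2+c^2} \ge -\|z\|$ to transfer CPD to $k_c$. Both steps rely on the implication that a pointwise ordering $g(x,y) \ge f(x,y)$ gives $\sum_{i,j}\alpha_i\alpha_j\, g(x_i,x_j) \ge \sum_{i,j}\alpha_i\alpha_j\, f(x_i,x_j)$, which is false: when $\sum_i\alpha_i=0$ the products $\alpha_i\alpha_j$ necessarily take both signs, so a pointwise ordering of kernels does not order the quadratic forms. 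Your integral-representation argument, rewriting the square root as a Bernstein function so that $k_c$ becomes a constant minus a constant minus a positive mixture of Gaussian RBF kernels, closes the quadratic form correctly (the additive constants drop out because $\sum_i\alpha_i=0$, and each Gaussian contributes a nonnegative term) and is the standard Schoenberg-type proof; it is the more rigorous of the two. For the moment-matching half, the paper simply Taylor-expands $\sqrt{\|z\|^2+c^2}-c$ at $z=0$, obtaining $\frac{1}{2c}\|z\|^2 - \frac{1}{8c^3}\|z\|^4 + \frac{1}{16c^5}\|z\|^6 - \cdots$, which reads off the $c$-dependent weights immediately. Your route, expanding $e^{-t\|x-y\|^2}$ inside the integral and integrating each power of $t$ against $e^{-tc^2}t^{-3/2}$, produces the same coefficients via $\Gamma$-integrals but is more circuitous, and the Fubini interchange you flag is a genuine obligation (the resulting series converges only for $\|x-y\|<c$, the same radius of validity as the paper's Taylor expansion, so this is an informal statement in both treatments). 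In short, your CPD argument improves on the paper's; the paper's moment argument is the more economical.
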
 
From a moment-matching perspective, the improved performance is explained by the loss matching all moments of the distributions. In addition to pseudo-huber loss, many other kernels (Laplace, RBF, \etc) are all valid choices in the design space.

We also extend \model{} loss to the differential limit by taking $r(s,t)\rightarrow t$, the result of which subsumes the continuous-time CM~\citep{lu2024simplifying} as a single-particle estimate (\appref{app:extend}). We leave experiments for this to future work.

\mypara{Diffusion GAN and Adversarial Consistency Distillation.} Diffusion GAN~\citep{xiao2021tackling} parameterizes the generative distribution as $p_{s|t}^\theta(\rvx_s | \rvx_t) = \iint q_{s|t}(\rvx_s|\rvx, \rvx_t) \delta(\rvx - \mG_\theta(\rvx_t, \rvz, t)) p(\rvz)\ud\rvz\ud \rvx $ for $s$ as a fixed decrement from $t$ and $p(\rvz)$ a noise distribution. 
It defines the interpolant $q_{s|t}(\rvx_s|\rvx, \rvx_t)$ as the DDPM posterior distribution, which is self-consistent (see \appref{app:gan-connection}) and introduces randomness to the sampling process to match $q_t(\rvx | \rvx_t)$ instead of the marginal. Both Diffusion GAN and Adversarial Consistency Distillation~\citep{sauer2025adversarial} use GAN objective, which shares similarity to MMD in that MMD is defined as an integral probability metric where the optimal discriminator is chosen in RKHS. This eliminates the need for explicit adversarial optimization of a neural-network discriminator.

\mypara{Generative Moment Matching Network.} GMMN~\citep{li2015generative} directly applies MMD to train a generator $\mG_\theta(\rvz)$ where $\rvz\sim \gN(0,\emI)$ to match the data distribution. It is a special case of \model{} in that when $t=1$ and $r(s,t) \equiv s = 0$ our loss reduces to na\"ive GMMN objective.

\section{Related Works}

\mypara{Diffusion, Flow Matching, and stochastic interpolants.} Diffusion models~\citep{sohl2015deep,song2020score,ho2020denoising,kingma2021variational} and Flow Matching~\citep{lipman2022flow,liu2022flow} are widely used generative frameworks that learn a score or velocity field of a noising process from data into a simple prior. They have been scaled successfully for text-to-image~\citep{rombach2022high,saharia2022photorealistic,podell2023sdxl,chen2023pixart,esser2024scaling} and text-to-video~\citep{ho2022imagen,blattmann2023stable,sora} tasks. Stochastic interpolants~\citep{albergo2023stochastic,albergo2022building} extend these ideas by explicitly defining a stochastic path between data and prior, then matching its velocity to facilitate distribution transfer. While \model{} builds on top of the interpolant construction, it directly learns one-step mappings between any intermediate marginal distributions.  


\if\arxiv1 
\mypara{Diffusion distillation.} To resolve diffusion models' sampling inefficiency, recent methods~\citep{salimans2022progressive,meng2023distillation,yin2024one,zhou2024score,luo2024diff,heek2024multistep} focus on distilling one-step or few-step models from pre-trained diffusion models. Some approaches~\citep{yin2024one,zhou2024score} propose jointly optimizing two networks but the training requires careful tuning in practice and can lead to mode collapse~\citep{yin2024one}. Another recent work~\citep{salimans2024multistep} explicitly matches the \textit{first} moment of the data distribution available from pre-trained diffusion models. In contrast, our method implicitly matches \emph{all} moments using MMD and can be trained from scratch with a single model.

\mypara{Few-step generative models from scratch.} Early one-step generative models primarily relied on GANs~\citep{goodfellow2020generative,karras2020analyzing,brock2018large} and MMD~\citep{li2015generative,li2017mmd} (or their combination) but scaling adversarial training remains challenging. Recent independent classes of few-step models, \eg Consistency Models (CMs)~\citep{song2023consistency,song2023improved,lu2024simplifying}, Consistency Trajectory Models (CTMs)~\citep{kim2023consistency,heek2024multistep} and Shortcut Models (SMs)~\citep{frans2024one} still face training instability and require specialized components~\citep{lu2024simplifying} (e.g., JVP for flash attention) or other special practices (e.g., high weight decay for SMs, combined LPIPS~\citep{zhang2018unreasonable} and GAN losses for CTMs, and special training schedules~\citep{geng2024consistency}) to remain stable. In contrast, our method trains stably with a single loss and achieves strong performance without special training practices. 

\else 

\mypara{Few-step generative models from scratch.} Early one-step generative models primarily relied on GANs~\citep{goodfellow2020generative,karras2020analyzing,brock2018large} and MMD~\citep{li2015generative,li2017mmd} (or their combination) but scaling adversarial training remains challenging. Besides diffusion distillation techniques (discussed in \appref{app:related}), recent independent classes of few-step models, \eg Consistency Models (CMs)~\citep{song2023consistency,song2023improved,lu2024simplifying}, Consistency Trajectory Models (CTMs)~\citep{kim2023consistency,heek2024multistep} and Shortcut Models (SMs)~\citep{frans2024one} still face training instability and require specialized components~\citep{lu2024simplifying} (e.g., JVP for flash attention) or other special practices (e.g., high weight decay for SMs, a combination of LPIPS~\citep{zhang2018unreasonable} and GAN loss for CTMs, and the need for special training schedules~\citep{geng2024consistency}) to remain stable. In contrast, our method trains stably with a single loss and achieves strong performance without special designs.

\fi

 \begin{table*}[t!]  
        \centering  
    \begin{minipage}[t]{0.445\textwidth} 
        \centering  
            \begin{adjustbox}{width=\linewidth}
        \begin{tabular}{clcc}
        \toprule
              Family & Method &  FID $(\downarrow)$ & Steps $(\downarrow)$  \\   
             \midrule
            \multirow{8}{*}{\thead{Diffusion \\\& Flow}}  & DDPM~\citep{ho2020denoising} &  3.17 &  1000  \\ 
            & DDPM++~\citep{song2020score} & 3.16 &   1000 \\ 
            & NCSN++~\citep{song2020score} & 2.38 &   1000 \\ 
            & DPM-Solver~\citep{lu2022dpm} & 4.70  &   10 \\ 
            & iDDPM~\citep{nichol2021improved} & 2.90 &  4000 \\ 
            & EDM~\citep{karras2022elucidating} & 2.05 &  35 \\ 
            & Flow Matching~\citep{lipman2022flow} & 6.35 &  142 \\ 
            & Rectified Flow~\citep{liu2022flow} & 2.58 &  127 \\ 
            \midrule
           \multirow{13}{*}{\thead{Few-Step \\ via Distillation}}   & PD~\citep{salimans2022progressive}  & 4.51  &  2 \\
            & 2-Rectified Flow~\citep{salimans2022progressive} & 4.85  &  1 \\
           & DFNO~\citep{zheng2023fast} & 3.78  &  1 \\
            & KD~\citep{luhman2021knowledge} & 9.36  & 1  \\
            & TRACT~\citep{berthelot2023tract} & 3.32 & 2  \\
            & Diff-Instruct~\citep{luo2024diff} & 5.57 & 1  \\
           &  PID (LPIPS)~\citep{tee2024physics} &  3.92 &   1 \\
           &  DMD~\citep{yin2024one} &   3.77 &   1 \\ 
           & CD (LPIPS)~\citep{song2023consistency} &  2.93 & 2  \\
           & CTM (w/ GAN)~\citep{kim2023consistency} & \textbf{1.87 } &  2 \\
           &  SiD~\citep{zhou2024score} &  1.92 &   1 \\
           &  SiM~\citep{luo2024one} &  2.06 &   1 \\
           &  sCD~\citep{lu2024simplifying} &  2.52 &   2 \\
           \midrule
           \multirow{8}{*}{\thead{Few-Step \\ from Scratch}} & iCT~\citep{song2023improved} & 2.83  &  1 \\
                    &                                                                        & 2.46  &  2 \\
           &  ECT~\citep{geng2024consistency} & 3.60  &  1 \\
           &                                    & 2.11  & 2  \\
           & sCT~\citep{lu2024simplifying} & 2.97  & 1  \\
           &                                & 2.06 & 2 \\
           & \textbf{IMM (ours)}   & 3.20  & 1  \\
           &             &  \textbf{1.98} & 2  \\ 
           \bottomrule
        \end{tabular}    
            \end{adjustbox} 
        \vspace{-3mm}
        \captionof{table}{CIFAR-10 results trained without label conditions. }  \label{tab:gen-cifar}
      \end{minipage} 
      \begin{minipage}[t]{0.475\textwidth} 
        \centering  
            \begin{adjustbox}{width=\linewidth}
        \begin{tabular}{clccc}
        \toprule
             Family & Method &  FID$(\downarrow)$ & Steps $(\downarrow)$ & \#Params \\  
             \midrule
             \multirow{3}{*}{GAN} &  BigGAN~\citep{brock2018large} &   6.95 & 1 & 112M \\ 
            & GigaGAN~\citep{kang2023scaling}   & 3.45  & 1 &  569M \\ 
             & StyleGAN-XL~\citep{karras2020analyzing}  & 2.30  & 1 & 166M \\  
             \midrule
             \multirow{5}{*}{\thead{Masked \\\& AR}} &  VQGAN~\citep{esser2021taming}  & 26.52  & 1024 & 227M\\ 
            & MaskGIT~\citep{chang2022maskgit}   & 6.18  & 8 &  227M \\ 
             & MAR~\citep{li2024autoregressive}  & 1.98  & 100 & 400M \\  
             & VAR-$d20$~\citep{tian2024visual}  & 2.57  & 10 & 600M \\ 
             & VAR-$d30$~\citep{tian2024visual}  & \textbf{1.92}  & 10 & 2B \\ 
             \midrule
           \multirow{10}{*}{\thead{Diffusion \\\& Flow}}  &  ADM~\citep{dhariwal2021diffusion} & 10.94  & 250 & 554M \\
            &  CDM~\citep{ho2022cascaded} & 4.88  & 8100 &  -\\ 
            &  SimDiff~\citep{hoogeboom2023simple} & 2.77  & 512 &  2B \\ 
             & LDM-4-G~\citep{rombach2022high}  & 3.60  & 250  & 400M \\ 
            &  U-DiT-L~\citep{tian2024u} & 3.37  & 250  & 916M \\  
            &  U-ViT-H~\citep{bao2023all} & 2.29  & 50  & 501M \\  
            &  DiT-XL/2 ($w=1.0$)~\citep{peebles2023scalable} & 9.62  &  250 & 675M \\ 
            &  DiT-XL/2 ($w=1.25$)~\citep{peebles2023scalable} & 3.22  &  250 & 675M \\ 
            &  DiT-XL/2 ($w=1.5$)~\citep{peebles2023scalable} & 2.27  &  250 & 675M \\  
            &  SiT-XL/2 ($w=1.0$)~\citep{ma2024sit} & 9.35  &  250 & 675M \\ 
            &  SiT-XL/2 ($w=1.5$)~\citep{ma2024sit} & 2.15  &  250 & 675M \\  
            \midrule
           \multirow{13}{*}{\thead{Few-Step \\from Scratch}}  &  iCT~\citep{song2023consistency} &  34.24  & 1 & 675M \\
                                     &                                   &   20.3   & 2 & 675M \\
                                      &  Shortcut~\citep{frans2024one} &  10.60  & 1 & 675M \\
                                      &                                  &  7.80  & 4 & 675M \\
                                      &                                  &  3.80  & 128 & 675M \\
                                      &  \textbf{IMM (ours)} (XL/2, $w=1.25$)   &  7.77  & 1 & 675M \\
                                      &                                  & 5.33   & 2 & 675M \\
                                      &                                  &  3.66   & 4 & 675M \\
                                      &                                  &  2.77  & 8 & 675M \\
                                      &  \textbf{IMM (ours)} (XL/2, $w=1.5$)       &  8.05  & 1 & 675M \\
                                      &                                  &   3.99  & 2 & 675M \\
                                      &                                  &   2.51   & 4 & 675M \\
                                      &                                  &  \textbf{1.99}  & 8 & 675M \\
           \bottomrule
        \end{tabular}    
            \end{adjustbox} 
        \vspace{-3mm}
        \captionof{table}{Class-conditional ImageNet-$256\ttimes 256$ results. }  \label{tab:gen-img} 
      \end{minipage}  
        \vspace{-4mm}
\end{table*}

\section{Experiments}

We evaluate \model{}'s empirical performance (\secref{sec:gen-exp}), training stability (\secref{sec:stable-exp}), sampling choices (\secref{sec:inf-exp}), scaling behavior (\secref{sec:scaling}) and ablate our practical decisions (\secref{sec:ablation}).

\subsection{Image Generation}\label{sec:gen-exp}

We present FID~\citep{heusel2017gans} results for unconditional CIFAR-10 and class-conditional ImageNet-$256\ttimes 256$ in~\twotabref{tab:gen-cifar}{tab:gen-img}. For CIFAR-10, we separate baselines into diffusion and flow models, distillation models, and few-step models from scratch. \model{} belongs to the last category in which it achieves state-of-the-art performance of 1.98 using pushforward sampler. For ImageNet-$256\ttimes 256$, we use the popular DiT~\citep{peebles2023scalable} architecture because of its scalability, and compare it with GANs, masked and autoregressive models, diffusion and flow models, and few-step models trained from scratch.

\begin{figure}[t]
    \centering
    \begin{subfigure}[t]{0.58\linewidth}
    \vspace{0pt}
        \centering
        \includegraphics[width=\textwidth]{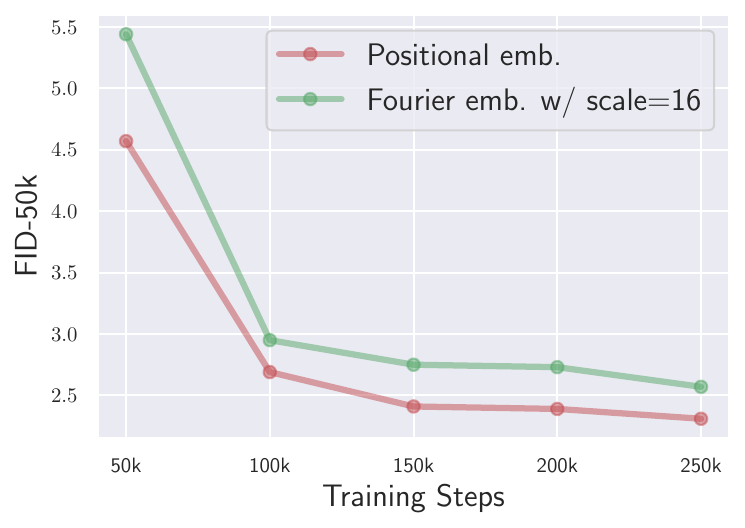}
    \end{subfigure}\hfill
    \begin{subfigure}[t]{0.40\linewidth}
    \vspace{5pt}
        \centering
        \includegraphics[width=\textwidth]{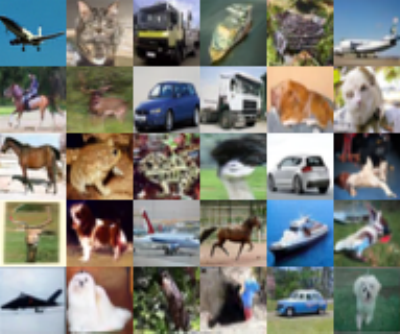}
    \end{subfigure}
    \vspace{-2mm}
    \caption{FID convergence for different embeddings (left). CIFAR-10 samples from Fourier embedding ($\text{scale}=16$) (right).}
    \label{fig:fourier} 
\end{figure}

We observe decreasing FID with more steps and IMM achieves 1.99 FID with 8 steps (with $w=1.5$), surpassing DiT and SiT~\citep{ma2024sit} using the same architecture except for trivially injecting time $s$ (see \appref{app:exp}).  Notably, we also achieve better 8-step FID than the 10-step VAR~\citep{tian2024visual} of comparable size. At 16 steps, \model{} also achieves 1.90 FID outperforming VAR's 2B variant (see \appref{app:beyond_8steps}). However, different from VAR, \model{} grants flexibility of variable number of inference steps and the large improvement in FID from 1 to 8 steps additionally demonstrates \model{}'s \textit{efficient} inferece-time scaling capability.  Lastly, we similarly surpass  Shortcut models'~\citep{frans2024one} best performance with only 8 steps. We defer inference details to \secref{sec:inf-exp} and~\appref{app:inf-exp}.

\begin{figure}[t]
  \centering
  \begin{minipage}[t]{0.49\linewidth}
  \vspace{0pt}
    \centering
    \includegraphics[width=\linewidth]{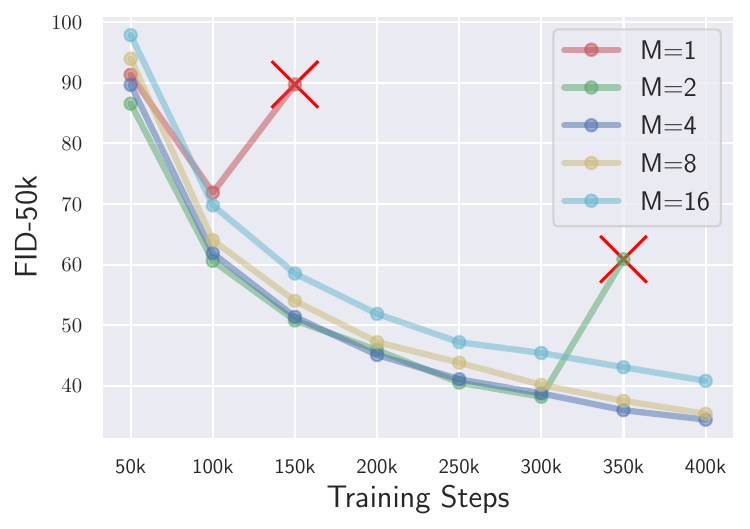}
  \begin{minipage}[t]{0.9\linewidth}
  \vspace{-6mm}
  \centering 
    \captionof{figure}{More particles indicate more stable training on ImageNet-$256\ttimes256$.}
    \label{fig:particle_ablate}
    \end{minipage}
    
  \end{minipage}
  \hfill
  \begin{minipage}[t]{0.49\linewidth}
  \vspace{0pt}
    \centering
    \includegraphics[width=\linewidth]{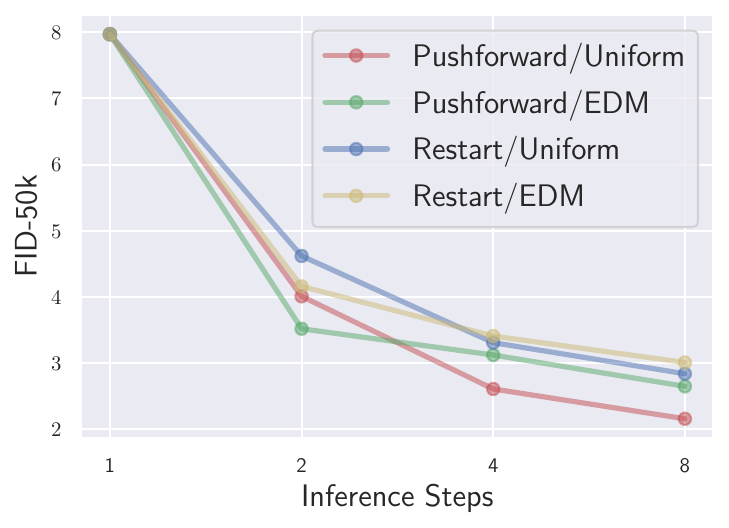}
  \begin{minipage}[t]{0.9\linewidth}
  \vspace{-6.5mm}
  \centering 
    \captionof{figure}{ImageNet-$256\ttimes 256$ FID with different sampler types.}
    \label{fig:sampler}
    \end{minipage}
     
  \end{minipage}
  \if\arxiv0
  \vspace{-3mm}
  \else 
  \fi
\end{figure}

\begin{figure*}[t]
    \centering
    \begin{subfigure}[t]{0.27\textwidth}
    \vspace{0pt}
        \centering
        \includegraphics[width=\textwidth]{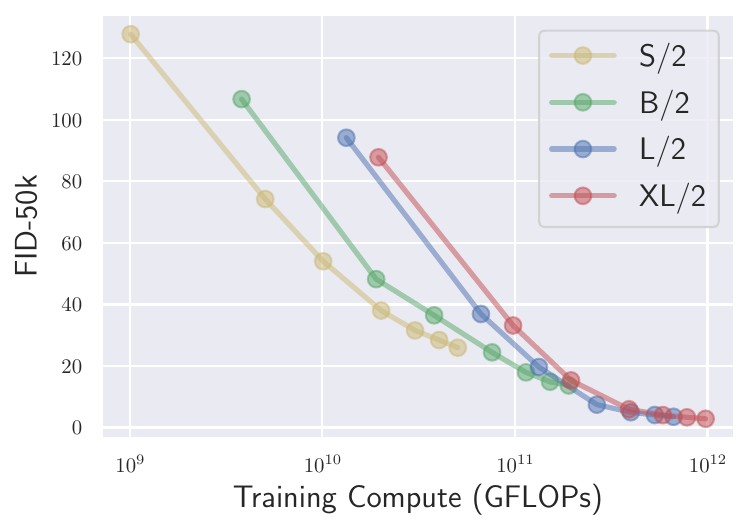}
    \end{subfigure} 
    \begin{subfigure}[t]{0.26\textwidth}
    \vspace{0pt}
        \centering
        \includegraphics[width=\textwidth]{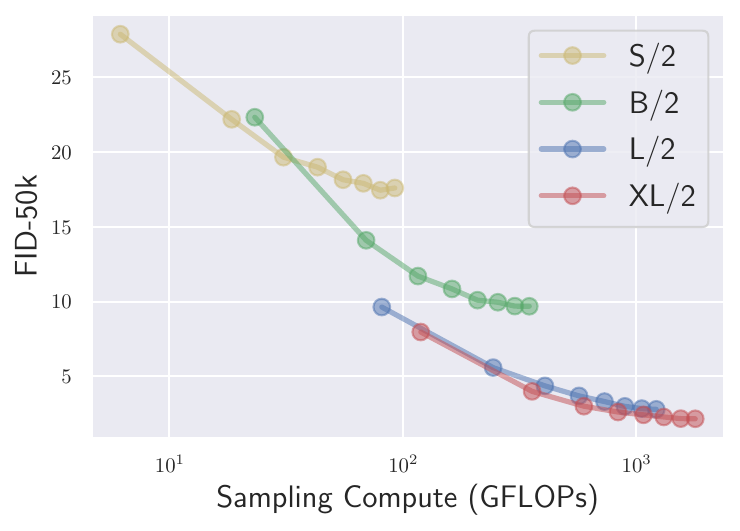}
    \end{subfigure}\hspace{1mm}
    \begin{subfigure}[t]{0.3255\textwidth}
    \vspace{0pt}
        \centering
        \includegraphics[width=\textwidth]{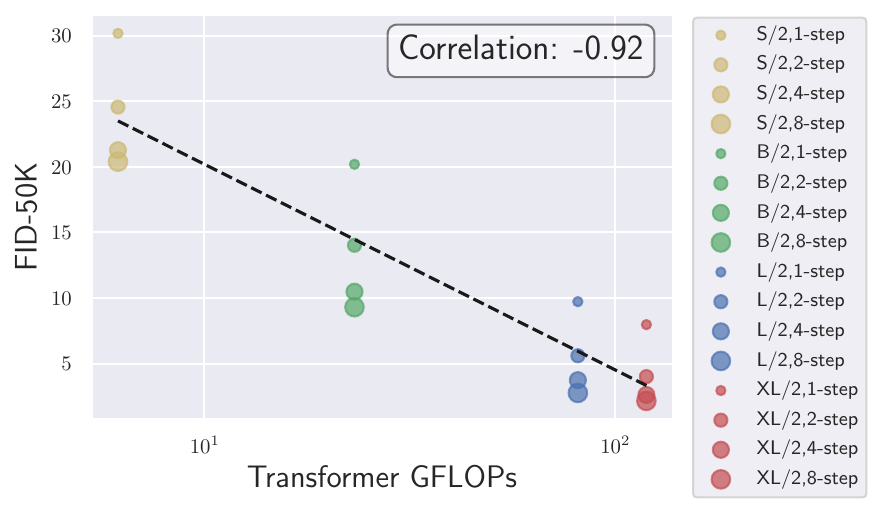}
    \end{subfigure}
    \vspace{-3mm}
    \caption{\model{} scales with both training and inference compute, and exhibits strong correlation between model size and performance. }
    \vspace{-1mm}
    \label{fig:scale}
\end{figure*}

\begin{figure*}
    \centering
    \includegraphics[width=\linewidth]{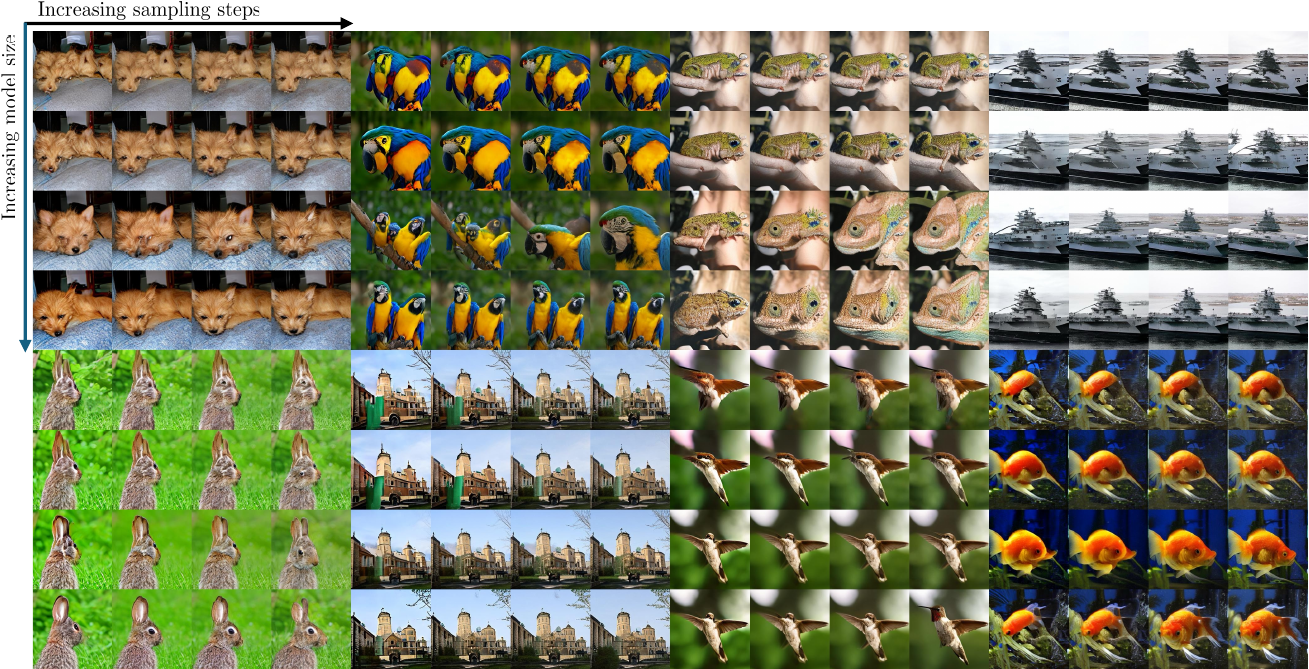}
     \vspace{-8mm}
    \caption{Sample visual quality increases with increase in both model size and sampling compute.}
     \vspace{-3mm}
    \label{fig:scale-sample}
\end{figure*}

\begin{figure}[t]
    \centering
    \begin{subfigure}[b]{0.5\linewidth}
        \centering
        \includegraphics[width=\textwidth]{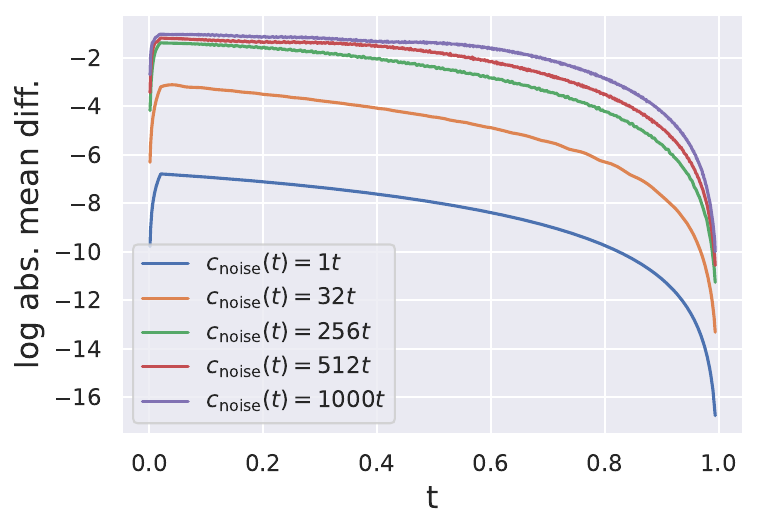}
    \end{subfigure}%
    \begin{subfigure}[b]{0.5\linewidth}
        \centering
        \includegraphics[width=\textwidth]{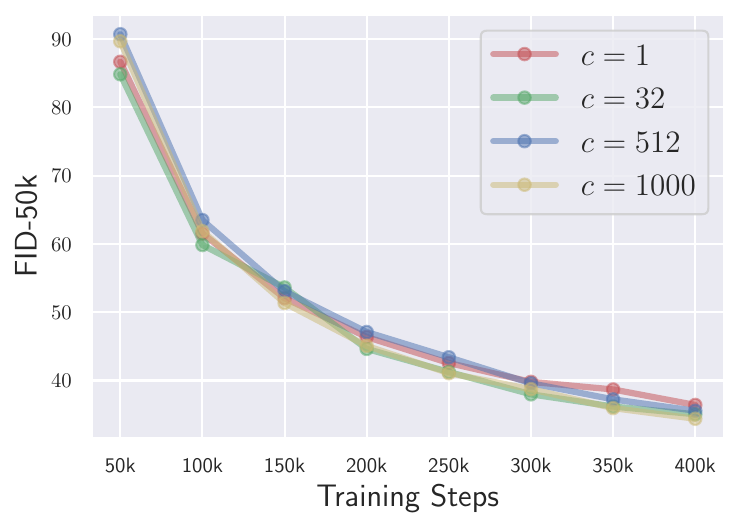} 
    \end{subfigure}
    \vspace{-8.5mm}
    \caption{Log distance in embedding space for $c_\text{noise}(t) = ct$ (left). Similar ImageNet-$256\ttimes 256$ convergence across different $c$ (right).}
    \label{fig:noise_ablate} 
\end{figure}

\subsection{IMM Training is Stable}\label{sec:stable-exp}

We show that IMM is stable and achieves reasonable performance across a range of parameterization choices.

\mypara{Positional vs. Fourier embedding.} A known issue for CMs \citep{song2023consistency} is its training instability when using Fourier embedding with scale $16$, which forces reliance on positional embeddings for stability. We find that \model{} does not face this problem (see \figref{fig:fourier}). For Fourier embedding we use the standard NCSN++ \citep{song2020score} architecture and set embedding scale to $16$; for positional embeddings, we adopt DDPM++ \citep{song2020score}. Both embedding types converge reliably, and we include samples from the Fourier embedding model in \figref{fig:fourier}.

    
     

\mypara{Particle number.} Particle number $M$ for estimating MMD is an important parameter for empirical success~\citep{gretton2012kernel,li2015generative}, where the estimate is more accurate with larger $M$. In our case, na\"ively increasing $M$ can slow down convergence because we have a fixed batch size $B$ in which the samples are grouped into $B/M$ groups of $M$ where each group shares the same $t$. The larger $M$ means that fewer $t$'s are sampled. On the other hand, using extremely small numbers of particles, \eg $M=2$, leads to training instability and performance degradation, especially on a large scale with DiT architectures. We find that there exists a sweet spot where a few particles effectively help with training stability while further increasing $M$ slows down convergence (see \figref{fig:particle_ablate}). We see that in ImageNet-$256\ttimes 256$, training collapses when $M=1$ (which is CM) and $M=2$, and achieves lowest FID under the same computation budget with $M=4$. We hypothesize $M< 4$ does not allow sufficient mixing between particles and larger $M$ means fewer $t$'s are sampled for each step, thus slowing convergence. A general rule of thumb is to use a \textit{large enough} $M$ for stability, but not too large for slowed convergence.

\mypara{Noise embedding $c_{\text{noise}}(\cdot)$.} We plot in \figref{fig:noise_ablate} the log absolute mean difference of $t$ and $r(s,t)$ in the positional embedding space. Increasing $c$ increases distinguishability of nearby distributions. We also observe similar convergence on ImageNet-$256\ttimes 256$ across different $c$, demonstrating the insensitivity of our framework \wrt noise function.

\subsection{Sampling}\label{sec:inf-exp}

We investigate different sampling settings for best performance. One-step sampling is performed by simple pushforward from $T$ to $\epsilon$ (concrete values in \appref{app:inf-exp}). On CIFAR-10 we use 2 steps and set intermediate time $t_1$ such that $\eta_{t_1} = 1.4$, a choice we find to work well empirically. On ImageNet-$256\ttimes 256$ we go beyond 2 steps and, for simplicity, investigate (1) uniform decrement in $t$ and (2) EDM~\citep{karras2024analyzing} schedule (detailed in \appref{app:inf-exp}). We plot FID of all sampler settings in \figref{fig:sampler} with guidance weight $w=1.5$. We find pushforward samplers with uniform schedule to work the best on ImageNet-$256\ttimes 256$ and use this as our default setting for multi-step generation. Additionally, we concede that pushforward combined with restart samplers can achieve superior results. We leave such experiments to future works.

\subsection{Scaling Behavior}\label{sec:scaling}

Similar to diffusion models, \model{} scale with training and inference compute as well as model size on ImageNet-$256\ttimes 256$. We plot in \figref{fig:scale} FID vs. training and inference compute in GFLOPs and we find strong correlation between compute used and performance. We further visualize samples in \figref{fig:scale-sample} with increasing model size, \ie DiT-S, DiT-B, DiT-L, DiT-XL, and increasing inference steps, \ie 1, 2, 4, 8 steps. The sample quality increases along both axes as larger transformers with more inference steps capture more complex distributions. This also explains that more compute can sometimes yield different visual content from the same initial noise as shown in the visual results.

\subsection{Ablation Studies}\label{sec:ablation}

All ablation studies are done with DDPM++ architecture for CIFAR-10 and DiT-B for ImageNet-$256\ttimes 256$. FID comparisons use 2-step samplers by default.  
\if\arxiv1
\else
We defer readers to \appref{app:ablation} for more studies on the flow schedules and network parametrizations.
\fi

\if\arxiv1

\mypara{Flow schedules and parameterization.} We investigate all combinations of network parameterization and flow schedules: Simple-EDM + cosine (\texttt{sEDM/cos}),  Simple-EDM + OT-FM (\texttt{sEDM/FM}), Euler-FM + OT-FM (\texttt{eFM}), Identity + cosine (\texttt{id/cos}), Identity + OT-FM (\texttt{id/FM}). Identity parameterization consistently fall behind other types of parameterization, which all show similar performance across datasets (see \tabref{tab:g_ablation}). We see that on smaller scale (CIFAR-10), \texttt{sEDM/FM} works the best but on larger scale (ImageNet-$256\ttimes 256$), \texttt{eFM} works the best, indicating that OT-FM schedule and Euler paramaterization may be more scalable than other choices. 
 
 \begin{table}[t]
     \centering
     \resizebox{0.92\linewidth}{!}{\begin{tabular}{lccccc}
     \toprule
          & id/cos  &id/FM  &  sEDM/cos &   sEDM/FM & eFM \\
          \midrule
        CIFAR-10  &  3.77 & 3.45&  2.39& \textbf{2.10}&2.53 \\
          \midrule
         ImageNet-$256\ttimes 256$  &46.44 & 47.32 & 27.33 &28.67 & \textbf{27.01} \\
         \bottomrule
     \end{tabular}} 
     \vspace{-1mm}
     \caption{FID results with different flow schedules and network parameterization.}
     \label{tab:g_ablation}
     \vspace{-1mm}
 \end{table}


     
\else
\fi
\mypara{Mapping function $r(s,t)$.} Our choices for ablation are (1) constant decrement in $\eta_t$, (2) constant decrement in $t$, (3) constant decrement in $\lambda_t = \log (\alpha_t^2/\sigma_t^2)$, (4) constant increment in $1 / \eta_t$ (see \appref{app:mapping}). For fair comparison, we choose the decrement gap so that the minimum $t-r(s,t)$ is $\approx 10^{-3}$ and use the same network parameterization. FID progression in \figref{fig:r_ablation} show that (1) consistently outperforms other choices. We additionally ablate the mapping gap using $M=4$ in (1). The constant decrement is in the form of $(\eta_\text{max} - \eta_\text{min}) / 2^k$ for an appropriately chosen $k$. We show in \figref{fig:k_ablation} that the performance is relatively stable across $k\in\{11,12,13\}$ but experiences instability for $k=14$. This suggests that, for a given particle number, there exists a largest $k$ for stable optimization.


\mypara{Weighting function.} In \tabref{tab:w_ablate} we first ablate the weighting factors in three groups: (1) the VDM ELBO factors $\frac{1}{2}  \sigma(b - \lambda_t)  (-\frac{\ud}{\ud t} \lambda_t)$, (2) weighting $\alpha_t$ (\ie when $a=1$), and (3) weighting $1/(\alpha_t^2 + \sigma_t^2)$. We find it necessary to use $\alpha_t$ jointly with ELBO weighting because it converts $v$-pred network to a $\epsilon$-pred parameterization (see \appref{app:weight}), consistent with diffusion ELBO-objective. Factor $1/(\alpha_t^2+\sigma_t^2)$ upweighting middle time-steps further boosts performance, a helpful practice also known for FM training~\citep{esser2024scaling}. We leave additional study of the exponent $a$ to \appref{app:ablate_a} and find that $a=2$ emphasizes optimizing the loss when $t$ is small while $a=1$ more equally distributes weights to larger $t$. As a result, $a=2$ achieves higher quality multi-step generation than $a=1$.

\begin{figure}[t] 
    \begin{minipage}[t]{0.52\linewidth}
    \vspace{0pt}
        \centering
        \includegraphics[width=\textwidth]{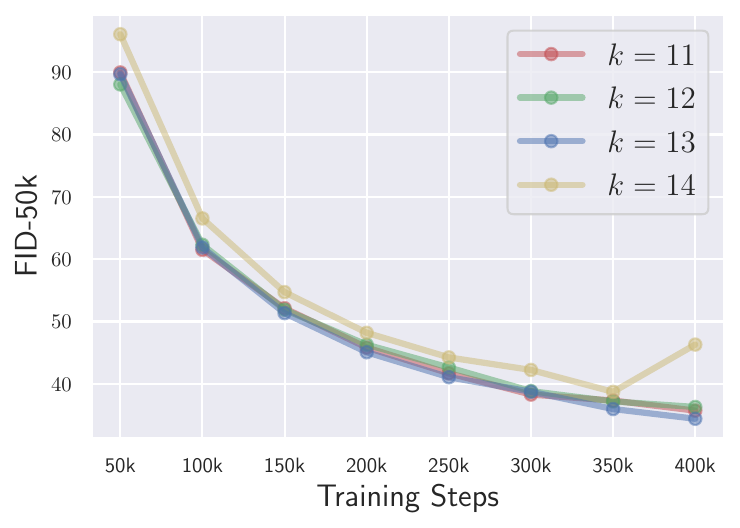}
        \vspace{-8mm}
        \captionof{figure}{ImageNet-$256\ttimes 256$ FID progression with different $t,r$ gap with $M=4$.}
        \label{fig:k_ablation}
    \end{minipage}\hfill 
    \begin{minipage}[t]{0.43\linewidth}
    \vspace{5pt}
        \centering  
            \begin{adjustbox}{width=\linewidth}
            \begin{tabular}{lcc}
            \toprule
                 &  & FID-50k \\
                 \midrule
               $w(s,t)=1$ &  & 40.19  \\
                 \midrule
                + ELBO weight &  & 96.43 \\
                 \midrule
                +  $\alpha_t$   &  & 33.44 \\
                 \midrule
                 + $1/(\alpha_t^2 + \sigma_t^2)$ &  & \textbf{27.43} \\
                 \bottomrule
            \end{tabular}
            \end{adjustbox}
        \captionof{table}{Ablation of weighting function $w(s,t)$ on ImageNet-$256\ttimes 256$.}  
        \label{tab:w_ablate} 
      \end{minipage}
    \vspace{-1mm}
\end{figure}

\begin{figure}[t]
    \centering
    \begin{subfigure}[t]{0.49\linewidth}
        \centering
        \includegraphics[width=\textwidth]{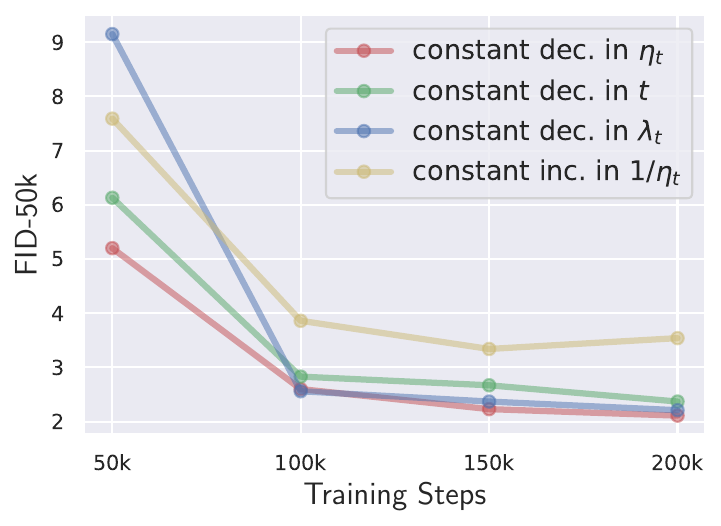}
    \end{subfigure}%
    \begin{subfigure}[t]{0.51\linewidth}
        \centering
        \includegraphics[width=\textwidth]{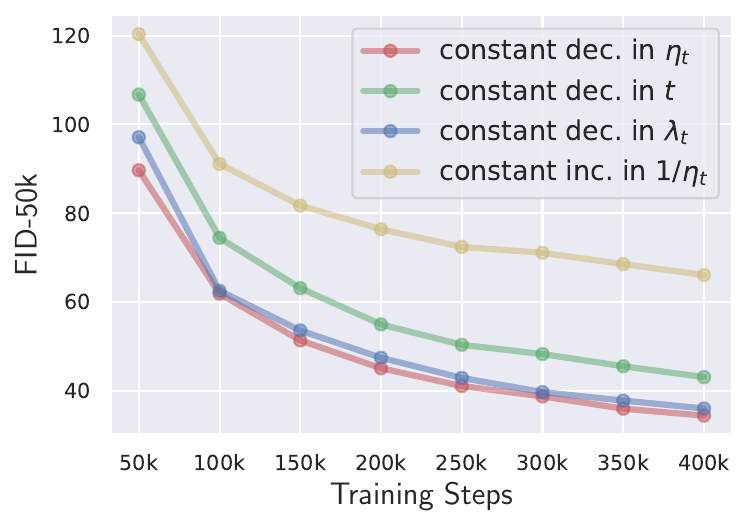}
    \end{subfigure}
        \vspace{-8mm}
    \caption{FID progression on different types of mapping function $r(s,t)$ for CIFAR-10 (left) and ImageNet-$256\ttimes 256$ (right).}
    \label{fig:r_ablation}
\end{figure}

\section{Conclusion}

We present Inductive Moment Matching, a framework that learns a few-step generative model from scratch. It trains by first leveraging self-consistent interpolants to interpolate between data and prior and then matching all moments of its own distribution interpolated to be closer to that of data. Our method guarantees convergence in distribution and generalizes many prior works. Our method also achieves state-of-the-art performance across benchmarks while achieving orders of magnitude faster inference. We hope it provides a new perspective on training few-step models from scratch and inspire a new generation of generative models.

\section*{Impact Statement}

This paper advances research in diffusion models and generative AI, which enable new creative possibilities and democratize content creation but also raise important considerations. Potential benefits include expanding artistic expression, assisting content creators, and generating synthetic data for research. However, we acknowledge challenges around potential misuse for deepfakes, copyright concerns, and impacts on creative industries. While our work aims to advance technical capabilities, we encourage ongoing discussion about responsible development and deployment of these technologies.

\section*{Acknowledgement}
We thank Wanqiao Xu, Bokui Shen, Connor Lin, and Samrath Sinha for additional technical discussions and helpful suggestions.

 

{
    \small
    \bibliographystyle{icml2025}
    \bibliography{icml2025}

\begin{thebibliography}{70}
\providecommand{\natexlab}[1]{#1}
\providecommand{\url}[1]{\texttt{#1}}
\expandafter\ifx\csname urlstyle\endcsname\relax
  \providecommand{\doi}[1]{doi: #1}\else
  \providecommand{\doi}{doi: \begingroup \urlstyle{rm}\Url}\fi

\bibitem[Albergo \& Vanden-Eijnden(2022)Albergo and Vanden-Eijnden]{albergo2022building}
Albergo, M.~S. and Vanden-Eijnden, E.
\newblock Building normalizing flows with stochastic interpolants.
\newblock \emph{arXiv preprint arXiv:2209.15571}, 2022.

\bibitem[Albergo et~al.(2023)Albergo, Boffi, and Vanden-Eijnden]{albergo2023stochastic}
Albergo, M.~S., Boffi, N.~M., and Vanden-Eijnden, E.
\newblock Stochastic interpolants: A unifying framework for flows and diffusions.
\newblock \emph{arXiv preprint arXiv:2303.08797}, 2023.

\bibitem[Auffray \& Barbillon(2009)Auffray and Barbillon]{auffray2009conditionally}
Auffray, Y. and Barbillon, P.
\newblock \emph{Conditionally positive definite kernels: theoretical contribution, application to interpolation and approximation}.
\newblock PhD thesis, INRIA, 2009.

\bibitem[Bao et~al.(2023)Bao, Nie, Xue, Cao, Li, Su, and Zhu]{bao2023all}
Bao, F., Nie, S., Xue, K., Cao, Y., Li, C., Su, H., and Zhu, J.
\newblock All are worth words: A vit backbone for diffusion models.
\newblock In \emph{Proceedings of the IEEE/CVF conference on computer vision and pattern recognition}, pp.\  22669--22679, 2023.

\bibitem[Berthelot et~al.(2023)Berthelot, Autef, Lin, Yap, Zhai, Hu, Zheng, Talbott, and Gu]{berthelot2023tract}
Berthelot, D., Autef, A., Lin, J., Yap, D.~A., Zhai, S., Hu, S., Zheng, D., Talbott, W., and Gu, E.
\newblock Tract: Denoising diffusion models with transitive closure time-distillation.
\newblock \emph{arXiv preprint arXiv:2303.04248}, 2023.

\bibitem[Blattmann et~al.(2023)Blattmann, Dockhorn, Kulal, Mendelevitch, Kilian, Lorenz, Levi, English, Voleti, Letts, et~al.]{blattmann2023stable}
Blattmann, A., Dockhorn, T., Kulal, S., Mendelevitch, D., Kilian, M., Lorenz, D., Levi, Y., English, Z., Voleti, V., Letts, A., et~al.
\newblock Stable video diffusion: Scaling latent video diffusion models to large datasets.
\newblock \emph{arXiv preprint arXiv:2311.15127}, 2023.

\bibitem[Brock(2018)]{brock2018large}
Brock, A.
\newblock Large scale gan training for high fidelity natural image synthesis.
\newblock \emph{arXiv preprint arXiv:1809.11096}, 2018.

\bibitem[Chang et~al.(2022)Chang, Zhang, Jiang, Liu, and Freeman]{chang2022maskgit}
Chang, H., Zhang, H., Jiang, L., Liu, C., and Freeman, W.~T.
\newblock Maskgit: Masked generative image transformer.
\newblock In \emph{Proceedings of the IEEE/CVF Conference on Computer Vision and Pattern Recognition}, pp.\  11315--11325, 2022.

\bibitem[Chen et~al.(2023)Chen, Yu, Ge, Yao, Xie, Wu, Wang, Kwok, Luo, Lu, et~al.]{chen2023pixart}
Chen, J., Yu, J., Ge, C., Yao, L., Xie, E., Wu, Y., Wang, Z., Kwok, J., Luo, P., Lu, H., et~al.
\newblock Pixart-$\alpha$: Fast training of diffusion transformer for photorealistic text-to-image synthesis.
\newblock \emph{arXiv preprint arXiv:2310.00426}, 2023.

\bibitem[Chen et~al.(2020)Chen, Zhang, Zen, Weiss, Norouzi, and Chan]{chen2020wavegrad}
Chen, N., Zhang, Y., Zen, H., Weiss, R.~J., Norouzi, M., and Chan, W.
\newblock Wavegrad: Estimating gradients for waveform generation.
\newblock \emph{arXiv preprint arXiv:2009.00713}, 2020.

\bibitem[Dhariwal \& Nichol(2021)Dhariwal and Nichol]{dhariwal2021diffusion}
Dhariwal, P. and Nichol, A.
\newblock Diffusion models beat gans on image synthesis.
\newblock \emph{Advances in neural information processing systems}, 34:\penalty0 8780--8794, 2021.

\bibitem[Esser et~al.(2021)Esser, Rombach, and Ommer]{esser2021taming}
Esser, P., Rombach, R., and Ommer, B.
\newblock Taming transformers for high-resolution image synthesis.
\newblock In \emph{Proceedings of the IEEE/CVF conference on computer vision and pattern recognition}, pp.\  12873--12883, 2021.

\bibitem[Esser et~al.(2024)Esser, Kulal, Blattmann, Entezari, M{\"u}ller, Saini, Levi, Lorenz, Sauer, Boesel, et~al.]{esser2024scaling}
Esser, P., Kulal, S., Blattmann, A., Entezari, R., M{\"u}ller, J., Saini, H., Levi, Y., Lorenz, D., Sauer, A., Boesel, F., et~al.
\newblock Scaling rectified flow transformers for high-resolution image synthesis.
\newblock In \emph{Forty-first International Conference on Machine Learning}, 2024.

\bibitem[Frans et~al.(2024)Frans, Hafner, Levine, and Abbeel]{frans2024one}
Frans, K., Hafner, D., Levine, S., and Abbeel, P.
\newblock One step diffusion via shortcut models.
\newblock \emph{arXiv preprint arXiv:2410.12557}, 2024.

\bibitem[Geng et~al.(2024)Geng, Pokle, Luo, Lin, and Kolter]{geng2024consistency}
Geng, Z., Pokle, A., Luo, W., Lin, J., and Kolter, J.~Z.
\newblock Consistency models made easy.
\newblock \emph{arXiv preprint arXiv:2406.14548}, 2024.

\bibitem[Goodfellow et~al.(2020)Goodfellow, Pouget-Abadie, Mirza, Xu, Warde-Farley, Ozair, Courville, and Bengio]{goodfellow2020generative}
Goodfellow, I., Pouget-Abadie, J., Mirza, M., Xu, B., Warde-Farley, D., Ozair, S., Courville, A., and Bengio, Y.
\newblock Generative adversarial networks.
\newblock \emph{Communications of the ACM}, 63\penalty0 (11):\penalty0 139--144, 2020.

\bibitem[Gretton et~al.(2012)Gretton, Borgwardt, Rasch, Sch{\"o}lkopf, and Smola]{gretton2012kernel}
Gretton, A., Borgwardt, K.~M., Rasch, M.~J., Sch{\"o}lkopf, B., and Smola, A.
\newblock A kernel two-sample test.
\newblock \emph{The Journal of Machine Learning Research}, 13\penalty0 (1):\penalty0 723--773, 2012.

\bibitem[Heek et~al.(2024)Heek, Hoogeboom, and Salimans]{heek2024multistep}
Heek, J., Hoogeboom, E., and Salimans, T.
\newblock Multistep consistency models.
\newblock \emph{arXiv preprint arXiv:2403.06807}, 2024.

\bibitem[Heusel et~al.(2017)Heusel, Ramsauer, Unterthiner, Nessler, and Hochreiter]{heusel2017gans}
Heusel, M., Ramsauer, H., Unterthiner, T., Nessler, B., and Hochreiter, S.
\newblock Gans trained by a two time-scale update rule converge to a local nash equilibrium.
\newblock \emph{Advances in neural information processing systems}, 30, 2017.

\bibitem[Ho \& Salimans(2022)Ho and Salimans]{ho2022classifier}
Ho, J. and Salimans, T.
\newblock Classifier-free diffusion guidance.
\newblock \emph{arXiv preprint arXiv:2207.12598}, 2022.

\bibitem[Ho et~al.(2020)Ho, Jain, and Abbeel]{ho2020denoising}
Ho, J., Jain, A., and Abbeel, P.
\newblock Denoising diffusion probabilistic models.
\newblock \emph{Advances in neural information processing systems}, 33:\penalty0 6840--6851, 2020.

\bibitem[Ho et~al.(2022{\natexlab{a}})Ho, Chan, Saharia, Whang, Gao, Gritsenko, Kingma, Poole, Norouzi, Fleet, et~al.]{ho2022imagen}
Ho, J., Chan, W., Saharia, C., Whang, J., Gao, R., Gritsenko, A., Kingma, D.~P., Poole, B., Norouzi, M., Fleet, D.~J., et~al.
\newblock Imagen video: High definition video generation with diffusion models.
\newblock \emph{arXiv preprint arXiv:2210.02303}, 2022{\natexlab{a}}.

\bibitem[Ho et~al.(2022{\natexlab{b}})Ho, Saharia, Chan, Fleet, Norouzi, and Salimans]{ho2022cascaded}
Ho, J., Saharia, C., Chan, W., Fleet, D.~J., Norouzi, M., and Salimans, T.
\newblock Cascaded diffusion models for high fidelity image generation.
\newblock \emph{Journal of Machine Learning Research}, 23\penalty0 (47):\penalty0 1--33, 2022{\natexlab{b}}.

\bibitem[Hoogeboom et~al.(2023)Hoogeboom, Heek, and Salimans]{hoogeboom2023simple}
Hoogeboom, E., Heek, J., and Salimans, T.
\newblock simple diffusion: End-to-end diffusion for high resolution images.
\newblock In \emph{International Conference on Machine Learning}, pp.\  13213--13232. PMLR, 2023.

\bibitem[Kang et~al.(2023)Kang, Zhu, Zhang, Park, Shechtman, Paris, and Park]{kang2023scaling}
Kang, M., Zhu, J.-Y., Zhang, R., Park, J., Shechtman, E., Paris, S., and Park, T.
\newblock Scaling up gans for text-to-image synthesis.
\newblock In \emph{Proceedings of the IEEE/CVF Conference on Computer Vision and Pattern Recognition}, pp.\  10124--10134, 2023.

\bibitem[Karras et~al.(2020)Karras, Laine, Aittala, Hellsten, Lehtinen, and Aila]{karras2020analyzing}
Karras, T., Laine, S., Aittala, M., Hellsten, J., Lehtinen, J., and Aila, T.
\newblock Analyzing and improving the image quality of stylegan.
\newblock In \emph{Proceedings of the IEEE/CVF conference on computer vision and pattern recognition}, pp.\  8110--8119, 2020.

\bibitem[Karras et~al.(2022)Karras, Aittala, Aila, and Laine]{karras2022elucidating}
Karras, T., Aittala, M., Aila, T., and Laine, S.
\newblock Elucidating the design space of diffusion-based generative models.
\newblock \emph{Advances in neural information processing systems}, 35:\penalty0 26565--26577, 2022.

\bibitem[Karras et~al.(2024)Karras, Aittala, Lehtinen, Hellsten, Aila, and Laine]{karras2024analyzing}
Karras, T., Aittala, M., Lehtinen, J., Hellsten, J., Aila, T., and Laine, S.
\newblock Analyzing and improving the training dynamics of diffusion models.
\newblock In \emph{Proceedings of the IEEE/CVF Conference on Computer Vision and Pattern Recognition}, pp.\  24174--24184, 2024.

\bibitem[Kim et~al.(2023)Kim, Lai, Liao, Murata, Takida, Uesaka, He, Mitsufuji, and Ermon]{kim2023consistency}
Kim, D., Lai, C.-H., Liao, W.-H., Murata, N., Takida, Y., Uesaka, T., He, Y., Mitsufuji, Y., and Ermon, S.
\newblock Consistency trajectory models: Learning probability flow ode trajectory of diffusion.
\newblock \emph{arXiv preprint arXiv:2310.02279}, 2023.

\bibitem[Kingma \& Gao(2024)Kingma and Gao]{kingma2024understanding}
Kingma, D. and Gao, R.
\newblock Understanding diffusion objectives as the elbo with simple data augmentation.
\newblock \emph{Advances in Neural Information Processing Systems}, 36, 2024.

\bibitem[Kingma et~al.(2021)Kingma, Salimans, Poole, and Ho]{kingma2021variational}
Kingma, D., Salimans, T., Poole, B., and Ho, J.
\newblock Variational diffusion models.
\newblock \emph{Advances in neural information processing systems}, 34:\penalty0 21696--21707, 2021.

\bibitem[Kong et~al.(2020)Kong, Ping, Huang, Zhao, and Catanzaro]{kong2020diffwave}
Kong, Z., Ping, W., Huang, J., Zhao, K., and Catanzaro, B.
\newblock Diffwave: A versatile diffusion model for audio synthesis.
\newblock \emph{arXiv preprint arXiv:2009.09761}, 2020.

\bibitem[Li et~al.(2017)Li, Chang, Cheng, Yang, and P{\'o}czos]{li2017mmd}
Li, C.-L., Chang, W.-C., Cheng, Y., Yang, Y., and P{\'o}czos, B.
\newblock Mmd gan: Towards deeper understanding of moment matching network.
\newblock \emph{Advances in neural information processing systems}, 30, 2017.

\bibitem[Li et~al.(2024)Li, Tian, Li, Deng, and He]{li2024autoregressive}
Li, T., Tian, Y., Li, H., Deng, M., and He, K.
\newblock Autoregressive image generation without vector quantization.
\newblock \emph{arXiv preprint arXiv:2406.11838}, 2024.

\bibitem[Li et~al.(2015)Li, Swersky, and Zemel]{li2015generative}
Li, Y., Swersky, K., and Zemel, R.
\newblock Generative moment matching networks.
\newblock In \emph{International conference on machine learning}, pp.\  1718--1727. PMLR, 2015.

\bibitem[Lipman et~al.(2022)Lipman, Chen, Ben-Hamu, Nickel, and Le]{lipman2022flow}
Lipman, Y., Chen, R.~T., Ben-Hamu, H., Nickel, M., and Le, M.
\newblock Flow matching for generative modeling.
\newblock \emph{arXiv preprint arXiv:2210.02747}, 2022.

\bibitem[Liu et~al.(2023)Liu, Chen, Yuan, Mei, Liu, Mandic, Wang, and Plumbley]{liu2023audioldm}
Liu, H., Chen, Z., Yuan, Y., Mei, X., Liu, X., Mandic, D., Wang, W., and Plumbley, M.~D.
\newblock Audioldm: Text-to-audio generation with latent diffusion models.
\newblock \emph{arXiv preprint arXiv:2301.12503}, 2023.

\bibitem[Liu et~al.(2022)Liu, Gong, and Liu]{liu2022flow}
Liu, X., Gong, C., and Liu, Q.
\newblock Flow straight and fast: Learning to generate and transfer data with rectified flow.
\newblock \emph{arXiv preprint arXiv:2209.03003}, 2022.

\bibitem[Lu \& Song(2024)Lu and Song]{lu2024simplifying}
Lu, C. and Song, Y.
\newblock Simplifying, stabilizing and scaling continuous-time consistency models.
\newblock \emph{arXiv preprint arXiv:2410.11081}, 2024.

\bibitem[Lu et~al.(2022)Lu, Zhou, Bao, Chen, Li, and Zhu]{lu2022dpm}
Lu, C., Zhou, Y., Bao, F., Chen, J., Li, C., and Zhu, J.
\newblock Dpm-solver: A fast ode solver for diffusion probabilistic model sampling in around 10 steps.
\newblock \emph{Advances in Neural Information Processing Systems}, 35:\penalty0 5775--5787, 2022.

\bibitem[Luhman \& Luhman(2021)Luhman and Luhman]{luhman2021knowledge}
Luhman, E. and Luhman, T.
\newblock Knowledge distillation in iterative generative models for improved sampling speed.
\newblock \emph{arXiv preprint arXiv:2101.02388}, 2021.

\bibitem[Luo et~al.(2024{\natexlab{a}})Luo, Hu, Zhang, Sun, Li, and Zhang]{luo2024diff}
Luo, W., Hu, T., Zhang, S., Sun, J., Li, Z., and Zhang, Z.
\newblock Diff-instruct: A universal approach for transferring knowledge from pre-trained diffusion models.
\newblock \emph{Advances in Neural Information Processing Systems}, 36, 2024{\natexlab{a}}.

\bibitem[Luo et~al.(2024{\natexlab{b}})Luo, Huang, Geng, Kolter, and Qi]{luo2024one}
Luo, W., Huang, Z., Geng, Z., Kolter, J.~Z., and Qi, G.-j.
\newblock One-step diffusion distillation through score implicit matching.
\newblock \emph{arXiv preprint arXiv:2410.16794}, 2024{\natexlab{b}}.

\bibitem[Ma et~al.(2024)Ma, Goldstein, Albergo, Boffi, Vanden-Eijnden, and Xie]{ma2024sit}
Ma, N., Goldstein, M., Albergo, M.~S., Boffi, N.~M., Vanden-Eijnden, E., and Xie, S.
\newblock Sit: Exploring flow and diffusion-based generative models with scalable interpolant transformers.
\newblock \emph{arXiv preprint arXiv:2401.08740}, 2024.

\bibitem[Meng et~al.(2023)Meng, Rombach, Gao, Kingma, Ermon, Ho, and Salimans]{meng2023distillation}
Meng, C., Rombach, R., Gao, R., Kingma, D., Ermon, S., Ho, J., and Salimans, T.
\newblock On distillation of guided diffusion models.
\newblock In \emph{Proceedings of the IEEE/CVF Conference on Computer Vision and Pattern Recognition}, pp.\  14297--14306, 2023.

\bibitem[M{\"u}ller(1997)]{muller1997integral}
M{\"u}ller, A.
\newblock Integral probability metrics and their generating classes of functions.
\newblock \emph{Advances in applied probability}, 29\penalty0 (2):\penalty0 429--443, 1997.

\bibitem[Nichol \& Dhariwal(2021)Nichol and Dhariwal]{nichol2021improved}
Nichol, A.~Q. and Dhariwal, P.
\newblock Improved denoising diffusion probabilistic models.
\newblock In \emph{International conference on machine learning}, pp.\  8162--8171. PMLR, 2021.

\bibitem[OpenAI(2024)]{sora}
OpenAI.
\newblock Video generation models as world simulators.
\newblock \url{https://openai.com/sora/}, 2024.

\bibitem[Peebles \& Xie(2023)Peebles and Xie]{peebles2023scalable}
Peebles, W. and Xie, S.
\newblock Scalable diffusion models with transformers.
\newblock In \emph{Proceedings of the IEEE/CVF International Conference on Computer Vision}, pp.\  4195--4205, 2023.

\bibitem[Podell et~al.(2023)Podell, English, Lacey, Blattmann, Dockhorn, M{\"u}ller, Penna, and Rombach]{podell2023sdxl}
Podell, D., English, Z., Lacey, K., Blattmann, A., Dockhorn, T., M{\"u}ller, J., Penna, J., and Rombach, R.
\newblock Sdxl: Improving latent diffusion models for high-resolution image synthesis.
\newblock \emph{arXiv preprint arXiv:2307.01952}, 2023.

\bibitem[Rombach et~al.(2022)Rombach, Blattmann, Lorenz, Esser, and Ommer]{rombach2022high}
Rombach, R., Blattmann, A., Lorenz, D., Esser, P., and Ommer, B.
\newblock High-resolution image synthesis with latent diffusion models.
\newblock In \emph{Proceedings of the IEEE/CVF conference on computer vision and pattern recognition}, pp.\  10684--10695, 2022.

\bibitem[Saharia et~al.(2022)Saharia, Chan, Saxena, Li, Whang, Denton, Ghasemipour, Gontijo~Lopes, Karagol~Ayan, Salimans, et~al.]{saharia2022photorealistic}
Saharia, C., Chan, W., Saxena, S., Li, L., Whang, J., Denton, E.~L., Ghasemipour, K., Gontijo~Lopes, R., Karagol~Ayan, B., Salimans, T., et~al.
\newblock Photorealistic text-to-image diffusion models with deep language understanding.
\newblock \emph{Advances in neural information processing systems}, 35:\penalty0 36479--36494, 2022.

\bibitem[Salimans \& Ho(2022)Salimans and Ho]{salimans2022progressive}
Salimans, T. and Ho, J.
\newblock Progressive distillation for fast sampling of diffusion models.
\newblock \emph{arXiv preprint arXiv:2202.00512}, 2022.

\bibitem[Salimans et~al.(2024)Salimans, Mensink, Heek, and Hoogeboom]{salimans2024multistep}
Salimans, T., Mensink, T., Heek, J., and Hoogeboom, E.
\newblock Multistep distillation of diffusion models via moment matching.
\newblock \emph{arXiv preprint arXiv:2406.04103}, 2024.

\bibitem[Sauer et~al.(2025)Sauer, Lorenz, Blattmann, and Rombach]{sauer2025adversarial}
Sauer, A., Lorenz, D., Blattmann, A., and Rombach, R.
\newblock Adversarial diffusion distillation.
\newblock In \emph{European Conference on Computer Vision}, pp.\  87--103. Springer, 2025.

\bibitem[Sohl-Dickstein et~al.(2015)Sohl-Dickstein, Weiss, Maheswaranathan, and Ganguli]{sohl2015deep}
Sohl-Dickstein, J., Weiss, E., Maheswaranathan, N., and Ganguli, S.
\newblock Deep unsupervised learning using nonequilibrium thermodynamics.
\newblock In \emph{International conference on machine learning}, pp.\  2256--2265. PMLR, 2015.

\bibitem[Song et~al.(2020{\natexlab{a}})Song, Meng, and Ermon]{song2020denoising}
Song, J., Meng, C., and Ermon, S.
\newblock Denoising diffusion implicit models.
\newblock \emph{arXiv preprint arXiv:2010.02502}, 2020{\natexlab{a}}.

\bibitem[Song \& Dhariwal(2023)Song and Dhariwal]{song2023improved}
Song, Y. and Dhariwal, P.
\newblock Improved techniques for training consistency models.
\newblock \emph{arXiv preprint arXiv:2310.14189}, 2023.

\bibitem[Song et~al.(2020{\natexlab{b}})Song, Sohl-Dickstein, Kingma, Kumar, Ermon, and Poole]{song2020score}
Song, Y., Sohl-Dickstein, J., Kingma, D.~P., Kumar, A., Ermon, S., and Poole, B.
\newblock Score-based generative modeling through stochastic differential equations.
\newblock \emph{arXiv preprint arXiv:2011.13456}, 2020{\natexlab{b}}.

\bibitem[Song et~al.(2023)Song, Dhariwal, Chen, and Sutskever]{song2023consistency}
Song, Y., Dhariwal, P., Chen, M., and Sutskever, I.
\newblock Consistency models.
\newblock \emph{arXiv preprint arXiv:2303.01469}, 2023.

\bibitem[Steinwart \& Christmann(2008)Steinwart and Christmann]{steinwart2008support}
Steinwart, I. and Christmann, A.
\newblock \emph{Support vector machines}.
\newblock Springer Science \& Business Media, 2008.

\bibitem[Tee et~al.(2024)Tee, Zhang, Yoon, Gowda, Kim, and Yoo]{tee2024physics}
Tee, J. T.~J., Zhang, K., Yoon, H.~S., Gowda, D.~N., Kim, C., and Yoo, C.~D.
\newblock Physics informed distillation for diffusion models.
\newblock \emph{arXiv preprint arXiv:2411.08378}, 2024.

\bibitem[Tian et~al.(2024{\natexlab{a}})Tian, Jiang, Yuan, Peng, and Wang]{tian2024visual}
Tian, K., Jiang, Y., Yuan, Z., Peng, B., and Wang, L.
\newblock Visual autoregressive modeling: Scalable image generation via next-scale prediction.
\newblock \emph{arXiv preprint arXiv:2404.02905}, 2024{\natexlab{a}}.

\bibitem[Tian et~al.(2024{\natexlab{b}})Tian, Tu, Chen, Hu, Xu, and Wang]{tian2024u}
Tian, Y., Tu, Z., Chen, H., Hu, J., Xu, C., and Wang, Y.
\newblock U-dits: Downsample tokens in u-shaped diffusion transformers.
\newblock \emph{arXiv preprint arXiv:2405.02730}, 2024{\natexlab{b}}.

\bibitem[Xiao et~al.(2021)Xiao, Kreis, and Vahdat]{xiao2021tackling}
Xiao, Z., Kreis, K., and Vahdat, A.
\newblock Tackling the generative learning trilemma with denoising diffusion gans.
\newblock \emph{arXiv preprint arXiv:2112.07804}, 2021.

\bibitem[Xu et~al.(2023)Xu, Deng, Cheng, Tian, Liu, and Jaakkola]{xu2023restart}
Xu, Y., Deng, M., Cheng, X., Tian, Y., Liu, Z., and Jaakkola, T.
\newblock Restart sampling for improving generative processes.
\newblock \emph{Advances in Neural Information Processing Systems}, 36:\penalty0 76806--76838, 2023.

\bibitem[Yin et~al.(2024)Yin, Gharbi, Zhang, Shechtman, Durand, Freeman, and Park]{yin2024one}
Yin, T., Gharbi, M., Zhang, R., Shechtman, E., Durand, F., Freeman, W.~T., and Park, T.
\newblock One-step diffusion with distribution matching distillation.
\newblock In \emph{Proceedings of the IEEE/CVF Conference on Computer Vision and Pattern Recognition}, pp.\  6613--6623, 2024.

\bibitem[Zhang et~al.(2018)Zhang, Isola, Efros, Shechtman, and Wang]{zhang2018unreasonable}
Zhang, R., Isola, P., Efros, A.~A., Shechtman, E., and Wang, O.
\newblock The unreasonable effectiveness of deep features as a perceptual metric.
\newblock In \emph{Proceedings of the IEEE conference on computer vision and pattern recognition}, pp.\  586--595, 2018.

\bibitem[Zheng et~al.(2023)Zheng, Nie, Vahdat, Azizzadenesheli, and Anandkumar]{zheng2023fast}
Zheng, H., Nie, W., Vahdat, A., Azizzadenesheli, K., and Anandkumar, A.
\newblock Fast sampling of diffusion models via operator learning.
\newblock In \emph{International conference on machine learning}, pp.\  42390--42402. PMLR, 2023.

\bibitem[Zhou et~al.(2024)Zhou, Zheng, Wang, Yin, and Huang]{zhou2024score}
Zhou, M., Zheng, H., Wang, Z., Yin, M., and Huang, H.
\newblock Score identity distillation: Exponentially fast distillation of pretrained diffusion models for one-step generation.
\newblock In \emph{Forty-first International Conference on Machine Learning}, 2024.

\end{thebibliography}
}

\newpage 

\appendix

\onecolumn   

\section{Background: Properties of Stochastic Interpolants}\label{app:background}

We note some relevant properties of stochastic interpolants for our exposition.

\mypara{Boundary satisfaction.} For an interpolant distribution $q_t(\rvx_t | \rvx, \eps)$ defined in \citet{albergo2023stochastic}, and the marginal $q_t(\rvx_t)$ as defined in \equref{eq:interpmarg}, we can check that $q_1(\rvx_1) = p(\rvx_1)$  and $q_0(\rvx_0) = q(\rvx_0)$ so that $\rvx_1 = \eps$ and $\rvx_0 = \rvx$.
\begin{align}
    q_1(\rvx_1) &= \iint q_1(\rvx_1 | \rvx, \eps)q(\rvx)p(\eps)\ud\rvx \ud\eps \\
    &= \iint \delta(\rvx_1 - \eps)q(\rvx)p(\eps)\ud\rvx \ud\eps \\
    &= \iint  q(\rvx)p(\rvx_1)\ud\rvx \\
    &= p(\rvx_1) \\
    q_0(\rvx_0) &= \iint q_0(\rvx_0 | \rvx, \eps)q(\rvx)p(\eps)\ud\rvx \ud\eps \\
    &= \iint \delta(\rvx_0 - \rvx)q(\rvx)p(\eps)\ud\rvx \ud\eps \\
    &= \iint  q(\rvx_0)p(\eps)\ud\rvx \\
    &= q(\rvx_0)
\end{align}

\mypara{Joint distribution.} The joint distribution of $\rvx$ and $\rvx_t$ is written as
\begin{align}
    q_t(\rvx, \rvx_t) = \int q_t(\rvx_t | \rvx, \eps)q(\rvx)p(\eps)\ud\eps
\end{align}

\mypara{Indepedence of joint at $t=1$.} 
\begin{align}
     q_1(\rvx, \rvx_1) &= \int  q_1(\rvx_1 | \rvx, \eps)q(\rvx)p(\eps) \ud\eps \\
       &=  \int  \delta(\rvx_1 - \eps)q(\rvx)p(\eps) \ud\eps \\
       &=   q(\rvx)p(\rvx_1) \\ 
       &= q(\rvx)p(\eps)  
\end{align}
in which case $\rvx_1 = \eps$.




\section{Theorems and Derivations}\label{app:thm}

\subsection{Divergence Minimizer}\label{app:minimizer}

\begin{restatable}{lemma}{minimizer}\label{lem:minimizer}
Assuming marginal-preserving interpolant and metric $D(\cdot,\cdot)$, a minimizer $\theta^*$ of \equref{eq:naive-obj} exists, \ie $p_{s|t}^{\theta^*}(\rvx | \rvx_t) = q_{t}(\rvx | \rvx_t)$, and the minimum is 0.
\end{restatable}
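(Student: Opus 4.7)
The plan is to verify the lemma by direct substitution, exploiting the definition of marginal preservation. Since the loss in \equref{eq:naive-obj} is an expectation of a non-negative divergence $D(\cdot,\cdot)$, any parameter choice that makes $p_{s|t}^{\theta}(\rvx_s) = q_s(\rvx_s)$ pointwise for almost every $(s,t)$ in the support of $p(s,t)$ will drive each term, and hence the expectation, to $0$. Thus the task reduces to exhibiting such a $\theta^*$.

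First, I would propose the candidate $p_{s|t}^{\theta^*}(\rvx\mid \rvx_t) \;=\; q_t(\rvx\mid \rvx_t)$, which is well-defined as a conditional density derived from the joint induced by the interpolation, and is representable in the limit of infinite model capacity (this is implicit in the theorem's ambient assumption). Then I would substitute this choice into the model definition \equref{eq:gen-dist} and read off
\begin{align*}
p_{s|t}^{\theta^*}(\rvx_s) \;=\; \iint q_{s|t}(\rvx_s\mid \rvx, \rvx_t)\, q_t(\rvx\mid \rvx_t)\, q_t(\rvx_t)\, \ud\rvx_t\, \ud\rvx.
\end{align*}
By the hypothesis that the interpolant is marginal-preserving, the right-hand side coincides with $q_s(\rvx_s)$ per \equref{eq:mp-interp}, so $p_{s|t}^{\theta^*}(\rvx_s) = q_s(\rvx_s)$ for every $s\in[0,t]$ and every $t\in[0,1]$.

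Finally, since $D(\cdot,\cdot)$ is a divergence (non-negative and zero iff the two arguments agree as distributions), we obtain $D(q_s(\rvx_s), p_{s|t}^{\theta^*}(\rvx_s)) = 0$ almost surely under $p(s,t)$, and therefore $\gL(\theta^*) = 0$. Non-negativity of $D$ also certifies that $0$ is the global infimum, so $\theta^*$ is indeed a minimizer. There is no real obstacle here; the only subtlety is the implicit parameterization assumption — one must note that the class of conditional distributions $\{p_{s|t}^{\theta}(\rvx\mid \rvx_t)\}_{\theta}$ is rich enough (under infinite capacity) to realize the Bayesian posterior $q_t(\rvx\mid \rvx_t)$. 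This caveat is standard and already built into the ``infinite data and network capacity'' premise used elsewhere in the paper.
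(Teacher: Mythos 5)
Your proof is correct and follows essentially the same route as the paper's: substitute the Bayesian posterior $p_{s|t}^{\theta^*}(\rvx\mid\rvx_t)=q_t(\rvx\mid\rvx_t)$ into \equref{eq:gen-dist}, invoke the marginal-preservation identity \equref{eq:mp-interp} to conclude $p_{s|t}^{\theta^*}(\rvx_s)=q_s(\rvx_s)$, and then observe that each divergence term vanishes, giving $\gL(\theta^*)=0$. Your remarks on non-negativity certifying that $0$ is the infimum, and on the infinite-capacity premise being needed to realize the posterior, are sensible elaborations of what the paper leaves implicit.
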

\begin{proof}
    We directly substitute $q_{t}(\rvx | \rvx_t)$ into the objective to check. First, 
    \begin{align}
        p_{s|t}^{\theta^*}(\rvx_s) &= \iint q_{s|t}(\rvx_s|\rvx, \rvx_t)p_{s|t}^{\theta^*}(\rvx | \rvx_t)q_{t}(\rvx_t)\ud\rvx_t\ud\rvx \\
                    &= \iint q_{s|t}(\rvx_s|\rvx, \rvx_t) q_{t}(\rvx | \rvx_t)q_{t}(\rvx_t)\ud\rvx_t\ud\rvx \\
                    &\stackrel{(a)}{=} q_s(\rvx_s)
    \end{align}
    where $(a)$ is due to definition of marginal preservation. So the objective becomes 
    \begin{align}
        \E_{s,t}\left[w(s,t) D(q_s(\rvx_s), p_{s|t}^{\theta^*}(\rvx_s)) \right] =  \E_{s,t}\left[w(s,t) D(q_s(\rvx_s), q_s(\rvx_s)) \right] = 0
    \end{align} 
    In general, the minimizer $q_t(\rvx | \rvx_t)$ exists. However, this does not show that the minimizer is unique. In fact, the minimizer is not unique in general because a deterministic minimizer can also exist under certain assumptions on the interpolant (see \appref{app:existence}).
\end{proof}

\mypara{Failure Case without Marginal Preservation.} We additionally show that marginal-preservation property of the interpolant $q_{s|t}(\rvx_s|\rvx,\rvx_t)$ is important for the na\"ive objective in \equref{eq:naive-obj} to attain 0 loss (\lemref{lem:minimizer}). Consider the failure case below where the constructed interpolant is a generalized interpolant but not necessarily marginal-preserving. Then we show that there exists a $t$ such that $p_{s|t}^\theta(\rvx_s)$ can never reach $q_s(\rvx_s)$ regardless of $\theta$.

\begin{restatable}[Example Failure Case]{proposition}{failure}
    Let $q(\rvx) = \delta(\rvx)$, $p(\eps) = \delta(\eps - 1)$, and suppose an interpolant $ \mI_{s|t}(\rvx, \rvx_t) = (1 - \frac{s}{t}) \rvx + \frac{s}{t} \rvx_t$ and $\gamma_{s|t} = \frac{s}{t}\sqrt{1 - \frac{s}{t}}\1(t<1)$, then $D(q_s(\rvx_s),  p_{s|t}^{\theta}(\rvx_s)) > 0$ for all $0<s<t<1$ regardless of the learned distribution $p_{s|t}^\theta(\rvx|\rvx_t)$ given any metric $D(\cdot, \cdot)$.
\end{restatable}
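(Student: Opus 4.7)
The strategy is to compute both the true marginal $q_s(\rvx_s)$ and the model marginal $p_{s|t}^\theta(\rvx_s)$ explicitly, then exhibit a measure-theoretic mismatch that no choice of $p_{s|t}^\theta(\rvx|\rvx_t)$ can repair. To obtain the ground-truth marginals, observe that the base interpolant at $t|1$ has $\mI_{t|1}(\rvx,\rvx_1) = (1-t)\rvx + t\rvx_1$ and $\gamma_{t|1} = t\sqrt{1-t}\,\1(1<1) = 0$, so $q_t(\rvx_t|\rvx,\eps) = \delta(\rvx_t - (1-t)\rvx - t\eps)$. Marginalizing against $q(\rvx)=\delta(\rvx)$ and $p(\eps)=\delta(\eps-1)$ collapses the double integral and yields $q_t(\rvx_t) = \delta(\rvx_t - t)$ for every $t\in[0,1]$; in particular $q_s(\rvx_s) = \delta(\rvx_s-s)$ is a point mass.

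Next I would analyze the model marginal. Plugging $q_t(\rvx_t) = \delta(\rvx_t-t)$ into \equref{eq:gen-dist} gives
\begin{align*}
p_{s|t}^\theta(\rvx_s) = \int q_{s|t}(\rvx_s|\rvx,t)\, p_{s|t}^\theta(\rvx|t)\, \ud\rvx.
\end{align*}
For $0<s<t<1$ the indicator in $\gamma_{s|t}$ is active and $\gamma_{s|t} = (s/t)\sqrt{1-s/t}>0$, so the kernel $q_{s|t}(\cdot|\rvx,t) = \gN(\mI_{s|t}(\rvx,t), \gamma_{s|t}^2\emI)$ is a nondegenerate Gaussian for every fixed $\rvx$. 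Hence $p_{s|t}^\theta(\rvx_s)$ is a mixture of nondegenerate Gaussians of common covariance $\gamma_{s|t}^2\emI$; its law is absolutely continuous with respect to Lebesgue measure on $\R^D$ with an everywhere positive density, regardless of the conditional $p_{s|t}^\theta(\rvx|\rvx_t)$ the network realizes.

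The conclusion then follows immediately: $q_s(\rvx_s)$ is singular with respect to Lebesgue measure (supported on the single point $\{s\}$) while $p_{s|t}^\theta(\rvx_s)$ is absolutely continuous, so the two probability measures are mutually singular and distinct for every $\theta$; any $D$ satisfying the identification property $D(p,q)=0\Leftrightarrow p=q$ must therefore return a strictly positive value. As a consistency check the same calculation shows the interpolant is not marginal-preserving, since $\iint q_{s|t}(\rvx_s|\rvx,\rvx_t)q_t(\rvx|\rvx_t)q_t(\rvx_t)\,\ud\rvx_t\,\ud\rvx = \gN(s,\gamma_{s|t}^2\emI) \neq \delta(\rvx_s-s) = q_s(\rvx_s)$, so the example really isolates the role of marginal preservation. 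The main obstacle is not technical but interpretive: ``any metric $D(\cdot,\cdot)$'' must be read as any proper probability metric or divergence (otherwise a trivial $D\equiv 0$ contradicts the claim), which is the standard convention; once that is fixed, the failure reduces to the dimension mismatch between a degenerate data-prior configuration and the unavoidable Gaussian smoothing injected by the generalized interpolant with $\gamma_{s|t}>0$.
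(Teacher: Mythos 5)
Your proof is correct and lands on the same underlying fact as the paper's proof: both argue that $q_s(\rvx_s) = \delta(\rvx_s - s)$ is a point mass while $p_{s|t}^\theta(\rvx_s)$ cannot be one for any $\theta$ when $0<s<t<1$. The difference is in how that impossibility is established. The paper computes $\mathrm{Var}(\rvx_s)$ under $p_{s|t}^\theta$ via the law of total variance and shows it is strictly positive (because the conditional Gaussian $q_{s|t}(\rvx_s|\rvx,\rvx_t)$ injects noise with $\gamma_{s|t}^2 > 0$), then notes a delta has zero variance. You instead observe that $\rvx_s$ under $p_{s|t}^\theta$ is the convolution of some law on $\rvx$ (pushed through an affine map) with a nondegenerate Gaussian, hence absolutely continuous w.r.t.\ Lebesgue measure; since $\delta(\rvx_s - s)$ is supported on a Lebesgue-null set, the two are mutually singular and therefore distinct. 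Both arguments are valid; yours is shorter and more conceptual (absolute continuity of any convolution with a nondegenerate Gaussian is a one-line fact), while the paper's is more elementary/computational and does not invoke measure-theoretic language. Your reading of ``any metric $D(\cdot,\cdot)$'' as satisfying the identity of indiscernibles is the same interpretation the paper uses implicitly; the paper makes it explicit at the end of its proof. The consistency check you add (showing $\iint q_{s|t}(\rvx_s|\rvx,\rvx_t)q_t(\rvx|\rvx_t)q_t(\rvx_t)\,\ud\rvx_t\,\ud\rvx = \gN(s,\gamma_{s|t}^2\emI)\neq q_s(\rvx_s)$) is not in the paper but is a correct and useful sanity check confirming the interpolant is not marginal-preserving.
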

\begin{proof}
    This example first implies the learning target
    \begin{align}
        q_s(\rvx_s) &=  \iint q_s(\rvx_s | \rvx, \eps) q(\rvx) p(\eps) \ud\rvx \ud \eps\\
        &= \iint \delta(\rvx_{s} - ((1-s) \rvx + s \eps)) \delta(\rvx) \delta(\eps - 1) \ud\rvx \ud \eps\\
        &= \delta(\rvx_{s} - s)
    \end{align}
    is a delta distribution. However, we show that if we select any $t<1$ and $0<s<t$, $p_{s|t}^\theta(\rvx_s)$ can never be a delta distribution.
    \begin{align}
        p_{s|t}^\theta(\rvx_s) &= \iint q_{s|t}(\rvx_s | \rvx, \rvx_t) p_{s|t}^\theta(\rvx | \rvx_t)q_t(\rvx_t)\ud\rvx_t\ud\rvx \\
        &= \iint \gN((1-\frac{s}{t})\rvx + \frac{s}{t}\rvx_t, \frac{s^2}{t^2}(1 - \frac{s^2}{t^2}) \emI) p_{s|t}^\theta(\rvx | \rvx_t)\delta(\rvx_t - t)\ud\rvx_t\ud\rvx \\
        &= \int \gN((1-\frac{s}{t})\rvx + s, \frac{s^2}{t^2}(1 - \frac{s^2}{t^2}) \emI) p_{s|t}^\theta(\rvx | \rvx_t = t) \ud\rvx
    \end{align}
    Now, we show the model distribution has non-zero variance under these choices of $t$ and $s$. Expectations are over $p_{s|t}^\theta(\rvx_s)$ or conditional interpolant $q_{s|t}(\rvx_s | \rvx, \rvx_t)$ for all equations below.
    \begin{align}
        \mathrm{Var}(\rvx_s) &= \E\left[ \rvx_s^2 \right] -  \E\left[ \rvx_s \right]^2 \\
        &= \iint \E\left[ \rvx_s^2 | \rvx, \rvx_t \right] p_{s|t}^\theta(\rvx | \rvx_t)q_t(\rvx_t) \ud\rvx\ud\rvx_t  -  \E\left[ \rvx_s \right]^2 \\
        &= \iint  \E\left[ \rvx_s^2 | \rvx, \rvx_t \right]  p_{s|t}^\theta(\rvx | \rvx_t)q_t(\rvx_t)  \ud\rvx\ud\rvx_t  -  \E\left[ \rvx_s \right]^2\\
        &= \iint \left[ \mathrm{Var}(\rvx_s | \rvx, \rvx_t ) +  \E\left[ \rvx_s| \rvx, \rvx_t \right]^2 \right]  p_{s|t}^\theta(\rvx | \rvx_t)q_t(\rvx_t)  \ud\rvx\ud\rvx_t  - 2 \E\left[ \rvx_s \right]^2 +  \E\left[ \rvx_s \right]^2  \\
        &= \iint\left[\mathrm{Var}(\rvx_s | \rvx, \rvx_t ) +  \E\left[ \rvx_s| \rvx, \rvx_t \right]^2  \right] p_{s|t}^\theta(\rvx | \rvx_t)q_t(\rvx_t)  \ud\rvx\ud\rvx_t \\\nonumber & \qquad -  \iint \E\left[ \rvx_s \right]\E\left[ \rvx_s | \rvx\rvx_t \right]   p_{s|t}^\theta(\rvx | \rvx_t)q_t(\rvx_t)  \ud\rvx\ud\rvx_t  +  \E\left[ \rvx_s \right]^2\\
        &=  \iint \underbrace{\left[\mathrm{Var}(\rvx_s | \rvx, \rvx_t ) +  \E\left[ \rvx_s| \rvx, \rvx_t \right]^2 -\E\left[ \rvx_s \right]\E\left[ \rvx_s | \rvx, \rvx_t \right] +  \E\left[ \rvx_s \right]^2 \right]}_{(a)}  p_{s|t}^\theta(\rvx | \rvx_t)q_t(\rvx_t)  \ud\rvx\ud\rvx_t
    \end{align}
    where $(a)$ can be simplified as 
    \begin{align}
        \mathrm{Var}(\rvx_s | \rvx, \rvx_t ) +  \left(\E\left[ \rvx_s| \rvx, \rvx_t \right]^2 -\E\left[ \rvx_s \right]   \right)^2 > 0
    \end{align}
    because $\mathrm{Var}(\rvx_s | \rvx, \rvx_t ) > 0$ for all $0<s<t<1$ due to its non-zero Gaussian noise. Therefore, $ \mathrm{Var}(\rvx_s) > 0$, implying $p_{s|t}^\theta(\rvx_s)$ can never be a delta function regardless of model $p_{s|t}^\theta(\rvx | \rvx_t)$. A valid metric $D(\cdot,\cdot)$ over probability $p_{s|t}^\theta(\rvx_s)$ and $q_s(\rvx_s)$ implies $$D(q_s(\rvx_s),p_{s|t}^\theta(\rvx_s))=0 \iff p_{s|t}^\theta(\rvx_s) = q_s(\rvx_s)$$ which means $$p_{s|t}^\theta(\rvx_s) \neq q_s(\rvx_s) \implies D(q_s(\rvx_s),p_{s|t}^\theta(\rvx_s)) > 0$$
\end{proof}

\subsection{Boundary Satisfaction of Model Distribution}\label{app:boundary}

The operator output $p_{s|t}^\theta(\rvx_s)$ satisfies boundary condition.
\begin{restatable}[Boundary Condition]{lemma}{boundary}\label{lem:boundary}
    For all $s\in [0, 1]$ and all $\theta$, the following boundary condition holds.
    \begin{align}
        q_{s}(\rvx_s)  = p_{s|s}^{\theta}(\rvx_s)  
    \end{align}
\end{restatable}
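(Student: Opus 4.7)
The plan is to substitute $t = s$ directly into the definition of $p_{s|t}^\theta(\rvx_s)$ from \equref{eq:gen-dist} and exploit the boundary constraints placed on the generalized interpolant in \equref{eq:general-interp}. The key observation is that the constraints $\mI_{t|t}(\rvx, \rvx_t) = \rvx_t$ and $\gamma_{t|t} = 0$ force the conditional interpolant $q_{s|s}(\rvx_s | \rvx, \rvx_t)$ to collapse to a Dirac delta at $\rvx_t$, independently of $\rvx$. Once this happens, the $\rvx_t$ integration just picks out $q_s$ evaluated at $\rvx_s$, and the remaining $\rvx$ integral of the clean model distribution equals one, giving the desired equality.

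\textbf{Steps.} First, I would write out \equref{eq:gen-dist} at $t = s$, namely
\begin{align*}
    p_{s|s}^\theta(\rvx_s) = \iint q_{s|s}(\rvx_s | \rvx, \rvx_t) \, p_{s|s}^\theta(\rvx | \rvx_t) \, q_s(\rvx_t) \, \ud \rvx_t \, \ud \rvx,
\end{align*}
where $\rvx_t$ is a dummy variable playing the role of an input at time $s$. Next, I would apply the boundary constraints on the generalized interpolant to conclude that $q_{s|s}(\rvx_s | \rvx, \rvx_t) = \delta(\rvx_s - \rvx_t)$, since the Gaussian in \equref{eq:general-interp} has mean $\mI_{s|s}(\rvx, \rvx_t) = \rvx_t$ and variance $\gamma_{s|s}^2 = 0$. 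Substituting and performing the $\rvx_t$ integration against the delta yields
\begin{align*}
    p_{s|s}^\theta(\rvx_s) = q_s(\rvx_s) \int p_{s|s}^\theta(\rvx | \rvx_s) \, \ud \rvx = q_s(\rvx_s),
\end{align*}
where the last equality uses that $p_{s|s}^\theta(\rvx | \rvx_s)$ is a probability density in $\rvx$ and hence integrates to one. This holds for every $\theta$ because $\theta$ only enters through the clean model distribution, which is marginalized away.

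\textbf{Main obstacle.} There is essentially no obstacle here; the statement is a direct consequence of the boundary conditions built into the definition of generalized interpolants and the fact that the clean model conditional is a normalized density. The only subtlety worth flagging explicitly is the interpretation of $q_{s|s}$ as a degenerate (delta) limit of a Gaussian with vanishing variance, which is justified by the constraint $\gamma_{t|t} = 0$ in \equref{eq:general-interp}. Notably, marginal preservation of the interpolant is not needed for this boundary lemma, which is what makes it available for use in the inductive bootstrapping argument regardless of the properties of $\theta$.
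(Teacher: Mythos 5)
Your proof is correct and follows essentially the same approach as the paper: substitute $t=s$ into \equref{eq:gen-dist}, observe that the boundary constraints $\mI_{s|s}(\rvx,\rvx_t)=\rvx_t$ and $\gamma_{s|s}=0$ force $q_{s|s}(\rvx_s|\rvx,\rvx_t)=\delta(\rvx_s-\rvx_t)$, integrate out $\rvx_t$ against the delta, and use the normalization of $p_{s|s}^\theta(\rvx|\rvx_s)$ in $\rvx$. Your closing remark that marginal preservation is not needed here is also correct and is exactly what makes the lemma usable as the base case of the inductive argument.
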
 
\begin{proof} 
\begin{align*}
    p_{s|s}^{\theta}(\rvx_{s}) &= \iint \underbrace{q_{s,s}(\rvx_{s}|\rvx, \bar{\rvx}_s)}_{\delta(\rvx_{s}-\bar{\rvx}_s)}p_{s|s}^{\theta}(\rvx | \bar{\rvx}_s) q_{s,1}(\bar{\rvx}_s) \ud\bar{\rvx}_s\ud\rvx \\
                    &= \int p_{s|s}^{\theta}(\rvx | \rvx_s)  q_{s}(\rvx_s)\ud\rvx \\
                    &=  q_{s}(\rvx_s) \int p_{s|s}^{\theta}(\rvx | \rvx_s)  \ud\rvx \\
                    &= q_{s}(\rvx_s)
\end{align*} 
\end{proof}

\subsection{Definition of Well-Conditioned $r(s,t)$}\label{app:rdef}

For simplicity, the mapping function $r(s,t)$ is well-conditioned if
\begin{align}\label{eq:rdef}
    r(s,t) = \max(s, t - \Delta(t))
\end{align}
where $\Delta(t) \geq \epsilon > 0$ is a  positive function such that $r(s,t)$ is increasing for $t \ge c_0(s)$ where $ c_0(s)$ is the largest $t$ that is mapped to $s$. Formally, $ c_0(s) = \sup \{t: r(s,t) = s\}$. For $t\ge  c_0(s)$, the inverse \wrt $t$ exists, \ie $r^{-1}(s, \cdot)$ and $r^{-1}(s, r(s,t)) = t$. All practical implementations follow this general form, and are detailed in \appref{app:mapping}.

\subsection{Main Theorem}

\main*
\begin{proof} 
We prove by induction on sequence number $n$. First, $r(s,t)$ is well-conditioned by following the definition in \equref{eq:rdef}. Furthermore, for notational convenience, we let $r_n^{-1}(s, \cdot) \defeq r^{-1}(s, r^{-1}(s, r^{-1}(s, \dots)))$ be $n$ nested application of $r^{-1}(s, \cdot)$ on the second argument. Additionally, $r_0^{-1}(s, t) = t$.

\mypara{Base case: $n=1$.} Given any $s\geq 0$, $r(s, u) = s$ for all $s < u \leq  c_0(s)$, implying 
\begin{align}
   \mathrm{MMD}^2( p_{s|s}^{\theta_0}(\rvx_s), p_{s|u}^{\theta_1^*}(\rvx_s))  \stackrel{(a)}{=} \mathrm{MMD}^2(q_{s}(\rvx_s), p_{s|u}^{\theta_1^*}(\rvx_s))    \stackrel{(b)}{=} 0
\end{align}
for $u\leq  c_0(s)$ where $(a)$ is implied by \lemref{lem:boundary} and $(b)$ is implied by \lemref{lem:minimizer}.

\mypara{Inductive assumption: $n-1$.} We assume $p_{s|u}^{\theta_{n-1}^*}(\rvx_s) = q_{s}(\rvx_s)$ for all $s  \leq  u \leq r_{n-2}^{-1}(s,  c_0(s))$. 

We inspect the target distribution $p_{s|r(s, u)}^{\theta_{n-1}^*}(\rvx_s)$ in \equref{eq:general-obj} if optimized on $s  \leq  u \leq r_{n-1}^{-1}(s,  c_0(s))$. On this interval, we can apply $r(s,\cdot)$ to the inequality and get $s = r(s,s) \leq r(s,u) \leq r(s, r_{n-1}^{-1}(s,  c_0(s)))=r_{n-2}^{-1}(s,  c_0(s))$ since $r(s,\cdot)$ is increasing. And by inductive assumption $p_{s|r(s, u)}^{\theta_{n-1}^*}(\rvx_s) = q_{s}(\rvx_s)$ for $s  \leq  r(s,u) \leq  r_{n-2}^{-1}(s,  c_0(s))$, this implies minimizing 
\begin{align*}
  \E_{s,u}\left[w(s,u)  \mathrm{MMD}^2( p_{s|r(s,u)}^{\theta_{n-1}^*}(\rvx_s), p_{s|u}^{\theta_n}(\rvx_s))\right]
\end{align*}
on $s  \leq  u \leq r_{n-1}^{-1}(s,  c_0(s))$ is equivalent to minimizing 
\begin{align}
    \E_{s,u}\left[w(s,u) \mathrm{MMD}^2(q_s(\rvx_s), p_{s|u}^{\theta_n}(\rvx_s)) \right]
\end{align}
for $s  \leq  u \leq r_{n-1}^{-1}(s,  c_0(s))$. \lemref{lem:minimizer} implies that its minimum achieves $p_{s|u}^{\theta_{n}^*}(\rvx_s) = q_{s}(\rvx_s)$.  

Lastly, taking $n\rightarrow\infty$ implies $\lim_{n\rightarrow \infty}r_n^{-1}(s, c_0(s)) = 1$ and thus the induction covers the entire $[s,1]$ interval given each $s$. Therefore,   $\lim_{n \rightarrow \infty}  \mathrm{MMD}( q_{s}(\rvx_s), p_{s|t}^{\theta_n^*}(\rvx_s) ) = 0$ for all $0\leq s \leq t \leq 1$.
\end{proof}

\subsection{Self-Consistency Implies Marginal Preservation}\label{app:margpres}

Without assuming marginal preservation, it is important to define the marginal distribution of $\rvx_s$ under generalized interpolants  $q_{s|t}(\rvx_s | \rvx, \rvx_t)$ as
\begin{align}\label{eq:target-dist}
    q_{s|t}(\rvx_s ) = \iint q_{s|t}(\rvx_s | \rvx, \rvx_t)q_t(\rvx|\rvx_t )q_t( \rvx_t)\ud\rvx_t\ud\rvx
\end{align}
and we show that with self-consistent interpolants, this distribution is invariant of $t$, \ie $q_{s|t}(\rvx_s) = q_s(\rvx_s)$.

\begin{restatable}{lemma}{margpres}\label{lem:margpres}
    If the interpolant $q_{s|t}(\rvx_s|\rvx,\rvx_t)$ is self-consistent, the marginal distribution $q_{s|t}(\rvx_s)$ as defined in \equref{eq:target-dist} satisfies $q_{s}(\rvx_s) = q_{s|t}(\rvx_s)$ for all $t\in [s, 1]$.
\end{restatable}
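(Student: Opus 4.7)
The plan is to bridge the stochastic-interpolant marginal $q_s(\rvx_s)$ and the generalized marginal $q_{s|t}(\rvx_s)$ by invoking self-consistency at the endpoint. Specifically, I would apply the self-consistency identity to the triple $(s, t, 1)$ (with $t$ now playing the role of the intermediate time $r$ in the definition), giving
\begin{align*}
    q_{s|1}(\rvx_s \mid \rvx, \eps) = \int q_{s|t}(\rvx_s \mid \rvx, \rvx_t)\, q_{t|1}(\rvx_t \mid \rvx, \eps)\, \ud \rvx_t.
\end{align*}
The boundary convention built into the definition of a generalized interpolant — namely $q_{t|1}(\rvx_t \mid \rvx, \eps) \equiv q_t(\rvx_t \mid \rvx, \eps)$ at $t=1$, and therefore also $q_{s|1}(\rvx_s \mid \rvx, \eps) = q_s(\rvx_s \mid \rvx, \eps)$ — lets me rewrite this as $q_s(\rvx_s \mid \rvx, \eps) = \int q_{s|t}(\rvx_s \mid \rvx, \rvx_t)\, q_t(\rvx_t \mid \rvx, \eps)\, \ud \rvx_t$.

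Next, I would multiply both sides by $q(\rvx)\,p(\eps)$ and integrate over $\rvx$ and $\eps$. The left side collapses to the definition of $q_s(\rvx_s)$ used in the paper. On the right, after swapping the order of integration via Fubini, the integral $\int q_t(\rvx_t \mid \rvx, \eps)\, q(\rvx)\, p(\eps)\, \ud \eps$ is exactly the joint $q_t(\rvx, \rvx_t)$ from \appref{app:background}, which I then factor as $q_t(\rvx \mid \rvx_t)\, q_t(\rvx_t)$ using the definition given just after the statement of marginal preservation. What remains matches the right-hand side of the definition of $q_{s|t}(\rvx_s)$, yielding $q_s(\rvx_s) = q_{s|t}(\rvx_s)$ for every $t \in [s,1]$.

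The only subtle step is the first one: although the self-consistency identity is stated symmetrically in the triple of times, what makes the argument go through is recognizing that taking the outer time to be $1$ is precisely what connects the generalized interpolant back to the original stochastic-interpolant marginal via the boundary condition $q_{t|1} \equiv q_t$. Once that specific instance is extracted, everything else is routine: Fubini, the decomposition $q_t(\rvx, \rvx_t) = q_t(\rvx \mid \rvx_t)\, q_t(\rvx_t)$, and matching against the definition. No additional regularity beyond integrability of the conditionals is needed, so I would not expect any obstacle there.
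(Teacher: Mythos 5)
Your proof is correct and is essentially the paper's argument read in the opposite direction: the paper starts from the definition of $q_{s|t}(\rvx_s)$, substitutes $q_t(\rvx|\rvx_t)q_t(\rvx_t) = \int q_t(\rvx_t|\rvx,\eps)q(\rvx)p(\eps)\,\ud\eps$, swaps the order of integration, and then invokes self-consistency at the triple $(s,t,1)$ together with the boundary convention $q_{s|1}=q_s$ to collapse to $q_s(\rvx_s)$; you instead begin with that same self-consistency instance at $(s,t,1)$ and integrate it against $q(\rvx)p(\eps)$. The key ingredients — self-consistency at the endpoint $t=1$, the convention $q_{t|1}\equiv q_t$, Fubini, and the factorization $q_t(\rvx,\rvx_t)=q_t(\rvx|\rvx_t)q_t(\rvx_t)$ — are identical.
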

\begin{proof}
For $t\in [s,1]$,
\begin{align}
    q_{s|t}(\rvx_s) &= \iint q_{s|t}(\rvx_s|\rvx, \rvx_t))q_t(\rvx|\rvx_t )q_t( \rvx_t)\ud\rvx_t\ud\rvx \\
                    &= \iint q_{s|t}(\rvx_s|\rvx, \rvx_t)) q_t( \rvx_t) \int \frac{q_{t}(\rvx_t|\rvx, \eps)q(\rvx) p(\eps)}{q_t( \rvx_t)} \ud\eps \ud\rvx_t\ud\rvx \\
                    &=  \iint q_{s|t}(\rvx_s|\rvx, \rvx_t)q(\rvx)\int q_{t}(\rvx_t|\rvx, \eps) p(\eps) \ud\eps \ud\rvx_t\ud\rvx\\
                    &=  \iint q(\rvx)p(\eps)\left(\int q_{s|t}(\rvx_s|\rvx, \rvx_t)q_{t}(\rvx_t|\rvx, \eps) \ud\rvx_t \right) \ud\eps \ud\rvx \\
                    &=  \iint q(\rvx)p(\eps)\left(\int q_{s|t}(\rvx_s|\rvx, \rvx_t)q_{t,1}(\rvx_t|\rvx, \eps) \rvx_t \right) \ud\eps \ud\rvx \\
                    &\stackrel{(a)}{=} \iint q_{s,1}(\rvx_s|\rvx, \eps)q(\rvx)p(\eps) \ud\eps \ud\rvx \\
                    &\stackrel{(b)}{=} \iint q_{s}(\rvx_{s}|\rvx, \eps)q(\rvx)p(\eps) \ud\eps \ud\rvx \\
                     & \, =  q_{s}(\rvx_s)
\end{align}
where $(a)$ uses definition of self-consistent interpolants and $(b)$ uses definition of our generalized interpolant.
\end{proof}


We show in \appref{app:ddiminterp} that DDIM is an example self-consistent interpolant. Furthermore, DDPM posteior~\citep{ho2020denoising,kingma2021variational} is also self-consistent (see \lemref{lem:ddpmpost}).
 
\subsection{Existence of Deterministic Minimizer}\label{app:existence}

We present the formal statement for the deterministic minimizer.
\begin{restatable}{proposition}{existence}  If for all $t\in [0,1]$, $s\in [0,t]$, $\gamma_{s|t}\equiv 0$, $\mI_{s|t}(\rvx, \rvx_t)$ is invertible \wrt $\rvx$, and there exists $C_1 < \infty$ such that $\norm{\mI_{t|1}(\rvx, \eps)} < C_1 \norm{\rvx - \eps}$, then there exists  a function  $h_{s|t}: \mathbb{R}^{D} \to \mathbb{R}^D$ such that
\begin{align}
    q_s(\rvx_s) = \iint q_{s|t}(\rvx_s | \rvx, \rvx_t)\delta(\rvx - h_{s|t}(\rvx_t)) q_t(\rvx_t)\ud\rvx \ud\rvx_t.
\end{align}
\end{restatable}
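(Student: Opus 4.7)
The plan is to reduce the claim to the existence of a deterministic transport between the marginals $q_t$ and $q_s$, and then use invertibility of $\mI_{s|t}$ in its first argument to pull this transport back through the interpolant. Concretely, since $\gamma_{s|t}\equiv 0$, the conditional interpolant collapses to $q_{s|t}(\rvx_s|\rvx,\rvx_t)=\delta(\rvx_s-\mI_{s|t}(\rvx,\rvx_t))$. Substituting the ansatz $\delta(\rvx-h_{s|t}(\rvx_t))$ for the clean model and integrating out $\rvx$, the claim reduces to finding $h_{s|t}$ such that the map $T_{s|t}(\rvx_t)\defeq \mI_{s|t}(h_{s|t}(\rvx_t),\rvx_t)$ is a deterministic transport with $(T_{s|t})_{\#}q_t=q_s$.

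I would then split the construction into two parts. First, I would establish existence of some deterministic map $T_{s|t}\colon\mathbb{R}^D\to\mathbb{R}^D$ that pushes $q_t$ forward to $q_s$. The natural candidate is the flow induced by integrating the probability-flow ODE associated to the interpolant: since the stochastic interpolant supplies a conditional velocity whose expectation $\E[\rvv_u\mid\rvx_u]$ generates the marginal $q_u$ for every $u\in[0,1]$, solving $\odv{\rvx_u}{u}=\E[\rvv_u\mid\rvx_u]$ backward from $u=t$ to $u=s$ yields a deterministic flow whose time-$s$ value has law $q_s$ when initialized at $\rvx_t\sim q_t$. The role of the growth bound $\lVert\mI_{t|1}(\rvx,\eps)\rVert<C_1\lVert\rvx-\eps\rVert$ is to ensure that the induced marginals $q_t$ are absolutely continuous (so conditionals are well-defined) and that this velocity field is regular enough for the flow to be well-posed; if preferred, one could instead cite Brenier's theorem to obtain a monotone transport directly.

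Second, given such a $T_{s|t}$, I would define $h_{s|t}(\rvx_t)$ as the unique $\rvx$ solving $\mI_{s|t}(\rvx,\rvx_t)=T_{s|t}(\rvx_t)$; this is well-defined pointwise in $\rvx_t$ because $\mI_{s|t}(\cdot,\rvx_t)$ is invertible by hypothesis. Plugging $h_{s|t}$ into the integral and using the distributional identity $\int\delta(\rvx_s-T_{s|t}(\rvx_t))q_t(\rvx_t)\,d\rvx_t=(T_{s|t})_{\#}q_t(\rvx_s)=q_s(\rvx_s)$ closes the argument.

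I expect step two (existence of $T_{s|t}$) to be the main obstacle, since everything else is essentially bookkeeping via deltas and the invertibility assumption. The clean way to handle it in this paper's framework is the PF-ODE construction, because it is intrinsic to the stochastic interpolant and sidesteps any discussion of convex potentials or cyclical monotonicity. A minor subtlety is to verify that the pushforward equality indeed holds under the regularity provided by the growth condition, and that measurability of $h_{s|t}$ is preserved under the inversion in the second step — both of which follow from standard arguments once $T_{s|t}$ and $\mI_{s|t}^{-1}(\cdot,\rvx_t)$ are Borel.
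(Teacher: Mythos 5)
Your proposal is correct and matches the paper's proof in all essentials: both construct the transport $T_{s|t}$ (the paper's $\hat h_{s|t}$) as the probability-flow ODE map from $q_t$ to $q_s$, using the growth bound on $\mI_{t|1}$ to guarantee well-posedness, and then define $h_{s|t}(\rvx_t) = \mI_{s|t}^{-1}(T_{s|t}(\rvx_t), \rvx_t)$ by the assumed invertibility. The only differences are cosmetic (you articulate the reduction to a pushforward condition and mention Brenier as a fallback, whereas the paper goes directly to the PF-ODE and cites \citet{albergo2023stochastic}).
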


\begin{proof} 
Let $\mI_{s|t}^{-1}(\cdot, \rvx_t)$ be the inverse of $\mI_{s|t}$ \wrt $\rvx$ such that $\mI_{s|t}^{-1}(\mI_{s|t}(\rvx, \rvx_t), \rvx_t) = \rvx$ and   $\mI_{s|t}(\mI_{s|t}^{-1}(\rvy, \rvx_t), \rvx_t) = \rvy$ for all $\rvx, \rvx_t, \rvy \in \R^D$. Since there exists $C_1 < \infty$  such that $\norm{\mI_{t|1}(\rvx, \eps)} < C_1 \norm{\rvx - \eps}$ for all $t\in [0,1]$, the PF-ODE of the original interpolant $\mI_{t|1}(\rvx, \eps) = \mI_t(\rvx, \eps)$ exists for all $t\in [0,1]$ \citep{albergo2023stochastic}. Then, for all  $t\in [0,1]$, $s\in [0,t]$, we let
\begin{align}
    \hat{h}_{s|t}(\rvx_t) = \rvx_t + \int_t^s \E_{\rvx,\eps}\left[\pdv{}{u} \mI_{u}(\rvx, \eps) \bigg| \rvx_u \right] \ud u,
\end{align}
which pushes forward the measure $q_t(\rvx_t)$ to $q_s(\rvx_s)$. We define:
\begin{align}
    h_{s|t}(\rvx_t) =\mI_{s|t}^{-1}(\hat{h}_{s|t}(\rvx_t), \rvx_t).
\end{align}
Then, since $\gamma_{s|t}\equiv 0$, $q_{s|t}(\rvx_s | \rvx, \rvx_t) = \delta(\rvx_s - \mI_{s|t}(\rvx, \rvx_t))$ where $\rvx\sim \delta(\rvx -  h_{s|t}(\rvx_t) )$. Therefore,
\begin{align}
    \rvx_s = \mI_{s|t}(h_{s|t}(\rvx_t), \rvx_t) = \mI_{s|t}(\mI_{s|t}^{-1}(\hat{h}_{s|t}(\rvx_t), \rvx_t), \rvx_t) = \hat{h}_{s|t}(\rvx_t)
\end{align}
whose marginal follows $q_s(\rvx_s)$ due to it being the result of PF-ODE trajectories starting from $q_t(\rvx_t)$.
\end{proof}

Concretely, DDIM interpolant satisfies all of the deterministic assumption, the regularity condition, and the invertibility assumption because it is a linear function of $\rvx$ and $\rvx_t$. Therefore, any diffusion or FM schedule with DDIM interpolant will enjoy a deterministic minimizer $p_{s|t}^\theta(\rvx | \rvx_t)$.

\section{Analysis of Simplified Parameterization}\label{app:analysis}

\subsection{DDIM Interpolant}\label{app:ddiminterp}
 
 We check that DDIM interpolant is self-consistent. By definition, $q_{s|t}(\rvx_s | \rvx, \rvx_t) = \delta(\rvx_s - \operatorname{DDIM}(\rvx_t, \rvx, t, s))$. We check that for all $s\leq r\leq t$,
\begin{align*}
    \int q_{s|r}(\rvx_s|\rvx, \rvx_r) q_{r|t}(\rvx_r|\rvx, \rvx_t) \ud\rvx_r &= \int  \delta(\rvx_s - \operatorname{DDIM}(\rvx_r, \rvx, r, s))  \delta(\rvx_r - \operatorname{DDIM}(\rvx_t, \rvx, t, r)) \ud\rvx_r \\
    &= \delta(\rvx_s - \operatorname{DDIM}(\operatorname{DDIM}(\rvx_t, \rvx, t, r), \rvx, r, s))
\end{align*}
where 
\begin{align*}
    \operatorname{DDIM}(\operatorname{DDIM}(\rvx_t, \rvx, t, r), \rvx, r, s) &= \alpha_s \rvx + (\sigma_s / \sigma_r) ([\alpha_r \rvx + (\sigma_r / \sigma_t) (\rvx_t - \alpha_t \rvx)] - \alpha_r \rvx) \\
    &= \alpha_s \rvx + (\sigma_s / \sigma_r) (\sigma_r / \sigma_t) (\rvx_t - \alpha_t \rvx)\\
    &=  \alpha_s \rvx +  (\sigma_s / \sigma_t)  (\rvx_t - \alpha_t \rvx)\\
    &= \operatorname{DDIM}(\rvx_t, \rvx, t, s)
\end{align*}
Therefore, $ \delta(\rvx_s - \operatorname{DDIM}(\operatorname{DDIM}(\rvx_t, \rvx, t, r), \rvx, r, s)) = \delta(\rvx_s -  \operatorname{DDIM}(\rvx_t, \rvx, t, s))$. So DDIM is self-consistent.

It also implies a Gaussian forward process $q_{t}(\rvx_t|\rvx)=\gN(\alpha_t\rvx,\sigma_t^2\sigma_d^2\emI)$ as in diffusion models. By definition,
\begin{align*}
    q_{t,1}(\rvx_t|\rvx) = q_{t}(\rvx_t|\rvx) &= \int q_{t}(\rvx_t|\rvx,\eps)p(\eps)\ud\eps 
\end{align*}
so that $\rvx_t$ is a deterministic transform given $\rvx$ and $\eps$, \ie, $\rvx_t =\operatorname{DDIM}(\eps, \rvx, t, 1) = \alpha_t\rvx + \sigma_t\eps$, which implies $q_{t}(\rvx_t|\rvx)=\gN(\alpha_t\rvx,\sigma_t^2\sigma_d^2\emI)$.

\subsection{Reusing $\rvx_t$ for $\rvx_r$}\label{app:reusext}

 We propose that instead of sampling $\rvx_r$ via forward flow $\alpha_r \rvx + \sigma_r \eps$ we reuse $\rvx_t$ such that $\rvx_r = \operatorname{DDIM}(\rvx_t, \rvx, r, t)$ to reduce variance. In fact, for any self-consistent interpolant, one can reuse $\rvx_t$ via $\rvx_r\sim q_{r|t}(\rvx_r|\rvx, \rvx_t)$ and $\rvx_r$ will follow $q_{r}(\rvx_r)$ marginally. We check
\begin{align*}
    q_{r}(\rvx_r) \stackrel{(a)}{=} q_{r|t}(\rvx_r) =  \iint q_{r|t}(\rvx_r | \rvx, \rvx_t) q_t(\rvx | \rvx_t)q_t(\rvx_t)  \ud\rvx \ud\rvx_t = \iint q_{r|t}(\rvx_r | \rvx, \rvx_t) \int q_t(\rvx_t | \rvx, \eps)q(\rvx)p(\eps)\ud\eps \ud\rvx    \ud\rvx_t  
\end{align*}
where $(a)$ is due to \lemref{lem:margpres}. We can see that sampling $\rvx, \rvx_t$ first then $\rvx_r\sim q_{r|t}(\rvx_r|\rvx, \rvx_t)$ respects the marginal distribution $q_{r}(\rvx_r)$.

\subsection{Simplified Objective}\label{app:simple-obj}

We derive our simplified objective. Given MMD defined in \equref{eq:mmd}, we write our objective as 
\begin{align}\label{eq:simple-obj-app}
    \gL_{\text{IMM}}(\theta) &= \E_{s,t}\Big[ w(s,t)\; \norm{ \E_{\rvx_t} [k(\vf_{s,t}^\theta(\rvx_t),\cdot)]  - \E_{\rvx_r} [k(\vf_{s,r}^{\theta^-}(\rvx_r), \cdot)] }_{\gH}^2 \Big]\\
                            &\stackrel{(a)}{=}  \E_{s,t}\Big[ w(s,t)\; \norm{ \E_{\rvx_t, \rvx_r} [k(\vf_{s,t}^\theta(\rvx_t),\cdot)   -  k(\vf_{s,r}^{\theta^-}(\rvx_r), \cdot)] }_{\gH}^2 \Big]  \\
                            &= \E_{s,t}\Big[ w(s,t)\;  \Big\langle\E_{\rvx_t, \rvx_r} [k(\vf_{s,t}^\theta(\rvx_t),\cdot)   -  k(\vf_{s,r}^{\theta^-}(\rvx_r), \cdot)] , \E_{\rvx_t', \rvx_r'} [k(\vf_{s,t}^\theta(\rvx_t'),\cdot)   -  k(\vf_{s,r}^{\theta^-}(\rvx_r'), \cdot)] \Big\rangle  \Big] \\
                            &= \E_{s,t}\Big[ w(s,t)\;\E_{\rvx_t, \rvx_r, \rvx_t', \rvx_r'} \Big[\langle k(\vf_{s,t}^\theta(\rvx_t),\cdot),  k(\vf_{s,t}^\theta(\rvx_t'),\cdot)\rangle + \langle k(\vf_{s,r}^{\theta^-}(\rvx_r),\cdot),  k(\vf_{s,r}^{\theta^-}(\rvx_r'),\cdot)\rangle \\\nonumber
                            & \qquad\qquad\qquad\qquad \qquad   - \langle k(\vf_{s,t}^\theta(\rvx_t),\cdot),  k(\vf_{s,r}^{\theta^-}(\rvx_r'),\cdot)\rangle -\langle k(\vf_{s,t}^\theta(\rvx_t'),\cdot),  k(\vf_{s,r}^{\theta^-}(\rvx_r),\cdot)\rangle \Big]\Big] \\
                            &= \E_{\rvx_t, \rvx_r, \rvx_t', \rvx_r',s,t}\Big[ w(s,t)\; \Big[  k(\vf_{s,t}^\theta(\rvx_t),\vf_{s,t}^\theta(\rvx_t'))  +  k(\vf_{s,r}^{\theta^-}(\rvx_r),\vf_{s,r}^{\theta^-}(\rvx_r'))  \\\nonumber
                            &\qquad\qquad\qquad\qquad \qquad    - k(\vf_{s,t}^\theta(\rvx_t),\vf_{s,r}^{\theta^-}(\rvx_r'))  - k(\vf_{s,t}^\theta(\rvx_t'),\vf_{s,r}^{\theta^-}(\rvx_r))  \Big] \Big]
\end{align}
where $\langle\cdot,\cdot\rangle$ is in RKHS, $(a)$ is due to the correlation between $\rvx_r$ and $\rvx_t$ by re-using $\rvx_t$. 

\subsection{Empirical Estimation}\label{app:emp-estimate}

As proposed in \citet{gretton2012kernel}, MMD is typically estimated with V-statistics by instantiating a matrix of size $M\times M$ such that a batch of $B$ 
 $\rvx$ samples, $\{x^{(i)}\}_{i=1}^B$, is separated into groups of $M$ (assume $B$ is divisible by $M$) particles $\{x^{(i,j)}\}_{i=1,j=1}^{B/M,M}$ where each group share a $(s^i,r^i,t^i)$ sample. The Monte Carlo estimate becomes
\begin{align}\label{eq:empirical-loss}
     \hat{\gL}_{\text{IMM}}(\theta) &= \frac{1}{B/M}   \sum_{i=1}^{B/M} w(s^i, t^i) \frac{1}{M^2} \sum_{j=1}^{M} \sum_{k=1}^{M} \Big[k(\vf_{s^i,t^i}^\theta(x_{t^i}^{(i,j)}),\vf_{s^i,t^i}^\theta(x_{t^i}^{(i,k)}))  +  k(\vf_{s^i,r^i}^{\theta^-}(x_{r^i}^{(i,j)}),\vf_{s^i,r^i}^{\theta^-}(x_{r^i}^{(i,k)}))  \\ \nonumber
      &\qquad\qquad\qquad\qquad \qquad \qquad \qquad  \qquad \qquad  \qquad \qquad    -2  k(\vf_{s^i,t^i}^\theta(x_{t^i}^{(i,j)}),\vf_{s^i,r^i}^{\theta^-}(x_{r^i}^{(i,k)}))   \Big]
\end{align}
\mypara{Computational efficiency.} First we note that regardless of $M$, we require only 2 model forward passes - one with and one without stop gradient, since the model takes in all $B$ instances together within the batch and produce outputs for the entire batch. For the calculation of our loss, although the need for $M$ particles may imply inefficient computation, the cost of this matrix computation is negligible in practice compared to the complexity of model forward pass. Suppose a forward pass for a single instance is $\gO(K)$, then the total computation for computation loss for a batch of $B$ instances is $\gO(BK) + \gO(BM)$. Deep neural networks often has $K\gg M$, so $\gO(BK)$ dominates the computation.

\subsection{Simplified Parameterization}\label{app:simple-f}

\if\arxiv1

We derive $\vf_{s,t}^\theta(\rvx_t)$ for each parameterization, which now generally follows the form 
\begin{align*}
    \vf_{s,t}^\theta(\rvx_t) = c_\text{skip}(s,t) \rvx_t + c_\text{out}(s,t)\mG_\theta(c_\text{in}(s,t)\rvx_t, c_\text{noise}(s), c_\text{noise}(t))
\end{align*}

\else
\mypara{Flow trajectories.} We investigate the two most used flow trajectories~\citep{nichol2021improved,lipman2022flow},
\begin{itemize}[leftmargin=*]
    \item \textbf{Cosine}. $\alpha_t = \cos(\frac{1}{2}\pi t)$, $\sigma_t = \sin(\frac{1}{2}\pi t)$. 
    \item \textbf{OT-FM}.  $\alpha_t = 1- t$, $\sigma_t = t$. 
\end{itemize}

\mypara{Network $\vg_\theta(\rvx_t, s, t)$.} We let $\vg_\theta(\rvx_t, s, t) = c_{\text{skip}}(t)\rvx_t + c_{\text{out}}(t) \mG_\theta(c_{\text{in}}(t)\rvx_t, c_{\text{noise}}(s), c_{\text{noise}}(t))$ with a neural network $\mG_\theta$, following EDM \citep{karras2022elucidating}.  For all choices we let $c_{\text{in}}(t) = 1/\sqrt{\alpha_t^2 + \sigma_t^2}/\sigma_d$~\citep{lu2024simplifying}. Listed below are valid choices for other coefficients.
\begin{itemize}[leftmargin=*]
    \item \textbf{Identity}. $c_{\text{skip}}(t) = 0$, $c_{\text{out}}(t) = 1$. 
    \item \textbf{Simple-EDM} \citep{lu2024simplifying}.  $c_{\text{skip}}(t) = \alpha_t / (\alpha_t^2 + \sigma_t^2)$, $c_{\text{out}}(t) = -  \sigma_d \sigma_t / \sqrt{\alpha_t^2 + \sigma_t^2}$. 
    \item \textbf{Euler-FM}. $c_{\text{skip}}(t) = 1$, $c_{\text{out}}(t) = - t\sigma_d$. This is specific to OT-FM schedule.
\end{itemize}  
We show below that $\vf_{s,t}^\theta(\rvx_t)$ similarly follows the EDM parameterization of the form $\vf_{s,t}^\theta(\rvx_t) = c_{\text{skip}}(s,t)\rvx_t + c_{\text{out}}(s,t) \mG_\theta(c_{\text{in}}(t)\rvx_t, c_{\text{noise}}(s), c_{\text{noise}}(t))$. 

\fi

\mypara{Identity.} This is simply DDIM with $x$-prediction network.
\begin{align*}
    \vf_{s,t}^\theta(\rvx_t) = ( \alpha_s -  \frac{\sigma_s}{\sigma_t} \alpha_t)  \mG_\theta\left( \frac{\rvx_t}{\sigma_d\sqrt{\alpha_t^2 + \sigma_t^2}}, c_{\text{noise}}(s), c_{\text{noise}}(t)\right) +   \frac{\sigma_s}{\sigma_t}  \rvx_t
\end{align*}

\mypara{Simple-EDM.} 
\begin{align*}
     \vf_{s,t}^\theta(\rvx_t) &= ( \alpha_s -  \frac{\sigma_s}{\sigma_t} \alpha_t) \left[ \frac{ \alpha_t }{\alpha_t^2 + \sigma_t^2} \rvx_t -  \frac{ \sigma_t }{\sqrt{\alpha_t^2 + \sigma_t^2}}   \sigma_d   \mG_\theta\left( \frac{\rvx_t}{\sigma_d\sqrt{\alpha_t^2 + \sigma_t^2}}, c_{\text{noise}}(s), c_{\text{noise}}(t)\right)  \right]+   \frac{\sigma_s}{\sigma_t}  \rvx_t \\
     &=   \frac{ \alpha_s \alpha_t + \sigma_s \sigma_t }{\alpha_t^2 + \sigma_t^2} \rvx_t - \frac{\alpha_s \sigma_t -\sigma_s \alpha_t}{\sqrt{\alpha_t^2 + \sigma_t^2}} \sigma_d   \mG_\theta\left( \frac{\rvx_t}{\sigma_d\sqrt{\alpha_t^2 + \sigma_t^2}}, c_{\text{noise}}(s), c_{\text{noise}}(t)\right)
\end{align*}
When noise schedule is cosine, $ \vf_{s,t}^\theta(\rvx_t) = \cos(\frac{1}{2}\pi (t-s)) \rvx_t - \sin(\frac{1}{2}\pi (t-s)) \sigma_d \mG_\theta\left( \frac{\rvx_t}{\sigma_d\sqrt{\alpha_t^2 + \sigma_t^2}}, c_{\text{noise}}(s), c_{\text{noise}}(t)\right) $. And similar to \citet{lu2024simplifying}, we can show that predicting $\rvx_s = \operatorname{DDIM}(\rvx_t, \rvx, s, t)$ with $\gL_2$ loss is equivalent to $v$-prediction with cosine schedule.
\begin{align*}
   &  w(s,t) \norm{ \vf_{s,t}^\theta(\rvx_t) - \left( ( \alpha_s -  \frac{\sigma_s}{\sigma_t} \alpha_t) \rvx +   \frac{\sigma_s}{\sigma_t}  \rvx_t  \right)}^2 \\
    &=   w(s,t) \norm{ \frac{ \alpha_s \alpha_t + \sigma_s \sigma_t }{\alpha_t^2 + \sigma_t^2} \rvx_t - \frac{\alpha_s \sigma_t -\sigma_s \alpha_t}{\sqrt{\alpha_t^2 + \sigma_t^2}} \sigma_d   \mG_\theta\left( \frac{\rvx_t}{\sigma_d\sqrt{\alpha_t^2 + \sigma_t^2}}, c_{\text{noise}}(s), c_{\text{noise}}(t)\right) - \left( ( \alpha_s -  \frac{\sigma_s}{\sigma_t} \alpha_t) \rvx +   \frac{\sigma_s}{\sigma_t}  \rvx_t  \right)}^2 \\
    &=  \Tilde{w}(s,t) \norm{ \sigma_d \mG_\theta\left( \frac{\rvx_t}{\sigma_d\sqrt{\alpha_t^2 + \sigma_t^2}}, c_{\text{noise}}(s), c_{\text{noise}}(t)\right) - \underbrace{\left( -\frac{\sqrt{\alpha_t^2 + \sigma_t^2}}{ \alpha_s \sigma_t -\sigma_s \alpha_t}  \right) \left( ( \alpha_s -  \frac{\sigma_s}{\sigma_t} \alpha_t) \rvx +   \frac{\sigma_s}{\sigma_t}  \rvx_t -  \frac{ \alpha_s \alpha_t + \sigma_s \sigma_t }{\alpha_t^2 + \sigma_t^2} \rvx_t \right)}_{\mG_{\text{target}}} }^2  \\
\end{align*}
where 
\begin{align*}
    \mG_{\text{target}} &=  \left( -\frac{\sqrt{\alpha_t^2 + \sigma_t^2}}{ \alpha_s \sigma_t -\sigma_s \alpha_t }  \right) \left( ( \alpha_s -  \frac{\sigma_s}{\sigma_t} \alpha_t) \rvx +   \frac{\sigma_s}{\sigma_t}  \rvx_t -  \frac{ \alpha_s \alpha_t + \sigma_s \sigma_t }{\alpha_t^2 + \sigma_t^2} \rvx_t \right)\\
    &=  \left( -\frac{\sqrt{\alpha_t^2 + \sigma_t^2}}{ \alpha_s \sigma_t -\sigma_s \alpha_t }  \right) \left( ( \alpha_s -  \frac{\sigma_s}{\sigma_t} \alpha_t) \rvx +  \frac{  \sigma_s \alpha_t^2 + \cancel{\sigma_s  \sigma_t^2} -  \alpha_s \alpha_t \sigma_t -\cancel{ \sigma_s \sigma_t^2}}{\sigma_t (\alpha_t^2 + \sigma_t^2)}  (\alpha_t \rvx + \sigma_t \eps)   \right) \\
    &=  \left( -\frac{\sqrt{\alpha_t^2 + \sigma_t^2}}{ \alpha_s \sigma_t -\sigma_s \alpha_t }  \right)  \left(   \frac{( \alpha_s \sigma_t -  \sigma_s \alpha_t)  (\alpha_t^2 + \sigma_t^2) + \sigma_s \alpha_t^3  -  \alpha_s \alpha_t^2 \sigma_t  }{\sigma_t (\alpha_t^2 + \sigma_t^2)}  \rvx +  \frac{  \sigma_s \alpha_t^2  -  \alpha_s \alpha_t \sigma_t }{\alpha_t^2 + \sigma_t^2 }   \eps   \right) \\
    &=  \left( -\frac{\sqrt{\alpha_t^2 + \sigma_t^2}}{ \alpha_s \sigma_t -\sigma_s \alpha_t }  \right)  \left(   \frac{( \alpha_s \sigma_t -  \sigma_s \alpha_t)  (\alpha_t^2 + \sigma_t^2) - ( \alpha_s \sigma_t -  \sigma_s \alpha_t) \alpha_t^2   }{ \alpha_t^2 + \sigma_t^2} \rvx -  \frac{  ( \alpha_s \sigma_t -  \sigma_s \alpha_t) \alpha_t }{\sigma_t (\alpha_t^2 + \sigma_t^2) }   \eps   \right) \\
    &= \frac{ \alpha_t\eps - \sigma_t\rvx }{\sqrt{\alpha_t^2 + \sigma_t^2}}
\end{align*}
This reduces to $v$-target if cosine schedule is used, and it deviates from $v$-target if FM schedule is used instead.

\mypara{Euler-FM.} We assume OT-FM schedule.
\begin{align*}
     \vf_{s,t}^\theta(\rvx_t) &= ((1-s) - \frac{s}{t}(1-t) ) \left[ \rvx_t - t \sigma_d  \mG_\theta\left( \frac{\rvx_t}{\sigma_d\sqrt{\alpha_t^2 + \sigma_t^2}}, c_{\text{noise}}(s), c_{\text{noise}}(t)\right)  \right] + \frac{s}{t} \rvx_t \\
     &= ((1 - \frac{s}{t}  ) \left[ \rvx_t - t \sigma_d  \mG_\theta\left( \frac{\rvx_t}{\sigma_d\sqrt{\alpha_t^2 + \sigma_t^2}}, c_{\text{noise}}(s), c_{\text{noise}}(t)\right)  \right] + \frac{s}{t} \rvx_t \\
     &= \rvx_t - (t-s)\sigma_d  \mG_\theta\left( \frac{\rvx_t}{\sigma_d\sqrt{\alpha_t^2 + \sigma_t^2}}, c_{\text{noise}}(s), c_{\text{noise}}(t)\right)
\end{align*}
This results in Euler ODE from $\rvx_t$ to $\rvx_s$. We also show that the network output reduces to $v$-prediction if matched with $\rvx_s =  \operatorname{DDIM}(\rvx_t, \rvx, s, t)$.  To see this,
\begin{align*}
    &w(s,t)\norm{ \vf_{s,t}^\theta(\rvx_t) -  \left( ( (1-s) -  \frac{s}{t} (1-t)) \rvx +   \frac{s}{t}  \rvx_t  \right)}^2 \\
    &=  w(s,t)\norm{ \vf_{s,t}^\theta(\rvx_t) -  \left( ( 1-  \frac{s}{t})  \rvx +   \frac{s}{t}  \rvx_t  \right)}^2  \\
    &= w(s,t)\norm{\rvx_t - (t-s)\sigma_d  \mG_\theta\left( \frac{\rvx_t}{\sigma_d\sqrt{\alpha_t^2 + \sigma_t^2}}, c_{\text{noise}}(s), c_{\text{noise}}(t) \right) -  \left( ( 1-  \frac{s}{t})  \rvx +   \frac{s}{t}  \rvx_t  \right)}^2  \\
    &= \Tilde{w}(s,t) \norm{ \sigma_d  \mG_\theta\left( \frac{\rvx_t}{\sigma_d\sqrt{\alpha_t^2 + \sigma_t^2}}, c_{\text{noise}}(s), c_{\text{noise}}(t) \right) -\left(- \frac{1}{(t-s)} \right)\left( ( 1-  \frac{s}{t})  \rvx +  ( \frac{s}{t} - 1) \rvx_t  \right)}^2  \\
    &=  \Tilde{w}(s,t) \norm{ \sigma_d  \mG_\theta\left( \frac{\rvx_t}{\sigma_d\sqrt{\alpha_t^2 + \sigma_t^2}}, c_{\text{noise}}(s), c_{\text{noise}}(t) \right) -\underbrace{\left(- \frac{1}{(t-s)} \right)\left( ( 1-  \frac{s}{t})  \rvx +  ( \frac{s}{t} - 1) \rvx_t  \right)}_{\mG_{\text{target}}}}^2  \\
\end{align*}
where 
\begin{align*}
    \mG_{\text{target}} &= \left(- \frac{1}{(t-s)} \right)\left( ( 1-  \frac{s}{t})  \rvx +  ( \frac{s}{t} - 1) ((1-t)\rvx + t\eps) \right) \\
    &=  \left(- \frac{1}{(t-s)} \right)\left( ( 1-  \frac{s}{t})  \rvx +  ( \frac{s}{t} - s -1 + t )\rvx  + (s-t)\eps \right)  \\
    &=  \left(- \frac{1}{(t-s)} \right)\left(  (  t  - s   )\rvx  + (s-t)\eps \right)  \\
    &= \eps - \rvx
\end{align*}
which is $v$-target under OT-FM schedule. This parameterization naturally allows zero-SNR sampling and satisfies boundary condition at $s=0$, similar to Simple-EDM above. This is not true for Identity parametrization using $\mG_\theta$ as it satisfies boundary condition only at $s>0$.

\subsection{Mapping Function $r(s,t)$}\label{app:mapping}

We discuss below the concrete choices for $r(s,t)$. We use a constant decrement $\epsilon > 0$ in different spaces.  

\mypara{Constant decrement in $\eta(t) \defeq \eta_t = \sigma_t / \alpha_t$.} This is the choice that we find to work better than other choices in practice. First, let its inverse be $\eta^{-1}(\cdot)$,
\begin{align*}
    r(s,t) = \max \left( s, \eta^{-1}( \eta(t) - \epsilon ) \right)
\end{align*}
We choose $\epsilon = (\eta_\text{max} -\eta_\text{min} ) / 2^k$ for some $k$. We generally choose $\eta_\text{max} \approx 160$ and $\eta_\text{min} \approx 0$. We find $k=\{10, \dots, 15\}$ works well enough depending on datasets.

\mypara{Constant decrement in $t$.} 
\begin{align*}
    r(s,t) = \max \left( s, t - \epsilon  \right)
\end{align*}
We choose $\epsilon = (T - \epsilon) / 2^k$.

\mypara{Constant decrement in} $ \lambda(t) \defeq \text{log-SNR}_t =2  \log (\alpha_t / \sigma_t)$.  

Let its inverse be $\lambda^{-1}(\cdot)$, then
\begin{align*}
    r(s,t) = \max \left( s,  \lambda^{-1}(\lambda(t)  - \epsilon )) \right)
\end{align*}
We choose $\epsilon = (\lambda_\text{max} - \lambda_\text{min}) / 2^k$. This choice comes close to the first choice, but we refrain from this because $r(s,t)$ becomes close to $t$ both when $t\approx 0$ and $t\approx 1$ instead of just $t\approx 1$. This gives more chances for training instability than the first choice.

\mypara{Constant increment in $1 / \eta(t)$.}  
\begin{align*}
    r(s,t) = \max \left( s, \eta^{-1}\left(\frac{1}{1 / \eta(t) + \epsilon} \right) \right)
\end{align*}
We choose $\epsilon = (1 / \eta(t)_\text{min} - 1 / \eta(t)_\text{max}) / 2^k$.

\subsection{Time Distribution $p(s,t)$}\label{app:time}

In all cases we choose $p(t) = \gU(\epsilon, T)$  and $p(s|t) = \gU(\epsilon, t) $ for some $\epsilon \ge 0$ and $T\le 1$. The decision for time distribution is coupled with $r(s,t)$. We list the constraints on $p(s,t)$ for each $r(s,t)$ choice below.

\mypara{Constant decrement in $\eta(t)$.} We need to choose $T<1$ because, for example, assuming OT-FM schedule, $\eta_t = t/(1-t)$, one can observe that constant decrement in $\eta_t$ when $t\approx 1$ results in $r(s,t)$ that is too close to $t$ due to $\eta_t$'s exploding gradient around $1$. We need to define $T < 1$ such that $r(s,T)$ is not too close to $T$ for $s$ reasonably far away. With $\eta_\text{max} \approx 160$, we can choose   $T= 0.994$ for OT-FM and $T= 0.996$ for VP-diffusion. 

\mypara{Constant decrement in $t$.} No constraints needed. $T = 1$, $\epsilon=0$.

\mypara{Constant decrement in $\lambda_t$}. 
One can similarly observe exploding gradient causing $r(s,t)$ to be too close to $t$ at both $t\approx 0$ and $t\approx 1$, so we can choose $\epsilon > 0$, \eg $0.001$, in addition to choosing $T= 0.994$ for OT-FM and $T= 0.996$ for VP-diffusion. 

\mypara{Constant increment in $1/\eta_t$.}
This experience exploding gradient for $t\approx 0$, so we require $\epsilon > 0$, \eg $0.005$. And $T=1$.

\subsection{Kernel Function }\label{app:kernel} 

For our Laplace kernel $k(x,y) = \exp(-\Tilde{w}(s,t) \max(\norm{x-y}_2, \epsilon) / D)$, we let $\epsilon >0$ be a reasonably small constant, \eg $10^{-8}$. Looking at its gradient \wrt $x$,
\begin{align*}
    \gradnd{e^{-\Tilde{w}(s,t) \max(\norm{x-y}_2, \epsilon) / D}}{x} &= \begin{cases}
       - \frac{\Tilde{w}(s,t)}{D} e^{-\Tilde{w}(s,t)\norm{x-y}_2 / D}\frac{x-y}{\norm{x-y}_2}, & \text{if}\ \norm{x-y}_2 > \epsilon \\
      0, & \text{otherwise}
    \end{cases} 
\end{align*}
one can notice that the gradient is self-normalized to be a unit vector, which is helpful in practice. In comparison, the gradient of RBF kernel of the form  $k(x,y) = \exp(- \frac{1}{2}\Tilde{w}(s,t) \norm{x-y}^2 / D)$,
\begin{equation}
     \gradnd{e^{-\Tilde{w}(s,t) \norm{x-y}^2 / D}}{x} = - \frac{\Tilde{w}(s,t)}{D} e^{-\Tilde{w}(s,t) \frac{1}{2} \norm{x-y}^2  / D} (x-y )
\end{equation} 
whose magnitude can vary a lot depending on how far $x$ is from $y$.

For $\Tilde{w}(s,t)$, we find it helpful to write out the $\gL_2$ loss between the arguments. For simplicity, we denote $\hat{\mG}_\theta(\rvx_t, s,t) =  \mG_\theta\left( \frac{\rvx_t}{\sigma_d\sqrt{\alpha_t^2 + \sigma_t^2}}, c_{\text{noise}}(s), c_{\text{noise}}(t) \right) $
\begin{align*}
    &\Tilde{w}(s,t)\norm{\vf_{s,t}^\theta(\rvx_t) - \vf_{s,r}^{\theta^-}(\rvx_r') }_2 \\
    &= \Tilde{w}(s,t)\norm{c_\text{skip}(s,t)\rvx_t + c_\text{out}(s,t)\hat{\mG}_\theta(\rvx_t, s,t) -  \left(c_\text{skip}(s,r)\rvx_r' + c_\text{out}(s,r)\hat{\mG}(\rvx_r', s,r) \right) }_2 \\
    &=  \Tilde{w}(s,t)c_\text{out}(s,t) \norm{ \hat{\mG}_\theta(\rvx_t, s,t) -  \frac{1}{c_\text{out}(s,t)}\left(c_\text{skip}(s,r)\rvx_r' + c_\text{out}(s,r)\hat{\mG}(\rvx_r', s,r) - c_\text{skip}(s,t)\rvx_t \right) }_2 
\end{align*}
We simply set $\Tilde{w}(s,t) = 1/ c_\text{out}(s,t)$ for the overall weighting to be $1$. This allows invariance of magnitude of kernels \wrt $t$.

\subsection{Weighting Function $w(s,t)$}\label{app:weight}

To review VDM~\citep{kingma2021variational}, the negative ELBO loss for diffusion model is
\begin{align}
    \gL_{\text{ELBO}}(\theta) = \frac{1}{2}  \E_{\rvx,\eps,t}\left[   \left(-\frac{\ud}{\ud t} \lambda_t\right) \norm{\eps_\theta(\rvx_t, t) - \eps}^2\right]
\end{align}
where $\eps_\theta$ is the noise-prediction network and $\lambda_t = \text{log-SNR}_t$. The weighted-ELBO loss proposed in \citet{kingma2024understanding} introduces an additional weighting function $w(t)$ monotonically increasing in $t$ (monotonically decreasing in $\text{log-SNR}_t$) understood as a form of data augmentation. Specifically, they use sigmoid as the function such that the weighted ELBO is written as
\begin{align}
    \gL_{\text{$w$-ELBO}}(\theta) &= \frac{1}{2}   \E_{\rvx,\eps,t}\left[  \sigma(b -  \lambda_t)  \left(-\odv{}{t} \lambda_t\right) \norm{\eps_\theta(\rvx_t, t) - \eps}^2\right] 
\end{align} 
where $\sigma(\cdot)$ is sigmoid function.


The $\alpha_t$ is tailored towards the Simple-EDM and Euler-FM parameterization as we have shown in \appref{app:simple-f} that the networks $\sigma_d \mG_\theta$ amounts to $v$-prediction in cosine and OT-FM schedules. Notice that ELBO diffusion loss matches $\eps$ instead of $\rvv$. Inspecting the gradient of Laplace kernel, we have (again, for simplicity we let $\hat{\mG}_\theta(\rvx_t, s,t) =  \mG_\theta( \frac{\rvx_t}{\sigma_d\sqrt{\alpha_t^2 + \sigma_t^2}}, c_{\text{noise}}(s), c_{\text{noise}}(t) ) $)
\begin{align*}
     &\qquad \pdv{}{\theta} e^{-\Tilde{w}(s,t)\norm{\vf_{s,t}^\theta(\rvx_t)-\vf_{s,r}^{\theta^-}(\rvx_r)}_2 / D} \\
     &= -\frac{\Tilde{w}(s,t)}{D} e^{-\Tilde{w}(s,t)\norm{\vf_{s,t}^\theta(\rvx_t)-\vf_{s,r}^{\theta^-}(\rvx_r)}_2 / D}\frac{\vf_{s,t}^\theta(\rvx_t)-\vf_{s,r}^{\theta^-}(\rvx_r)}{\norm{\vf_{s,t}^\theta(\rvx_t)-\vf_{s,r}^{\theta^-}(\rvx_r)}_2} \pdv{}{\theta} \vf_{s,t}^\theta(\rvx_t) \\
     &= - \frac{\Tilde{w}(s,t)}{D} e^{-\Tilde{w}(s,t)\norm{\vf_{s,t}^\theta(\rvx_t)-\vf_{s,r}^{\theta^-}(\rvx_r)}_2 / D}\frac{\hat{\mG}_\theta(\rvx_t,s,t)- \hat{\mG}_{\text{target}}}{\norm{ \hat{\mG}_\theta(\rvx_t,s,t)- \hat{\mG}_{\text{target}} }_2}  \pdv{}{\theta} \vf_{s,t}^\theta(\rvx_t)
\end{align*}
for some constant $\hat{\mG}_{\text{target}}$. We can see that gradient $\pdv{}{\theta} \vf_{s,t}^\theta(\rvx_t)$ is guided by vector $ \hat{\mG}_\theta(\rvx_t,s,t)- \hat{\mG}_{\text{target}}$. Assuming $ \hat{\mG}_\theta(\rvx_t,s,t) $ is $v$-prediction,  as is the case for  Simple-EDM parameterization with cosine schedule and Euler-FM parameterization with OT-FM schedule, we can reparameterize  $v$- to $\epsilon$-prediction with $\eps_\theta $ as the new parameterization. We omit arguments to network for simplicity.

We show below that for both cases $\eps_\theta - \eps_\text{target} = \alpha_t (\hat{\mG}_\theta -\hat{\mG}_\text{target} )$ for some constants $ \eps_\text{target}$ and $\hat{\mG}_\text{target}$. For Simple-EDM, we know $x$-prediction from $v$-prediction parameterization~\citep{salimans2022progressive}, $\rvx_\theta = \alpha_t\rvx_t - \sigma_t \hat{\mG}_\theta$, and we also know $x$-prediction from $\epsilon$-prediction, $\rvx_\theta = (\rvx_t - \sigma_t \eps_\theta) / \alpha_t$. We have
\begin{align}
    &\qquad \quad \frac{\rvx_t - \sigma_t \eps_\theta}{\alpha_t} =  \alpha_t\rvx_t - \sigma_t \hat{\mG}_\theta \\
    &\iff \rvx_t - \sigma_t \eps_\theta = \alpha_t^2 \rvx_t - \alpha_t\sigma_t \hat{\mG}_\theta \\
    &\iff (1 - \alpha_t^2)\rvx_t + \alpha_t\sigma_t \hat{\mG}_\theta =  \sigma_t \eps_\theta \\
    &\iff \sigma_t^2\rvx_t + \alpha_t\sigma_t \hat{\mG}_\theta =  \sigma_t \eps_\theta \\
    &\iff   \eps_\theta = \sigma_t\rvx_t + \alpha_t \hat{\mG}_\theta\\
    &\iff  \eps_\theta - \eps_\text{target} =  \sigma_t\rvx_t + \alpha_t \hat{\mG}_\theta - \left( \sigma_t\rvx_t + \alpha_t \hat{\mG}_\text{target}\right) \\
    &\iff   \eps_\theta - \eps_\text{target} = \alpha_t (\hat{\mG}_\theta -\hat{\mG}_\text{target} )
\end{align}
For Euler-FM,  we know $x$-prediction from $v$-prediction parameterization, $\rvx_\theta = \rvx_t - t \hat{\mG}_\theta$ and we also know $x$-prediction from $\epsilon$-prediction, $\rvx_\theta = (\rvx_t - t \eps_\theta) / (1-t)$. We have
\begin{align}
    &\qquad \quad \frac{\rvx_t - t \eps_\theta}{1-t} =  \rvx_t - t \hat{\mG}_\theta \\
    &\iff \rvx_t - t \eps_\theta = (1-t)\rvx_t - t(1-t)\hat{\mG}_\theta \\
    &\iff t\rvx_t + t(1-t)\hat{\mG}_\theta = t \eps_\theta \\ 
    &\iff   \eps_\theta = \rvx_t + (1-t) \hat{\mG}_\theta \\
    &\iff   \eps_\theta = \rvx_t + \alpha_t \hat{\mG}_\theta \\
    &\iff  \eps_\theta - \eps_\text{target} =  \rvx_t + \alpha_t \hat{\mG}_\theta - \left( \rvx_t + \alpha_t \hat{\mG}_\text{target}\right) \\
    &\iff   \eps_\theta - \eps_\text{target} = \alpha_t (\hat{\mG}_\theta -\hat{\mG}_\text{target} )
\end{align}
In both cases, $( \hat{\mG}_\theta(\rvx_t,s,t)- \hat{\mG}_{\text{target}})$ can be rewritten to $(\eps_\theta(\rvx_t,s,t) - \eps_\text{target})$ by multiplying a factor $\alpha_t$, and the guidance vector now matches that of the ELBO-diffusion loss. Therefore, we are motivated to incorporate $\alpha_t$ into $w(s,t)$ as proposed. 

The exponent $a$ for $\alpha_t^a$ takes a value of either 1 or 2. We explain the reason for each decision here. When $a=1$, we guide the gradient $\pdv{}{\theta} \vf_{s,t}^\theta(\rvx_t)$, with score difference $(\eps_\theta(\rvx_t,s,t) - \eps_\text{target})$. To motivate $a=2$, we first note that the weighted gradient $$\Tilde{w}(s,t) \pdv{}{\theta} \vf_{s,t}^\theta(\rvx_t) = \frac{1}{\abs{c_\text{out}(s,t)}}\pdv{}{\theta} \vf_{s,t}^\theta(\rvx_t) = \pm \pdv{}{\theta} \hat{\mG}_\theta(\rvx_t, s, t)$$ and as shown above that $\epsilon$-prediction is parameterized as $$ \eps_\theta(\rvx_t, s, t) = \rvx_t + \alpha_t \hat{\mG}_\theta(\rvx_t, s, t).$$ Multiplying an additional $\alpha_t$ to $\pdv{}{\theta} \hat{\mG}_\theta(\rvx_t, s, t)$ therefore implicitly reparameterizes our model into an $\epsilon$-prediction model. The case of $a=2$ therefore implicitly reparameterizes our model into an $\epsilon$-prediction model guided by the score difference $(\eps_\theta(\rvx_t,s,t) - \eps_\text{target})$. Empirically, $\alpha_t^2$ additionally downweights loss for larger $t$ compared to $\alpha_t$, allowing the model to train on smaller time-steps more effectively.

Lastly, the division of $\alpha_t^2 + \sigma_t^2$ is inspired by the increased weighting for middle time-steps~\citep{esser2024scaling} for Flow Matching training. This is purely an empirical decision.

\section{Training Algorithm}\label{app:training-algo}

We present the training algorithm in \algref{alg:training-algo}.

\begin{algorithm}[t]
\caption{IMM Training}\label{alg:training-algo}
\begin{algorithmic}
\STATE {\bf Input:} model $\vf^\theta$, data distribution $q(\rvx)$ and label distribution $q(\rvc | \rvx)$ (if label is used),  prior distribution $\gN(0, \sigma_d^2\emI)$, time distribution $p(t)$ and $p(s|t)$, DDIM interpolator $\operatorname{DDIM}(\rvx_t, \rvx, s, t)$ and its flow coefficients $\alpha_t, \sigma_t$, mapping function $r(s,t)$, kernel function $k(\cdot,\cdot)$, weighting function $w(s,t)$, batch size $B$, particle number $M$, label dropout probability $p$
\STATE {\bf Output:} learned model $\vf^\theta$  
\STATE Initialize $n\leftarrow 0, \theta^0\leftarrow \theta$
\WHILE{model not converged}
\STATE Sample a batch of data, label, and prior, and split into $B/M$ groups, $\{(x^{(i,j)}, c^{(i,j)}, \epsilon^{(i,j)})\}_{i=1,j=1}^{B/M,M}$
\STATE For each group, sample $\{(s^i, t^i)\}_{i=1}^{B/M}$ and  $r^i = r(s^i, t^i)$ for each $i$. This results in a tuple  $\{(s^i, r^i, t^i)\}_{i=1}^{B/M}$
\STATE   $x_{t^i}^{(i,j)}  \leftarrow \operatorname{DDIM}(\epsilon^{(i,j)}, x^{(i,j)}, t^i, 1) = \alpha_{t^i} x^{(i,j)} + \sigma_{t^i} \epsilon^{(i,j)}$, $\forall (i,j)$ 
\STATE  $x_{r^i}^{(i,j)} \leftarrow \operatorname{DDIM}(x_{t^i}^{(i,j)}, x^{(i,j)}, r^i, t^i)$, $\forall (i,j)$ 
\STATE (Optional) Randomly drop each label $c^{(i,j)}$ to be null token $\varnothing$ with probability $p$
\STATE $\theta_{n+1} \leftarrow $ optimizer step by minimizing $\hat{\gL}_\text{IMM}(\theta_n)$ using model $\vf ^{\theta_n}$   (see \equref{eq:empirical-loss}) (optionally inputting $c^{(i,j)}$ into network)
\ENDWHILE
\end{algorithmic}
\end{algorithm}

\section{Classifier-Free Guidance}\label{app:cfg}

We refer readers to \appref{app:simple-f} for analysis of each parameterization. Most notably, the network $\mG_\theta$ in both (1) Simple-EDM with cosine diffusion schedule and (2) Euler-FM with OT-FM schedule are equivalent to $v$-prediction parameterization in diffusion~\citep{salimans2022progressive} and FM~\citep{lipman2022flow}. When conditioned on label $\rvc$ during sampling, it is customary to use classifier-free guidance to reweight this $v$-prediction network via
\begin{align}
    &\quad \mG_\theta^w(c_\text{in}(t)\rvx_t, c_\text{noise}(s),  c_\text{noise}(t), \rvc) \\\nonumber
    &= w\mG_\theta(c_\text{in}(t)\rvx_t, c_\text{noise}(s),  c_\text{noise}(t), \rvc) + (1-w)\mG_\theta(c_\text{in}(t)\rvx_t, c_\text{noise}(s),  c_\text{noise}(t), \varnothing)
\end{align}
with guidance weight $w$ so that the classifier-free guided $\vf_{s,t,w}^\theta(\rvx_t)$ is
\begin{align}
    \vf_{s,t,w}^\theta(\rvx_t) = c_\text{skip}(s,t)\rvx_t + c_\text{out}(s,t)\mG_\theta^w(c_\text{in}(t)\rvx_t, c_\text{noise}(s),  c_\text{noise}(t), \rvc) 
\end{align}

\section{Sampling Algorithms}\label{app:sampling-algo}

 \mypara{Pushforward sampling.}  See \algref{alg:pushforward}. We assume a series of $N$ time steps $\{t_i\}_{i=0}^{N}$ with $T = t_N > t_{N-1} > \dots > t_2 > t_1 > t_0 = \epsilon$ for the maximum time $T$ and minimum time $\epsilon$. Denote $\sigma_d$ as data standard deviation.

 \mypara{Restart sampling.} See \algref{alg:restart}. Different from pushforward sampling, $N$ time steps $\{t_i\}_{i=0}^{N}$ do not need to be strictly decreasing for all time steps, \eg $T = t_N \geq t_{N-1}  \geq \dots  \geq t_2  \geq t_1 \geq t_0 = \epsilon$ (assuming $T > \eps$). Different from pushforward sampling, restart sampling first denoise a clean sample before resampling a noise to be added to this clean sample. Then a clean sample is predicted again. The process is iterated for $N$ steps.

\section{Connection with Prior Works}\label{app:connection}

 \subsection{Consistency Models}

 Consistency models explicitly match PF-ODE trajectories using a network $\vg_\theta(\rvx_t, t)$ that directly outputs a sample given any $\rvx_t\sim q_{t}(\rvx_t)$. The network explicitly uses EDM parameterization to satisfy boundary condition $\vg_\theta(\rvx_0, 0) = \rvx_0$ and trains via loss $\E_{\rvx_t,\rvx,t}\Big[\norm{\vg_\theta(\rvx_t,t) - \vg_{\theta^-}(\rvx_r,r)}^2 \Big]$ where $\rvx_r$ is a deterministic function of $\rvx_t$ from an ODE solver. 

We show that CM loss is a special case of our simplified \model{} objective.

\cmone*

\begin{proof}
    Since $\rvx_t=\rvx_t'$, $\rvx_r=\rvx_r'$, we have $\vf_{s,t}^\theta(\rvx_t) = \vf_{s,t}^\theta(\rvx_t')$ and $\vf_{s,r}^\theta(\rvx_r) = \vf_{s,r}^\theta(\rvx_r')$. So $k(\vf_{s,t}^\theta(\rvx_t), \vf_{s,t}^\theta(\rvx_t')) = k(\vf_{s,r}^\theta(\rvx_r), \vf_{s,r}^\theta(\rvx_r')) =0$ by definition. Since $k(x,y) = - \norm{x-y}^2$, it is easy to see \equref{eq:simple-obj} reduces to
    \begin{align}
        \E_{\rvx_t,t}\left[w(s,t) \norm{\vf_{s,t}^\theta(\rvx_t)-\vf_{s,r}^\theta(\rvx_r)}^2  \right]  
    \end{align}
    where $w(s,t)$ is a weighting function. If $s$ is a small positive constant, we further have $\vf_{s,t}^\theta(\rvx_t)\approx \vg_\theta(\rvx_t,t)$ where we drop $s$ as input. If $\vg_\theta(\rvx_t,t)$ itself satisfies boundary condition at $s=0$, we can directly take $s=0$ in which case $\vf_{0,t}^\theta(\rvx_t) = \vg_\theta(\rvx_t,t)$. And under these assumptions, our loss becomes
    \begin{align}
        \E_{\rvx_t,\rvx, t}\left[w(t) \norm{ \vg_\theta(\rvx_t,t)- \vg_{\theta^-}(\rvx_r,r)}^2  \right],
    \end{align}
which is simply a CM loss using $\ell_2$ distance.
\end{proof}

\begin{algorithm}[t]
\caption{Pushforward Sampling}\label{alg:pushforward}
\begin{algorithmic}
\STATE {\bf Input:} model $\vf ^\theta$, time steps $\{t_i\}_{i=0}^{N}$,  prior distribution $\gN(0, \sigma_d^2\emI)$, (optional) guidance weight $w$
\STATE {\bf Output:} $\rvx_{t_0}$
\STATE {\bf Sample} $\rvx_{N}\sim \gN(0, \sigma_d^2\emI)$
\FOR{$i = N,\dots, 1$} 
\STATE (Optional) $w\leftarrow 1$ if $N=1$          \qquad // can optionally discard unconditional branch for $N=1$  
\STATE $\rvx_{t_{i-1}} \gets  \vf_{t_{i-1},t_{i}}^\theta(\rvx_{t_i})$ or $ \vf_{t_{i-1},t_{i},w}^\theta(\rvx_{t_i})$ 
\ENDFOR
\end{algorithmic}
\end{algorithm}
 
\begin{algorithm}[t]
\caption{Restart Sampling}\label{alg:restart}
\begin{algorithmic}
\STATE {\bf Input:} model $\vf ^\theta$, time steps $\{t_i\}_{i=0}^{N}$,   prior distribution $\gN(0, \sigma_d^2\emI)$, DDIM interpolant coefficients $\alpha_t$ and $\sigma_t$, (optional) guidance weight $w$
\STATE {\bf Output:} $\rvx_{t_0}$
\STATE {\bf Sample} $\rvx_{N}\sim \gN(0, \sigma_d^2\emI)$
\FOR{$i = N,\dots, 1$} 
\STATE (Optional) $w\leftarrow 1$ if $N=1$          \qquad // can optionally discard unconditional branch for $N=1$  
\STATE $\Tilde{\rvx} \gets  \vf_{t_0,t_{i}}^\theta(\rvx_{t_i})$ or $\vf_{t_0,t_{i},w}^\theta(\rvx_{t_i})$ 
\IF{$i \neq 1$}
    \STATE $\Tilde{\eps} \sim \gN(0, \sigma_d^2\emI)$
    \STATE $\rvx_{t_{i-1}}  \gets \alpha_{t_{i-1}} \Tilde{\rvx} + \sigma_{t_{i-1}} \Tilde{\eps} $       \qquad  \qquad  \qquad  \qquad  \qquad \qquad  // or more generally $\rvx_{t_{i-1}} \sim q_{t}(\rvx_{t_{i-1}}|\Tilde{\rvx}, \Tilde{\eps} )$
\ELSE   
    \STATE $\rvx_{t_{0}} = \Tilde{\rvx}$
\ENDIF
\ENDFOR
\end{algorithmic}
\end{algorithm}

However, one can notice that from a moment-matching perspective, this loss significantly deviates from a proper divergence between distributions, and is problematic in two aspects. First, it assumes single-particle estimate, which now ignores the entropy repulsion term in MMD that arises only during multi-particle estimation. This can contribute to mode collapse and training instability of CM. Second, the choice of energy kernel is not a proper positive definite kernel required by MMD. At best, it only matches the \textit{first} moment (its Taylor expansion cannot cover all moments as in RBF kernels), which is insufficient for matching two complex distributions! We should use kernels that match higher moments in practice. In fact, we show in the following Lemma that the pseudo-huber loss proposed in \citet{song2023improved} matches higher moments as a kernel. 

\vspace{1em}

\cmtwo*

\begin{proof}
    We first check that negative pseudo-huber loss  $c - \sqrt{\norm{x-y}^2 + c^2}$ is a \textit{conditionally positive definite kernel}~\citep{auffray2009conditionally}. By definition, $k(x,y)$ is conditionally positive definite if for $x_1,\cdots,x_n \in \R^D$ and $c_1,\cdots,c_n\in \R^D$ with $\sum_{i=1}^n c_i = 0$
    \begin{align}
        \sum_{i=1}^n\sum_{j=1}^n c_ic_j k(x_i, x_j) \ge 0
    \end{align}
    We know that negative $L_2$ distance $-\norm{x-y}$ is conditionally positive definite. We prove this below for completion. Due to triangle inequality, $-\norm{x-y}\ge -\norm{x} - \norm{y}$. Then
    \begin{align}
        \sum_{i=1}^n\sum_{j=1}^n c_ic_j \left(-\norm{x_i-x_j}\right) &\ge  \sum_{i=1}^n\sum_{j=1}^n c_ic_j \left(-\norm{x_i} - \norm{x_j}\right) \\
        &=  - \sum_{i=1}^n c_i \left(\sum_{j=1}^nc_j  \norm{x_j}\right)  - \sum_{j=1}^n c_j \left(\sum_{i=1}^nc_i  \norm{x_i}\right) \\
        &\stackrel{(a)}{=} 0  
    \end{align}
    where $(a)$ is due to  $\sum_{i=1}^n c_i = 0$. Now since $c - \sqrt{\norm{z}^2 + c^2}   \ge - \norm{z} $ for all $c>0$, we have 
    \begin{align}
        \sum_{i=1}^n\sum_{j=1}^n c_ic_j \left(c - \sqrt{\norm{x_i-x_j}^2 + c^2} \right) \ge \sum_{i=1}^n\sum_{j=1}^n c_ic_j \left(-\norm{x_i-x_j}\right) \ge 0
    \end{align}
    So negative pseudo-huber loss is a valid conditionally positive definite kernel.
    
    Next, we analyze pseudo-huber loss's effect on higher-order moments by directly Taylor expanding   $\sqrt{\norm{z}^2 + c^2} - c$ at $z = 0$
    \begin{align}
        \sqrt{\norm{z}^2 + c^2} - c &= \frac{1}{2c}\norm{z}^2 - \frac{1}{8c^3}\norm{z}^4 + \frac{1}{16c^5}\norm{z}^6 - \frac{5}{128c^7}\norm{z}^8 + \gO(\norm{z}^9)\\
        &= \frac{1}{2c}\norm{x-y}^2 - \frac{1}{8c^3}\norm{x-y}^4 + \frac{1}{16c^5}\norm{x-y}^6 - \frac{5}{128c^7}\norm{x-y}^8 + \gO(\norm{x-y}^9)\\
    \end{align}
    where we substitute $z=x-y$. Each higher order $\norm{x-y}^k$ for $k>2$ expands to a polynomial containing up to $k$-th moments, \ie, $\{x,x^2,\dots x^k\}, \{y,y^2,\dots y^k\}$, thus the implicit feature map contains all higher moments where $c$ contributes to the weightings in front of each term.
\end{proof}

Furthermore, we extend our finite difference (between $r(s,t)$ and $t$) \model{} objective to the differential limit by taking $r(s,t) \rightarrow t$ in \appref{app:extend}. This results in a new objective that similarly subsumes continuous-time CM~\citep{song2023consistency,lu2024simplifying} as a single-particle special case.

\subsection{Diffusion GAN and Adversarial Consistency Distillation}\label{app:gan-connection}

Diffusion GAN~\citep{xiao2021tackling} parameterizes its generative distribution as 
\begin{align*}
    p_{s|t}^\theta(\rvx_s|\rvx_t) = \int q_{s|t}(\rvx_s | \mG_\theta(\rvx_t, \rvz), \rvx_t) p(\rvz) \ud\rvz 
\end{align*}
where $\mG_\theta$ is a neural network, $p(\rvz)$ is standard Gaussian distribution, and $q_{s|t}(\rvx_s | \rvx, \rvx_t)$ is the DDPM posterior
\begin{align}\label{eq:ddpmpost}
    q_{s|t}(\rvx_s | \rvx, \rvx_t) &= \gN(\mu_{s,t}, \sigma_{s,t}^2\emI) \\\nonumber
    \mu_Q &= \frac{\alpha_t\sigma_s^2}{\alpha_s \sigma_t^2} \rvx_t + \alpha_s (1 - \frac{\alpha_t^2}{\alpha_s^2}\frac{\sigma_s^2}{\sigma_t^2}) \rvx \\\nonumber
    \sigma_Q^2 &= \sigma_s^2 (1 - \frac{\alpha_t^2}{\alpha_s^2}\frac{\sigma_s^2}{\sigma_t^2})
\end{align}
Note that DDPM posterior is a \textit{stochastic} interpolant, and more importantly, it is self-consistent, which we show in the Lemma below.
\begin{restatable}{lemma}{ddpmpost}\label{lem:ddpmpost}
    For all $0\le s<t \le 1$, DDPM posterior distribution from $t$ to $s$ as defined in \equref{eq:ddpmpost} is a self-consistent Gaussian interpolant between $\rvx$ and $\rvx_t$.
\end{restatable}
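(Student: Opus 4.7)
The plan is to verify self-consistency by direct Gaussian marginalization, since both factors $q_{s|r}(\rvx_s\mid\rvx,\rvx_r)$ and $q_{r|t}(\rvx_r\mid\rvx,\rvx_t)$ are Gaussian in $\rvx_s$ (resp.\ $\rvx_r$) with an affine mean in $\rvx_r$ (resp.\ $\rvx_t$). First I would rewrite the DDPM posterior in the generalized-interpolant form of \equref{eq:general-interp}, setting $\mI_{s|t}(\rvx,\rvx_t)=A_{s,t}\rvx_t+B_{s,t}\rvx$ with
\[
A_{s,t}=\frac{\alpha_t\sigma_s^2}{\alpha_s\sigma_t^2},\qquad B_{s,t}=\alpha_s\!\left(1-\frac{\alpha_t^2\sigma_s^2}{\alpha_s^2\sigma_t^2}\right),\qquad \gamma_{s|t}^2=\sigma_s^2\!\left(1-\frac{\alpha_t^2\sigma_s^2}{\alpha_s^2\sigma_t^2}\right).
\]
A quick sanity check confirms the interpolant boundary conditions: $A_{t,t}=1,\ B_{t,t}=0,\ \gamma_{t|t}=0$, and, using $\sigma_0=0$, also $A_{0,t}=0,\ B_{0,t}=\alpha_0=1,\ \gamma_{0|t}=0$. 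So it is indeed a (stochastic) generalized interpolant between $\rvx$ and $\rvx_t$.

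Next I would apply the standard Gaussian convolution identity
\[
\int \gN(\rvx_s;A_{s,r}\rvx_r+B_{s,r}\rvx,\sigma_{s,r}^2\emI)\,\gN(\rvx_r;A_{r,t}\rvx_t+B_{r,t}\rvx,\sigma_{r,t}^2\emI)\,\ud\rvx_r
\]
which is $\gN(\rvx_s;\,A_{s,r}A_{r,t}\rvx_t+(A_{s,r}B_{r,t}+B_{s,r})\rvx,\ (A_{s,r}^2\sigma_{r,t}^2+\sigma_{s,r}^2)\emI)$. Self-consistency then reduces to verifying the three algebraic identities
\[
A_{s,r}A_{r,t}=A_{s,t},\qquad A_{s,r}B_{r,t}+B_{s,r}=B_{s,t},\qquad A_{s,r}^2\sigma_{r,t}^2+\sigma_{s,r}^2=\sigma_{s,t}^2,
\]
for every $0\le s\le r\le t\le 1$. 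The first is immediate from a telescoping cancellation of $\alpha_r\sigma_r^2$. For the second, expanding $A_{s,r}B_{r,t}$ produces a term $\alpha_r^2\sigma_s^2/(\alpha_s\sigma_r^2)$ that cancels precisely against the corresponding term in $B_{s,r}$, leaving $B_{s,t}$. For the third, the analogous cancellation occurs between $A_{s,r}^2\sigma_{r,t}^2$ and the $\alpha_r^2\sigma_s^4/(\alpha_s^2\sigma_r^2)$ piece of $\sigma_{s,r}^2$, again collapsing to $\sigma_{s,t}^2$.

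There is no real obstacle here: the whole argument is an elementary Gaussian-marginalization plus three short algebraic simplifications, and the key structural feature that makes everything cancel is that each coefficient of $\rvx_r$ (or $\sigma_r^2$) in $A_{s,r}$ is matched by an inverse factor in $B_{r,t}$ and $\sigma_{r,t}^2$, so the intermediate time $r$ drops out. The mildest care is needed at the boundary $s=0$ (to avoid dividing by $\sigma_0=0$), which is handled by checking that the degenerate limit gives $\delta(\rvx_s-\rvx)$ as expected, matching both sides of the self-consistency identity.
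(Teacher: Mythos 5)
Your proof is correct and takes essentially the same approach as the paper's: both verify self-consistency by composing the two Gaussian conditionals and checking that the coefficients of $\rvx_t$, of $\rvx$, and the combined variance telescope to the one-step DDPM posterior. The only (cosmetic) difference is that the paper writes the composition at the level of reparameterized samples with i.i.d.\ noises $\eps_1,\eps_2$ and sums the variances, whereas you work at the density level via Gaussian marginalization and package the same cancellations into three coefficient identities.
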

\begin{proof}
Let $\rvx_r \sim q_{r|t}(\rvx_r | \rvx, \rvx_t)$ and $\rvx_s \sim  q_{s|r}(\rvx_s | \rvx, \rvx_r)$, we show that $\rvx_s$ follows $q_{s|t}(\rvx_s | \rvx, \rvx_t)$. 
\begin{align}
    \rvx_r &= \frac{\alpha_t\sigma_r^2}{\alpha_r \sigma_t^2} \rvx_t + \alpha_r (1 - \frac{\alpha_t^2}{\alpha_r^2}\frac{\sigma_r^2}{\sigma_t^2}) \rvx + \sigma_r \sqrt{1 - \frac{\alpha_t^2}{\alpha_r^2}\frac{\sigma_r^2}{\sigma_t^2}} \eps_1\\
    \rvx_s &= \frac{\alpha_r\sigma_s^2}{\alpha_s \sigma_r^2} \rvx_r + \alpha_s (1 - \frac{\alpha_r^2}{\alpha_s^2}\frac{\sigma_r^2}{\sigma_t^2}) \rvx + \sigma_s \sqrt{1 - \frac{\alpha_r^2}{\alpha_s^2}\frac{\sigma_s^2}{\sigma_t^2}} \eps_2
\end{align}
where $\eps_1,\eps_2 \sim \gN(0, \emI)$ are \iid Gaussian noise. Directly expanding
\begin{align}
    \rvx_s &= \frac{\alpha_r\sigma_s^2}{\alpha_s \sigma_r^2} \left[ \frac{\alpha_t\sigma_r^2}{\alpha_r \sigma_t^2} \rvx_t + \alpha_r (1 - \frac{\alpha_t^2}{\alpha_r^2}\frac{\sigma_r^2}{\sigma_t^2}) \rvx + \sigma_r \sqrt{1 - \frac{\alpha_t^2}{\alpha_r^2}\frac{\sigma_r^2}{\sigma_t^2}}\eps_1 \right] + \alpha_s (1 - \frac{\alpha_r^2}{\alpha_s^2}\frac{\sigma_r^2}{\sigma_t^2}) \rvx + \sigma_s \sqrt{1 - \frac{\alpha_r^2}{\alpha_s^2}\frac{\sigma_s^2}{\sigma_t^2}} \eps_2\\
    &= \frac{\alpha_t\sigma_s^2}{\alpha_s \sigma_t^2} \rvx_t + \left[\frac{\alpha_r^2\sigma_s^2}{\alpha_s \sigma_r^2}(1 - \frac{\alpha_t^2}{\alpha_r^2}\frac{\sigma_r^2}{\sigma_t^2}) +   \alpha_s (1 - \frac{\alpha_r^2}{\alpha_s^2}\frac{\sigma_r^2}{\sigma_t^2})  \right] \rvx  + \frac{\alpha_r\sigma_s^2}{\alpha_s \sigma_r^2} \left[ \sigma_r \sqrt{1 - \frac{\alpha_t^2}{\alpha_r^2}\frac{\sigma_r^2}{\sigma_t^2}} \eps_1\right] +  \sigma_s \sqrt{1 - \frac{\alpha_r^2}{\alpha_s^2}\frac{\sigma_s^2}{\sigma_t^2}} \eps_2  \\
    &= \frac{\alpha_t\sigma_s^2}{\alpha_s \sigma_t^2} \rvx_t + \alpha_s (1 - \frac{\alpha_t^2}{\alpha_s^2}\frac{\sigma_s^2}{\sigma_t^2}) \rvx + \frac{\alpha_r\sigma_s^2}{\alpha_s\sigma_r}\sqrt{1 - \frac{\alpha_t^2}{\alpha_r^2}\frac{\sigma_r^2}{\sigma_t^2}}\eps_1 +  \sigma_s \sqrt{1 - \frac{\alpha_r^2}{\alpha_s^2}\frac{\sigma_s^2}{\sigma_t^2}} \eps_2 \\
    &\stackrel{(a)}{=} \frac{\alpha_t\sigma_s^2}{\alpha_s \sigma_t^2} \rvx_t + \alpha_s (1 - \frac{\alpha_t^2}{\alpha_s^2}\frac{\sigma_s^2}{\sigma_t^2}) \rvx + \sigma_s \sqrt{1 - \frac{\alpha_t^2}{\alpha_s^2}\frac{\sigma_s^2}{\sigma_t^2}} \eps_3
\end{align}
where $(a)$ is due to the fact that sum of two independent Gaussian variables with variance $a^2$ and $b^2$ is also Gaussian with variance $a^2+b^2$, and $\eps_3\sim \gN(0, \emI)$ is another independent Gaussian noise. We show the calculation of the variance:
\begin{align*}
     \frac{\alpha_r^2}{\alpha_s^2} \frac{ \sigma_s^4}{ \sigma_r^2} (1 - \frac{\alpha_t^2}{\alpha_r^2}\frac{\sigma_r^2}{\sigma_t^2}) + \sigma_s^2(1 - \frac{\alpha_r^2}{\alpha_s^2}\frac{\sigma_s^2}{\sigma_t^2}) &=  \cancel{ \frac{\alpha_r^2}{\alpha_s^2} \frac{ \sigma_s^4}{ \sigma_r^2}} - \frac{\alpha_t^2 \sigma_s^4}{\sigma_t^2 \alpha_s^2} + \sigma_s^2 - \cancel{\frac{\alpha_r^2}{\alpha_s^2}\frac{\sigma_s^4}{\sigma_t^2}}\\
     &= \sigma_s^2 (1 - \frac{\alpha_t^2}{\alpha_s^2}\frac{\sigma_s^2}{\sigma_t^2})
\end{align*}
This shows $\rvx_s$ follows $q_{s|t}(\rvx_s | \rvx, \rvx_t)$ and completes the proof.
\end{proof}

This shows another possible design of the interpolant that can be used, and diffusion GAN's formulation generally complies with our design of the generative distribution, except that it learns this conditional distribution of $\rvx$ given $\rvx_t$ directly while we learn a marginal distribution.  When they directly learn the conditional distribution by matching $p_{s|t}^\theta(\rvx | \rvx_t)$ with $q_t(\rvx | \rvx_t)$, the model is forced to learn $q_t(\rvx | \rvx_t)$ and there only exists one minimizer. However, in our case, the model can learn multiple different solutions because we match the marginals instead.

\mypara{GAN loss and MMD loss.} We also want to draw attention to similarity between GAN loss used in \citet{xiao2021tackling, sauer2025adversarial} and MMD loss. MMD is an integral probability metric over a set of functions $\gF$ in the following form
\begin{align*}
    D_\gF(P, Q) = \sup_{f\in \gF} \abs{\E_{\rvx\sim P(\rvx)}f(\rvx) - \E_{\rvy\sim Q(\rvy)}f(\rvy)}
\end{align*}
where a supremum is taken on this set of functions. This naturally gives rise to an adversarial optimization algorithm if $\gF$ is defined as the set of neural networks. However, MMD bypasses this by selecting $\gF$ as the RKHS where the optimal $f$ can be analytically found. This eliminates the adversarial objective and gives a stable minimization objective in practice. However, this is not to say that RKHS is the best function set. With the right optimizers and training scheme, the adversarial objective may achieve better empirical performance, but this also makes the algorithm difficult to scale to large datasets.

\subsection{Generative Moment Matching Network} 

It is trivial to check that GMMN is a special parameterization. We fix $t=1$, and due to boundary condition, $r(s,t)\equiv s = 0$ implies training target $p_{s|r(s,t)}^{\theta^-}(\rvx_s) = q_s(\rvx_{s})$ is the data distribution. Additionally, $p_{s|t}^\theta(\rvx_s)$ is a simple pushforward of prior $p(\eps)$ through network $\vg_\theta(\eps)$ where drop dependency on $t$ and $s$ since they are constant.

\section{Differential Inductive Moment Matching}\label{app:extend}

Similar to the continuous-time CMs presented in~\citep{lu2024simplifying}, our MMD objective can be taken to the differential limit. Consider the simplifed loss and parameterization in \equref{eq:simple-obj}, we use the RBF kernel as our kernel of choice for simplicity. 

\begin{restatable}[Differential Inductive Moment Matching]{theorem}{diffIMM}\label{thm:diff-IMM}
Let $\vf_{s,t}^{\theta}(\rvx_t)$ be a twice continuously differentiable function with bounded first and second derivatives, let $k(\cdot, \cdot)$ be RBF kernel with unit bandwidth, $\rvx,\rvx'\sim q(\rvx)$,  $\rvx_t \sim q_{t}(\rvx_t | \rvx)$, $\rvx_t'\sim q_{s}(\rvx_t' | \rvx')$,  $\rvx_r = \operatorname{DDIM}(\rvx_t, \rvx, t, r)$  and $\rvx_r' = \operatorname{DDIM}(\rvx_t', \rvx', t, r) $, the following objective
\begin{align}
    \gL_{\text{\model-}\infty}(\theta, t) = \lim_{r\rightarrow t}  \frac{1}{(t-r)^2} \E_{\rvx_t, \rvx_t', \rvx_r, \rvx_r' } \Big[ & k\Big(\vf_{s,t}^{\theta}(\rvx_t), \vf_{s,t}^{\theta}(\rvx_t')\Big) + k\Big(\vf_{s,r}^{\theta}(\rvx_r), \vf_{s,r}^{\theta}(\rvx_r')\Big)\\ \nonumber 
    & - k\Big(\vf_{s,t}^{\theta}(\rvx_t), \vf_{s,r}^{\theta}(\rvx_r')\Big) - k\Big(\vf_{s,t}^{\theta}(\rvx_t'), \vf_{s,r}^{\theta}(\rvx_r)\Big)  \Big]
\end{align}
can be analytically derived as 
\begin{align}\label{eq:dIMM}
    &\E_{\rvx_t, \rvx_t' } \Bigg[ e^{-\frac{1}{2}\norm{\vf_{s,t}^{\theta}(\rvx_t') - \vf_{s,t}^{\theta}(\rvx_t)}^2 } \Bigg( \odv{\vf_{s,t}^{\theta}(\rvx_t)}{t} ^\top \odv{\vf_{s,t}^{\theta}(\rvx_t')}{t} \\\nonumber
    & \qquad- \odv{\vf_{s,t}^{\theta}(\rvx_t)}{t} ^\top \Big( \vf_{s,t}^{\theta}(\rvx_t)- \vf_{s,t}^{\theta}(\rvx_t')\Big) \Big( \vf_{s,t}^{\theta}(\rvx_t)- \vf_{s,t}^{\theta}(\rvx_t') \Big)^\top \odv{\vf_{s,t}^{\theta}(\rvx_t')}{t} \Bigg)    \Bigg]
\end{align}
\end{restatable}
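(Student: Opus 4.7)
The plan is a second-order Taylor expansion in $\delta = r - t$: I would show that in the four-term combination the constant and $\delta$-order contributions cancel, making the whole expression $O(\delta^2)$, and then read off the $\delta^2$-coefficient. Introduce the shorthand $a = \vf_{s,t}^\theta(\rvx_t)$, $a' = \vf_{s,t}^\theta(\rvx_t')$ and view $b(r) = \vf_{s,r}^\theta(\rvx_r)$, $b'(r) = \vf_{s,r}^\theta(\rvx_r')$ as $C^2$ curves in $r$ along the DDIM path (smoothness is inherited from the explicit linear form of DDIM in $r$ and the $C^2$ hypothesis on $\vf$, while the base condition $\rvx_r\bigr|_{r=t} = \rvx_t$ gives $b(t) = a$). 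Writing $\dot a = \odv{\vf_{s,r}^\theta(\rvx_r)}{r}\Bigr|_{r=t}$ and letting $\ddot a$ be the corresponding second derivative (similarly for primed quantities), I then have $b = a + \delta \dot a + \tfrac{\delta^2}{2}\ddot a + o(\delta^2)$ and analogously for $b'$.

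Next, set $\Delta = a - a'$ and $D_0 = \tfrac{1}{2}\|\Delta\|^2$, and expand each of the three perturbed squared half-distances as $D_0 + \delta E_i + \delta^2 F_i + o(\delta^2)$ for $i = 1, 2, 3$ corresponding to the pairs $(b,b')$, $(a,b')$, $(a',b)$. Direct substitution gives $E_1 = \Delta^\top(\dot a - \dot a')$, $E_2 = -\Delta^\top \dot a'$, $E_3 = \Delta^\top \dot a$ (so $E_1 = E_2 + E_3$), together with $F_1 = \tfrac{1}{2}\bigl[\|\dot a - \dot a'\|^2 + \Delta^\top(\ddot a - \ddot a')\bigr]$, $F_2 = \tfrac{1}{2}\bigl[\|\dot a'\|^2 - \Delta^\top \ddot a'\bigr]$, $F_3 = \tfrac{1}{2}\bigl[\|\dot a\|^2 + \Delta^\top \ddot a\bigr]$. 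Substituting into $k = e^{-\cdot}$ via $e^{-D_0 - \delta E - \delta^2 F} = e^{-D_0}\bigl[1 - \delta E + \delta^2(\tfrac{1}{2}E^2 - F)\bigr] + o(\delta^2)$, the four-term combination $k(a,a') + k(b,b') - k(a,b') - k(a',b)$ has constant part $0$, a $\delta$-coefficient equal to $-E_1 + E_2 + E_3 = 0$, and a $\delta^2$-coefficient equal to $e^{-D_0}\bigl[\tfrac{1}{2}(E_1^2 - E_2^2 - E_3^2) - (F_1 - F_2 - F_3)\bigr]$.

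The last step is two algebraic simplifications: $E_1 = E_2 + E_3$ gives $\tfrac{1}{2}(E_1^2 - E_2^2 - E_3^2) = E_2 E_3 = -\dot a^\top \Delta \, \Delta^\top \dot a'$, and the $\ddot a, \ddot a'$ and $\|\dot a\|^2, \|\dot a'\|^2$ contributions in $F_1 - F_2 - F_3$ cancel exactly to leave $F_1 - F_2 - F_3 = -\dot a^\top \dot a'$. Dividing by $\delta^2 = (t-r)^2$ and taking the limit inside the expectation then yields exactly \equref{eq:dIMM}. I expect the main obstacle to be the bookkeeping of the $F_i$: six sub-terms involving $\ddot a, \ddot a', \|\dot a\|^2, \|\dot a'\|^2$ must cancel in just the right pattern, and the clean surviving scalar $-\dot a^\top \dot a'$ is what pairs with the RBF kernel to produce the Hessian-style two-particle estimator. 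A secondary point is justifying the limit-expectation interchange: the bounded-derivative hypothesis on $\vf$, together with $e^{-D_0} \le 1$, provides a uniform dominating bound on the $o(\delta^2)/\delta^2$ remainder so that dominated convergence applies.
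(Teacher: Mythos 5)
Your proposal is correct and follows essentially the same route as the paper's proof: a second-order Taylor expansion in which the constant and first-order contributions of the four-term kernel combination cancel, leaving the $(t-r)^2$ coefficient that gives the bilinear form $e^{-\frac12\|\Delta\|^2}\bigl(\dot a^\top\dot a' - \dot a^\top\Delta\Delta^\top\dot a'\bigr)$. The only difference is bookkeeping — you expand the scalar squared distances and then the scalar exponential in $\delta = r-t$, whereas the paper Taylor-expands the Gaussian kernel jointly in $(x,y)$ around $(a,b)$ and reads off the surviving cross term $(x-a)^\top e^{-\|a-b\|^2/2}(I-(a-b)(a-b)^\top)(y-b)$; these are algebraically identical, and your dominated-convergence justification via the bounded-derivative hypothesis is if anything slightly more careful than the paper's brief remark.
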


\begin{proof}
Firstly, the limit can be exchanged with the expectation due to dominated convergence theorem where the integrand consists of kernel functions which can be assumed to be upper bounded by $1$, \eg RBF kernels are upper bounded by $1$, and thus integrable. It then suffices to check the limit of the integrand. Before that, let us review the first and second-order Taylor expansion of $e^{-\norm{x - y}^2}$. We let $a,b\in \R^D$ be constants to be expanded around. We note down the Taylor expansion to second-order below for notational convenience.

\begin{itemize}
    \item $e^{-\norm{x - b}^2 / 2}$ at $x = a$: $$ e^{-\norm{a - b}^2 / 2} - e^{-\norm{a-b}^2 /2}(a-b)^\top (x-a) + \frac{1}{2} (x-a)^\top e^{-\norm{a-b}^2 /2}\left((a-b) (a-b)^\top - \emI \right) (x- a)$$ 
    \item $e^{-\norm{y - a}^2 / 2}$ at $y = b$: $$ e^{-\norm{b - a}^2 / 2} - e^{-\norm{b-a}^2 /2}(b-a)^\top (y-b) + \frac{1}{2} (y-b)^\top e^{-\norm{b-a}^2 /2}\left((b-a) (b-a)^\top - \emI \right) (y- b)$$ 
    \item $e^{-\norm{x - y}^2 / 2}$ at $x=a,y = b$: \begin{align*}
         & e^{-\norm{a - b}^2 / 2} - e^{-\norm{a-b}^2 /2}(a-b)^\top (x-a) -  e^{-\norm{b-a}^2 /2}(b-a)^\top (y-b) \\& +  \frac{1}{2} (y-b)^\top e^{-\norm{b-a}^2 /2}\left((b-a) (b-a)^\top - \emI \right) (y- b) \\& +  \frac{1}{2} (y-b)^\top e^{-\norm{b-a}^2 /2}\left((b-a) (b-a)^\top - \emI \right) (y- b) \\& + (x-a)^\top e^{-\norm{b-a}^2 /2} \left( \emI - (a-b) (a-b)^\top \right) (y -b)
    \end{align*}
\end{itemize} 

Putting it together, the above results imply
\begin{align*}
    &e^{-\norm{x - y}^2 / 2} + e^{-\norm{a - b}^2 / 2} - e^{-\norm{x - b}^2 / 2} - e^{-\norm{y - a}^2 / 2} \\
    & \approx   (x-a)^\top e^{-\norm{b-a}^2 /2} \left( \emI - (a-b) (a-b)^\top \right) (y -b)
\end{align*}
since it is easy to check that the remaining terms cancel.

Substituting  $x = \vf_{s,t}^{\theta}(\rvx_t)$, $a =  \vf_{s,r}^{\theta}(\rvx_r)$, $y = \vf_{s,t}^{\theta}(\rvx_t')$,   $b =  \vf_{s,r}^{\theta}(\rvx_r')$, we furthermore have 
\begin{align}
    \lim_{r\rightarrow t}  \frac{1}{t-r}( x-a) &= \lim_{r\rightarrow t}  \frac{1}{t-r}  \left[\vf_{s,t}^{\theta}(\rvx_t) -  \vf_{s,r}^{\theta}(\rvx_r) \right] \\
                                             &= \lim_{r\rightarrow t}  \frac{1}{t-r}  \left[(t-r) \odv{\vf_{s,t}^{\theta}(\rvx_t)}{t} + \gO(\abs{t-r}^2) \right] \\
                                             &= \odv{\vf_{s,t}^{\theta}(\rvx_t)}{t}
\end{align}
Similarly,
\begin{align}
    \lim_{r\rightarrow t}  \frac{1}{t-r}( y-b)  = \lim_{r\rightarrow t}  \frac{1}{t-r}  \left[\vf_{s,t}^{\theta}(\rvx_t') -  \vf_{s,r}^{\theta}(\rvx_r') \right]  = \odv{\vf_{s,t}^{\theta}(\rvx_t')}{t}
\end{align}
Therefore, $ \gL_{\text{\model-}\infty}(\theta, t)$ can be derived as
\begin{align}\label{eq:pseudo-dIMM}
    &\E_{\rvx_t, \rvx_t' } \Bigg[ e^{-\frac{1}{2}\norm{\vf_{s,t}^{\theta}(\rvx_t') - \vf_{s,t}^{\theta}(\rvx_t)}^2 } \odv{\vf_{s,t}^{\theta}(\rvx_t)}{t} ^\top \\ 
    &\qquad\qquad \Big( \emI 
    - \Big( \vf_{s,t}^{\theta}(\rvx_t)- \vf_{s,t}^{\theta}(\rvx_t')\Big) \Big( \vf_{s,t}^{\theta}(\rvx_t)- \vf_{s,t}^{\theta}(\rvx_t')\Big)^\top \Big) \odv{\vf_{s,t}^{\theta}(\rvx_t')}{t}   \Bigg]\\
    &=  \E_{\rvx_t, \rvx_t' } \Bigg[ e^{-\frac{1}{2}\norm{\vf_{s,t}^{\theta}(\rvx_t') - \vf_{s,t}^{\theta}(\rvx_t)}^2 } \Bigg( \odv{\vf_{s,t}^{\theta}(\rvx_t)}{t} ^\top \odv{\vf_{s,t}^{\theta}(\rvx_t')}{t} \\\nonumber
    & \qquad\qquad -  \odv{\vf_{s,t}^{\theta}(\rvx_t)}{t} ^\top \Big( \vf_{s,t}^{\theta}(\rvx_t)- \vf_{s,t}^{\theta}(\rvx_t')\Big) \Big( \vf_{s,t}^{\theta}(\rvx_t)- \vf_{s,t}^{\theta}(\rvx_t') \Big)^\top \odv{\vf_{s,t}^{\theta}(\rvx_t')}{t} \Bigg)    \Bigg]
\end{align}
\end{proof}

\subsection{Pseudo-Objective}

Due to the stop-gradient operation, we can similarly find a \textit{pseudo-objective} whose gradient matches the gradient of $\gL_{\text{\model-}\infty}(\theta,  t)$ in the limit of $r \rightarrow t$.

\begin{restatable}{theorem}{pseudodiffIMM}\label{thm:pseudodiff-IMM}
    Let $\vf_{s,t}^{\theta}(\rvx_t)$ be a twice continuously differentiable function with bounded first and second derivatives, $k(\cdot, \cdot)$ be RBF kernel with unit bandwidth, $\rvx,\rvx'\sim q(\rvx)$,  $\rvx_t \sim q_{t}(\rvx_t | \rvx)$, $\rvx_t'\sim q_{t}(\rvx_t' | \rvx')$,  $\rvx_r = \operatorname{DDIM}(\rvx_t, \rvx, t, r)$  and $\rvx_r' = \operatorname{DDIM}(\rvx_t', \rvx', t, r) $, the gradient of the following pseudo-objective 
    \begin{align}
    \nabla_{\theta} \gL_{\text{\model-}\infty}^-(\theta, t) = \lim_{r\rightarrow t}  \frac{1}{(t-r)}\nabla_{\theta}\E_{\rvx_t, \rvx_t', \rvx_r, \rvx_r' } \Big[ & k\Big(\vf_{s,t}^{\theta}(\rvx_t), \vf_{s,t}^{\theta}(\rvx_t')\Big) + k\Big(\vf_{s,r}^{\theta^-}(\rvx_r), \vf_{s,r}^{\theta^-}(\rvx_r')\Big)\\ \nonumber 
    & - k\Big(\vf_{s,t}^{\theta}(\rvx_t), \vf_{s,r}^{\theta^-}(\rvx_r')\Big) - k\Big(\vf_{s,t}^{\theta}(\rvx_t'), \vf_{s,r}^{\theta^-}(\rvx_r)\Big)  \Big]
\end{align}
can be used to optimize $\theta$ and can be analytically derived as
\begin{align}
       & \E_{\rvx_t, \rvx_t' } \Bigg[ e^{-\frac{1}{2}\norm{\vf_{s,t}^{\theta^-}(\rvx_t') - \vf_{s,t}^{\theta^-}(\rvx_t)}^2 } \Bigg( \\ \nonumber
        &\qquad\qquad \Bigg(  \odv{\vf_{s,t}^{\theta^-}(\rvx_t')}{t} ^\top -  \Big[ \vf_{s,t}^{\theta^-}(\rvx_t)- \vf_{s,t}^{\theta^-}(\rvx_t')\Big]^\top  \odv{\vf_{s,t}^{\theta^-}(\rvx_t')}{t} \Big[\vf_{s,t}^{\theta^-}(\rvx_t)- \vf_{s,t}^{\theta^-}(\rvx_t')\Big]^\top \Bigg) \nabla_\theta \vf_{s,t}^{\theta}(\rvx_t) + \\\nonumber
        &\qquad\qquad \Bigg( \odv{\vf_{s,t}^{\theta^-}(\rvx_t)}{t} ^\top - \Big[\vf_{s,t}^{\theta^-}(\rvx_t)- \vf_{s,t}^{\theta^-}(\rvx_t')\Big]^\top \odv{\vf_{s,t}^{\theta^-}(\rvx_t)}{t} \Big[\vf_{s,t}^{\theta^-}(\rvx_t)- \vf_{s,t}^{\theta^-}(\rvx_t')\Big] ^\top \Bigg) \nabla_\theta \vf_{s,t}^{\theta}(\rvx_t') \Bigg) \Bigg]
\end{align}
\end{restatable}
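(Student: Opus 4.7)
The plan is to reuse the Taylor-expansion machinery from the proof of Theorem~\ref{thm:diff-IMM}, but now apply $\nabla_\theta$ \emph{before} taking the limit $r\to t$, exploiting the stop-gradient convention so that only one power of $(t-r)$ needs to be cancelled by the prefactor for a finite limit to exist.

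First I would commute $\nabla_\theta$ and $\lim_{r\to t}$ with the expectation via dominated convergence, using boundedness of the RBF kernel, its first two derivatives, and the first two derivatives of $\vf_{s,t}^{\theta}$ assumed in the hypothesis. Writing $x=\vf_{s,t}^{\theta}(\rvx_t)$, $y=\vf_{s,t}^{\theta}(\rvx_t')$, $a=\vf_{s,r}^{\theta^-}(\rvx_r)$, and $b=\vf_{s,r}^{\theta^-}(\rvx_r')$, the stop-gradient makes $\nabla_\theta a=\nabla_\theta b=0$, so $\nabla_\theta k(a,b)$ vanishes identically and the cross terms $\nabla_\theta k(x,b), \nabla_\theta k(y,a)$ receive contributions only through their first arguments. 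Collecting Jacobians yields
\begin{align*}
\nabla_\theta\bigl[k(x,y)+k(a,b)-k(x,b)-k(y,a)\bigr]=\bigl[\nabla_1 k(x,y)-\nabla_1 k(x,b)\bigr]^\top\nabla_\theta x+\bigl[\nabla_2 k(x,y)-\nabla_1 k(y,a)\bigr]^\top\nabla_\theta y.
\end{align*}

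The central step is to Taylor-expand each grouped kernel difference around $r=t$. For the RBF kernel, $\nabla_1 k(u,v)=-e^{-\|u-v\|^2/2}(u-v)$, and along the DDIM chain one has $b=y-(t-r)\odv{\vf_{s,t}^{\theta^-}(\rvx_t')}{t}+O((t-r)^2)$ and $a=x-(t-r)\odv{\vf_{s,t}^{\theta^-}(\rvx_t)}{t}+O((t-r)^2)$, where $\odv{}{t}$ is the same DDIM-driven total derivative used in the proof of Theorem~\ref{thm:diff-IMM}. At $r=t$ the zeroth-order pieces cancel exactly ($\nabla_1 k(x,b)|_{r=t}=\nabla_1 k(x,y)$ and $\nabla_1 k(y,a)|_{r=t}=\nabla_2 k(x,y)$), which is precisely what permits the $(t-r)^{-1}$ prefactor (rather than $(t-r)^{-2}$ as in Theorem~\ref{thm:diff-IMM}) to yield a finite limit. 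Extracting the $(t-r)^1$ coefficient by the same algebra as in the proof of Theorem~\ref{thm:diff-IMM} gives for the first group
\begin{align*}
\lim_{r\to t}\tfrac{1}{t-r}\bigl[\nabla_1 k(x,y)-\nabla_1 k(x,b)\bigr]=e^{-\|x-y\|^2/2}\Bigl[\odv{\vf_{s,t}^{\theta^-}(\rvx_t')}{t}-\bigl((x-y)^\top\odv{\vf_{s,t}^{\theta^-}(\rvx_t')}{t}\bigr)(x-y)\Bigr],
\end{align*}
with the analogous expression under $\rvx_t\leftrightarrow\rvx_t'$ for the second group.

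Transposing to row vectors, rewriting the scalar factor $(x-y)^\top\odv{\vf^{\theta^-}}{t}$ times $(x-y)^\top$ in the explicit $(x-y)^\top\odv{\vf^{\theta^-}}{t}\,(x-y)^\top$ form that appears in the statement, and multiplying by the Jacobians $\nabla_\theta\vf_{s,t}^{\theta}(\rvx_t)$ and $\nabla_\theta\vf_{s,t}^{\theta}(\rvx_t')$ reproduces the displayed formula. The main obstacle I expect is purely bookkeeping: keeping row/column orientations and signs consistent through the chain rule, and ensuring that the total time derivative is taken along the DDIM trajectory (matching the convention of Theorem~\ref{thm:diff-IMM}) rather than as a bare partial derivative in the time argument of $\vf^{\theta^-}_{s,r}$; the interchange of $\nabla_\theta$ with expectation and limit is a secondary but standard dominated-convergence point.
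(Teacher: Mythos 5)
Your proposal is correct and follows essentially the same route as the paper: both rely on the stop-gradient killing $\nabla_\theta a=\nabla_\theta b=0$, the exact cancellation of the zeroth-order kernel-gradient terms at $r=t$ (so only one power of $(t-r)$ needs cancelling), and the same first-order DDIM-driven expansions $x-a,\,y-b\approx(t-r)\odv{\vf^{\theta^-}_{s,t}}{t}$. The only cosmetic difference is the order of operations — you chain-rule the exact kernel sum and then Taylor-expand the resulting kernel-gradient differences, whereas the paper differentiates the already-derived second-order Taylor expansion of the kernel sum from Theorem~\ref{thm:diff-IMM}; the two manipulations commute and yield identical coefficients.
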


\begin{proof}
    Similar to the derivation of $\gL_{\text{\model-}\infty}(\theta,   t)$, let $x = \vf_{s,t}^{\theta}(\rvx_t)$, $a =  \vf_{s,r}^{\theta^-}(\rvx_r)$, $y = \vf_{s,t}^{\theta}(\rvx_t')$,   $b =  \vf_{s,r}^{\theta^-}(\rvx_r')$, we have
    \begin{align*}  &\lim_{r\rightarrow t}  \frac{1}{(t-r)}\nabla_{\theta} (x-a)^\top e^{-\norm{b-a}^2 /2} \left( \emI - (a-b) (a-b)^\top \right) (y -b) \\
        &=  \lim_{r\rightarrow t}  \frac{1}{(t-r)}  e^{-\norm{b-a}^2 /2}  \Bigg[ \Bigg( (y-b)^\top - \Big((a-b)^\top (y-b) \Big)(a-b)^\top \Bigg) \nabla_\theta x + \\& \quad\quad\quad\quad\quad\quad\quad\quad\quad\quad  \Bigg( (x-a)^\top - \Big((a-b)^\top (x-a) \Big)(a-b)^\top \Bigg) \nabla_\theta y\Bigg] \\ 
    \end{align*}
    where $$ \lim_{r\rightarrow t}  \frac{1}{(t-r)} (x-a) = \odv{\vf_{s,t}^{\theta^-}(\rvx_t)}{t}, \quad \quad  \quad  \lim_{r\rightarrow t}  \frac{1}{(t-r)} (y-b) = \odv{\vf_{s,t}^{\theta^-}(\rvx_t')}{t}$$

    Note that $ \odv{\vf_{s,t}^{\theta^-}(\rvx_t)}{t}$ is now parameterized by $\theta^-$ instead of $\theta$ because the gradient is already taken \wrt $\vf_{s,t}^{\theta}(\rvx_t)$ outside of the brackets, so $(x-a)$ and $(y-b)$ merely require \textit{evaluation} at current $\theta$ with no gradient information, which $\theta^-$ satisfies. The objective can be derived as 
    \begin{align*} 
        &\nabla_{\theta} \gL_{\text{\model-}\infty}^-(\theta,   t) \\
        &= \E_{\rvx_t, \rvx_t' } \Bigg[ e^{-\frac{1}{2}\norm{\vf_{s,t}^{\theta^-}(\rvx_t') - \vf_{s,t}^{\theta^-}(\rvx_t)}^2 } \Bigg( \\ 
        &\qquad \qquad\Bigg(  \odv{\vf_{s,t}^{\theta^-}(\rvx_t')}{t} ^\top -  \Big[ \vf_{s,t}^{\theta^-}(\rvx_t)- \vf_{s,t}^{\theta^-}(\rvx_t')\Big]^\top  \odv{\vf_{s,t}^{\theta^-}(\rvx_t')}{t} \Big[\vf_{s,t}^{\theta^-}(\rvx_t)- \vf_{s,t}^{\theta^-}(\rvx_t')\Big]^\top \Bigg) \nabla_\theta \vf_{s,t}^{\theta}(\rvx_t) + \\
        &\qquad\qquad \Bigg( \odv{\vf_{s,t}^{\theta^-}(\rvx_t)}{t} ^\top - \Big[\vf_{s,t}^{\theta^-}(\rvx_t)- \vf_{s,t}^{\theta^-}(\rvx_t')\Big]^\top \odv{\vf_{s,t}^{\theta^-}(\rvx_t)}{t} \Big[\vf_{s,t}^{\theta^-}(\rvx_t)- \vf_{s,t}^{\theta^-}(\rvx_t')\Big] ^\top \Bigg) \nabla_\theta \vf_{s,t}^{\theta}(\rvx_t') \Bigg) \Bigg]
    \end{align*}
\end{proof}

 \subsection{Connection with Continuous-Time CMs}

 Observing \equref{eq:dIMM} and \equref{eq:pseudo-dIMM}, we can see that when $\rvx_t = \rvx_t'$ and $\rvx_r =\rvx_r'$, $s$ being a small positive constant, then $\vf_{s,t}^\theta(\rvx_t') = \vf_{s,t}^\theta(\rvx_t)$, and $\exp(-\frac{1}{2}\norm{\vf_{s,t}^\theta(\rvx_t')  - \vf_{s,t}^\theta(\rvx_t) }^2) = 1$, and $\vf_{s,t}^\theta(\rvx_t') \approx \vg_\theta(\rvx_t, t)$ where since $s$ is fixed we discard the dependency on $s$ as input. Then, \equref{eq:dIMM} reduces to
 \begin{align}
      \E_{\rvx_t } \Bigg[ \norm{\odv{\vf_{s,t}^{\theta}(\rvx_t)}{t}}^2   \Bigg]
 \end{align}
 which is the same as differential consistency loss~\citep{song2023consistency,geng2024consistency}. And  \equref{eq:pseudo-dIMM} reduces to 
 \begin{align}
 \E_{\rvx_t} \Bigg[  \odv{\vf_{s,t}^{\theta^-}(\rvx_t)}{t} ^\top \nabla_\theta \vf_{s,t}^{\theta}(\rvx_t) \Bigg]
 \end{align}
 which is the pseudo-objective for continuous-time CMs~\citep{song2023consistency,lu2024simplifying} (minus a weighting function of choice). 

\if\arxiv0
\section{Additional Related Works}\label{app:related}

Besides diffusion, Flow Matching, interpolants and other few-step methods trained from scratch, our work is also related to diffusion distillation methods.

\mypara{Diffusion distillation.} Because diffusion models can be slow to sample from, recent methods~\citep{salimans2022progressive,meng2023distillation,yin2024one,zhou2024score,luo2024diff,heek2024multistep,salimans2024multistep} focus on distilling one-step or few-step networks from pre-trained diffusion models, leveraging these models to guide sampling toward high-likelihood regions. Some approaches~\citep{yin2024one,zhou2024score} adopt a probabilistic perspective by jointly optimizing a network that estimates the current model distribution, but the joint optimization requires careful tuning in practice and can lead to mode collapse~\citep{yin2024one}. Another recent work~\citep{salimans2024multistep} explicitly matches the \textit{first} moment of the data distribution readily available from pretrained diffusion models which also give rise to a joint optimization algorithm. In contrast, our method implicitly matches \emph{all} moments using MMD and can be trained from scratch. 

\else 
\fi

\section{Experiment Settings}\label{app:exp}

\subsection{Training \& Parameterization Settings}

We summarize our best runs in \tabref{tab:exp-setup}. Specifically, for ImageNet-$256\ttimes 256$, we adopt a latent space paradigm for computational efficiency. For its autoencoder, we follow EDM2~\citep{karras2024analyzing} and pre-encode all images from ImageNet into latents without flipping, and calculate the channel-wise mean and std for normalization. We use Stable Diffusion VAE\footnote{https://huggingface.co/stabilityai/sd-vae-ft-mse} and rescale the latents by channel mean $[ 0.86488,    -0.27787343,  0.21616915,  0.3738409]$ and channel std $[4.85503674, 5.31922414, 3.93725398 , 3.9870003]$. After this normalization transformation, we further multiply the latents by $0.5$ so that the latents roughly have std $0.5$. For DiT architecture of different sizes, we use the same hyperparameters for all experiments.

\mypara{Choices for $T$ and $\epsilon$.} By default assuming we are using mapping function $r(s,t)$ by constant decrement in $\eta_t$, we keep $\eta_\text{max}\approx 160$. This implies that for time distribution of the form $\gU(\epsilon, T)$, we set $T=0.996$ for cosine diffusion and $T=0.994$ for OT-FM. For $\epsilon$, we set it differently for pixel-space and latent-space model. 
For pixel-space on CIFAR-10, we follow \citet{nichol2021improved} and set $\epsilon$ to a small positive constant because pixel quantization makes smaller noise imperceptible. We find 0.006 to work well. For latent-space on ImageNet-$256\ttimes 256$, we have no such intuition as in pixel-space. We simply set $\epsilon=0$ in this case.

Exceptions occur when we ablate other choices of $r(s,t)$, \eg constant decrement in $\lambda_t$ in which case we set $\eps=0.001$ to prevent $r(s,t)$ for being too close to $t$ when $t$ is small.

\mypara{Injecting time $s$.} The design for additionally injecting $s$ can be categorized into 2 types -- injecting $s$ directly and injecting stride size $(t-s)$. In both cases, architectural designs exactly follow the time injection of $t$. We simply extract positional time embedding of $s$ (or $t-s$) fed through 2-layer MLP (same as for $t$) before adding this new embedding to the embedding of $t$ after MLP. The summed embedding is then fed through all the Transformer blocks as in standard DiT architecture.
 
\begin{table}[t!]
    \centering
    \small
    \resizebox{\textwidth}{!}{
    \begin{tabular}{lcccccc}
    \toprule
            & CIFAR-10 &\multicolumn{5}{c}{ImageNet-$256\ttimes 256$} \\ 
        \midrule
        \multicolumn{3}{l}{\textbf{Parameterization Setting}}\\
        \midrule
        Architecture &  DDPM++ & DiT-S & DiT-B & DiT-L & DiT-XL & DiT-XL \\  
        GFlops & 21.28  & 6.06  & 23.01 & 80.71 & 118.64 & 118.64  \\  
        Params (M) & 55 & 33 & 130& 458& 675 & 675 \\
        $c_\text{noise}(t)$ & $1000t$ &  $1000t$ &  $1000t$& $1000t$ &  $1000t$  &  $1000t$\\   
        2\textsuperscript{nd} time conditioning & $s$ &  $s$ &  $s$& $s$ &  $s$  &  $t-s$\\ 
        Flow Trajectory & OT-FM & OT-FM  &OT-FM  & OT-FM & OT-FM & OT-FM \\
        $\vg_\theta(\rvx_t , s, t)$ & Simple-EDM & Euler-FM &Euler-FM & Euler-FM& Euler-FM & Euler-FM \\ 
        $\sigma_d$ & $0.5$ & $0.5$ & $0.5$& $0.5$&  $0.5$ &  $0.5$\\
        Training iter & 400K & 1.2M & 1.2M&1.2M &  1.2M  &  1.2M \\
        \midrule
        \multicolumn{3}{l}{\textbf{Training Setting}}\\
        \midrule
        Dropout & $0.2$ & $0$&$0$ & $0$&  $0$ &  $0$ \\  
         Optimizer & RAdam &  AdamW&AdamW &AdamW & AdamW & AdamW \\ 
         Optimizer $\epsilon$ & $10^{-8}$ &$10^{-8}$   &$10^{-8}$   & $10^{-8}$  &  $10^{-8}$ &  $10^{-8}$  \\ 
         $\beta_1$ & $0.9$ & $0.9$ & $0.9$& $0.9$&  $0.9$ &  $0.9$  \\ 
         $\beta_2$ & $0.999$ & $0.999$  &  $0.999$ &  $0.999$ &  $0.999$  &  $0.999$  \\ 
         Learning Rate & $0.0001$ &  $0.0001$   & $0.0001$   & $0.0001$   & $0.0001$     & $0.0001$   \\ 
         Weight Decay & $0$ &  $0$ &$0$  & $0$ & $0$  & $0$  \\ 
         Batch Size & $4096$ & $4096$ &$4096$  & $4096$ &  $4096$  &  $4096$ \\
         $M$ & $4$ & $4$ &$4$ & $4$&   $4$ &   $4$  \\ 
        Kernel & Laplace & Laplace&Laplace &Laplace &   Laplace&   Laplace \\
        $r(s,t)$ & $\max(s, \eta^{-1}(\eta_t - \frac{(\eta_\text{max} - \eta_\text{min})}{2^{15}}) )$ &   \multicolumn{5}{c}{ $\max(s, \eta^{-1}(\eta_t - \frac{(\eta_\text{max} - \eta_\text{min})}{2^{12}}) )$ } \\
        Minimum $t,r$ gap & - & -&- &- &   -&   $10^{-4}$ \\
        $p(t)$ & $\gU(0.006, 0.994)$  &$\gU(0, 0.994)$ & $\gU(0, 0.994)$&$\gU(0, 0.994)$ &  $\gU(0, 0.994)$  &  $\gU(0, 0.994)$  \\
        $a$ & $1$ &$1$ &$1$ & $1$&   $1$ &   $2$ \\
        $b$ & $5$ &$4$ &$4$ & $4$&   $4$ &   $4$ \\
        EMA Rate & $0.9999$ & $0.9999$ & $0.9999$ &  $0.9999$&  $0.9999$ &  $0.9999$ \\
        Label Dropout & - & 0.1&0.1 & 0.1 & 0.1  & 0.1 \\
        $x$-flip & True &False &False & False&  False&  False \\
        EDM augment ~\citep{karras2022elucidating}  & True &False &False & False&  False &  False\\
        \midrule
        \multicolumn{3}{l}{\textbf{Inference Setting}}\\
        \midrule 
        Sampler Type & Pushforward &Pushforward & Pushforward&Pushforward &  Pushforward  &  Pushforward \\
        Number of Steps & 2 &8 & 8& 8&  8&  8 \\
        Schedule Type & $t_1 = 1.4$ & Uniform & Uniform & Uniform& Uniform & Uniform \\ 
        FID-50K ($w=0$) & 1.98 & - & - & -  &   -&   - \\
        FID-50K ($w=1.0$, \ie no guidance) & - & 42.28 & 26.02 & 9.33  &  7.25 &  6.69  \\
        FID-50K ($w=1.5$) & - &  20.36 & 9.69 & 2.80  &   2.13 &  1.99 \\
        \bottomrule
    \end{tabular}
    }
    \caption{Experimental settings for different architectures and datasets.}
    \label{tab:exp-setup}
\end{table}

\mypara{Improved CT baseline.} For ImageNet-$256\ttimes 256$, we implement iCT baseline by using our improved parameterization with Simple-EDM and OT-FM schedule. We use the proposed pseudo-huber loss for training but find training often collapses using the same $r(s,t)$ schedule as ours. We carefully tune the gap to achieve reasonable performance without collapse and present our results in~\tabref{tab:gen-img}.

\subsection{Inference Settings}\label{app:inf-exp}

\mypara{Inference schedules.} For all one-step inference, we directly start from $\eps\sim \gN(0, \sigma_d^2\emI)$ at time $T$ to time $\epsilon$ through pushforward sampling. For all 2-step methods, we set the intermediate timestep $t_1$ such that $\eta_{t_1} = 1.4$; this choice is purely empirical which we find to work well. For $N\ge 4$ steps we explore two types of time schedules: (1) uniform decrement in $t$ with $\eta_0 < \eta_1 \dots < \eta_N$ where
\begin{align}
    t_i = T + \frac{N - i}{N}(\epsilon - T)
\end{align}
and (2) EDM~\citep{karras2022elucidating} time schedule. EDM schedule specifies $\eta_0 < \eta_1 \dots < \eta_N$ where
\begin{align}
    \eta_i = \Big(\eta_\text{max}^{\frac{1}{\rho}} + \frac{N - i}{N}(\eta_{\text{min}}^{\frac{1}{\rho}} - \eta_\text{max}^{\frac{1}{\rho}})\Big)^\rho \quad   \text{and} \quad \rho = 7
\end{align}
We slightly modify the schedule so that $\eta_0=\eta_\text{min}$ is the endpoint instead of $\eta_1=\eta_\text{min}$ and $\eta_0 = 0$ as originally proposed, since our $\eta_0$ can be set to 0 without numerical issue.

We also specify the time schedule type used for our best runs in~\tabref{tab:exp-setup} and their results.

\subsection{Scaling Settings}\label{app:scale}

\mypara{Model GFLOPs.} We reuse numbers from DiT~\citep{peebles2023scalable} for each model architecture.

\mypara{Training compute.} Following~\citet{peebles2023scalable}, we use the formula $\text{model GFLOPs} \cdot \text{batch size} \cdot \text{training steps} \cdot 4$ for training compute where, different from DiT, we have constant 4 because for each iteration we have 2 forward pass and 1 backward pass, which is estimated as twice the forward compute.

\mypara{Inference compute.} We calculate inference compute via $\text{model GFLOPs} \cdot \text{number of steps}$.

\if\arxiv1
\else
\subsection{Additional Ablation Studies}\label{app:ablation}

\begin{wrapfigure}{r}{0.3\textwidth} 
\centering
\vspace{-5mm}
\resizebox{0.3\textwidth}{!}{
\begin{tabular}{lcc }
    \toprule
         & CIFAR-10 & ImageNet-$256\ttimes 256$ \\
         \midrule
       id/cos  & 3.77 & 46.44 \\
       \midrule
       id/FM  & 3.45 & 47.32 \\
         \midrule
       sEDM/cos &  2.39 &  27.33 \\
         \midrule
        sEDM/FM & \textbf{2.10}  & 28.67 \\
         \midrule
       eFM & 2.53 & \textbf{27.01} \\
       \bottomrule
     \end{tabular} 
} 
    \captionof{table}{FID for different flow schedule and network $\vg_\theta(\rvx_t, s, t)$.} 
    \label{tab:g_ablation}
\end{wrapfigure}
All ablation studies are done with DDPM++ architecture for
CIFAR-10 and DiT-B for ImageNet-256×256. FID comparisons use 2-step samplers by default.

\mypara{Flow schedules and parameterization.} We investigate all combinations of network parameterization and flow schedules: Simple-EDM + cosine (\texttt{sEDM/cos}),  Simple-EDM + OT-FM (\texttt{sEDM/FM}), Euler-FM + OT-FM (\texttt{eFM}), Identity + cosine (\texttt{id/cos}), Identity + OT-FM (\texttt{id/FM}). Identity parameterization consistently fall behind other types of parameterization, which all show similar performance across datasets (see \tabref{tab:g_ablation}). We see that on smaller scale (CIFAR-10), \texttt{sEDM/FM} works the best but on larger scale (ImageNet-$256\ttimes 256$), \texttt{eFM} works the best, indicating that OT-FM schedule and Euler paramaterization may be more scalable than other choices. 
\fi

\subsection{Scaling Beyond 8 Steps}\label{app:beyond_8steps}

We investigate when the performance saturates for ImageNet-256$\ttimes$256 in \tabref{tab:beyond_8steps}. We see continued improvement beyond 8 steps and at 16 steps our method already outperforms VAR with 2B parameters (1.92 FID).

\subsection{Ablation on exponent $a$}\label{app:ablate_a}

We compare the performance between $a=1$ and $a=2$ on full DiT-XL architecture in \tabref{tab:a_ablate}, which shows how $a$ affects results of different sampling steps. We observe that $a=2$ causes slightly higher $1$-step sampling FID but outperforms $a=1$ in the multi-step regime.

\begin{figure}[t] 
\centering

    \begin{minipage}[t]{0.3\linewidth}
    \centering  
        \begin{adjustbox}{width=\linewidth}
        \begin{tabular}{lccc}
         \toprule
                & $10$-step & $16$-step  & $32$-step     \\
              \midrule
             FID w/ $w=1.5$ & 1.98 & 1.90  & \textbf{1.89}   \\ 
             \bottomrule
         \end{tabular}
        \end{adjustbox}
    \captionof{table}{FID of ImageNet-256$\ttimes$256 beyond 8 steps presented in the main text. At 16 steps, \model{} already outperforms VAR with 2B parameters (1.92 FID) and its performance saturates beyond that, with 32 steps performing marginally better.}  
    \label{tab:beyond_8steps} 
    \end{minipage} \hfill
    \begin{minipage}[t]{0.3\linewidth}
    \centering  
        \begin{adjustbox}{width=\linewidth}
        \begin{tabular}{lcccc}
         \toprule
                & $1$-step & $2$-step  & $4$-step  &  $8$-step  \\
              \midrule
             $a=1$  &7.97  &  4.01  &  2.61 & 2.13  \\
              \midrule
              $a=2$  & 8.28 & 4.08 & 2.60 & \textbf{ 2.01} \\
             \bottomrule
         \end{tabular}
        \end{adjustbox}
    \captionof{table}{Ablation of exponent $a$ in the weighting function on ImageNet-$256\ttimes 256$. We see that $a=2$ excels at multi-step generation while lagging slightly behind in $1$-step generation.}  
    \label{tab:a_ablate} 
    \end{minipage} \hfill
    \begin{minipage}[t]{0.34\linewidth}
        \centering  
            \begin{adjustbox}{width=\linewidth}
            \begin{tabular}{lcccc}
             \toprule
                  &$1$-step & $2$-step  & $4$-step  &  $8$-step \\
                  \midrule
                TF32 w/ $a=1$  &7.97  &  4.01  &  2.61 & 2.13   \\
                  \midrule
                 FP16 w/ $a=1$ & 8.73& 4.54 & 3.03 & 2.38  \\
                  \midrule
                 FP16 w/ $a=2$ &8.05& 3.99 &  2.51& \textbf{1.99}  \\
                 \bottomrule
             \end{tabular}
            \end{adjustbox}
        \captionof{table}{Ablation of lower precision training on ImageNet-$256\ttimes 256$. For lower precision training, we employ both minimum gap $\Delta=10^{-4}$ and $(t-r)$ conditioning.}  
        \label{tab:prec_ablate} 
      \end{minipage} 
\end{figure}

\subsection{Caveats for Lower-Precision Training}

For all experiments, we follow the original works~\citep{song2020score,peebles2023scalable} and use the default TF32 precision for training and evaluation. When switching to lower precision such as FP16, we find that our mapping function, \ie constant decrement in $\eta_t$, can cause indistinguishable time embedding after some MLP layers when $t$ is large. To mitigate this issue, we simply impose a minimum gap $\Delta$ between any $t$ and $r$, for example, $\Delta = 10^{-4}$. Our resulting mapping function becomes
\begin{align*}
    r(s,t) =  \max \Bigg( s,  \min \Big( t - \Delta, \eta^{-1}( \eta(t) - \epsilon ) \Big) \Bigg)
\end{align*} 
Optionally, we can also increase distinguishability between nearby time-steps inside the network by injecting $(r-s)$ instead of $s$ as our second time condition. We use this as default for FP16 training. With these simple changes, we observe minimal impact on generation performance. 

Lastly, if training from scratch with lower precision, we recommend FP16 instead of BF16 because of higher precision that is needed to distinguish between nearby $t$ and $r$. 

We show results in \tabref{tab:prec_ablate}. For FP16, $a=1$ causes slight performance degradation because of the small gap issue at large $t$. This is effectively resolved by $a=2$ which downweights losses at large $t$ by focusing on smaller $t$ instead. At lower precision, while not necessary, $a=2$ is an effective solution to achieve good performance that matches or even surpasses that of TF32.

\section{Additional Visualization}

We present additional visualization results in the following page.

\begin{figure}[t]
    \centering
    \includegraphics[width=0.85\textwidth]{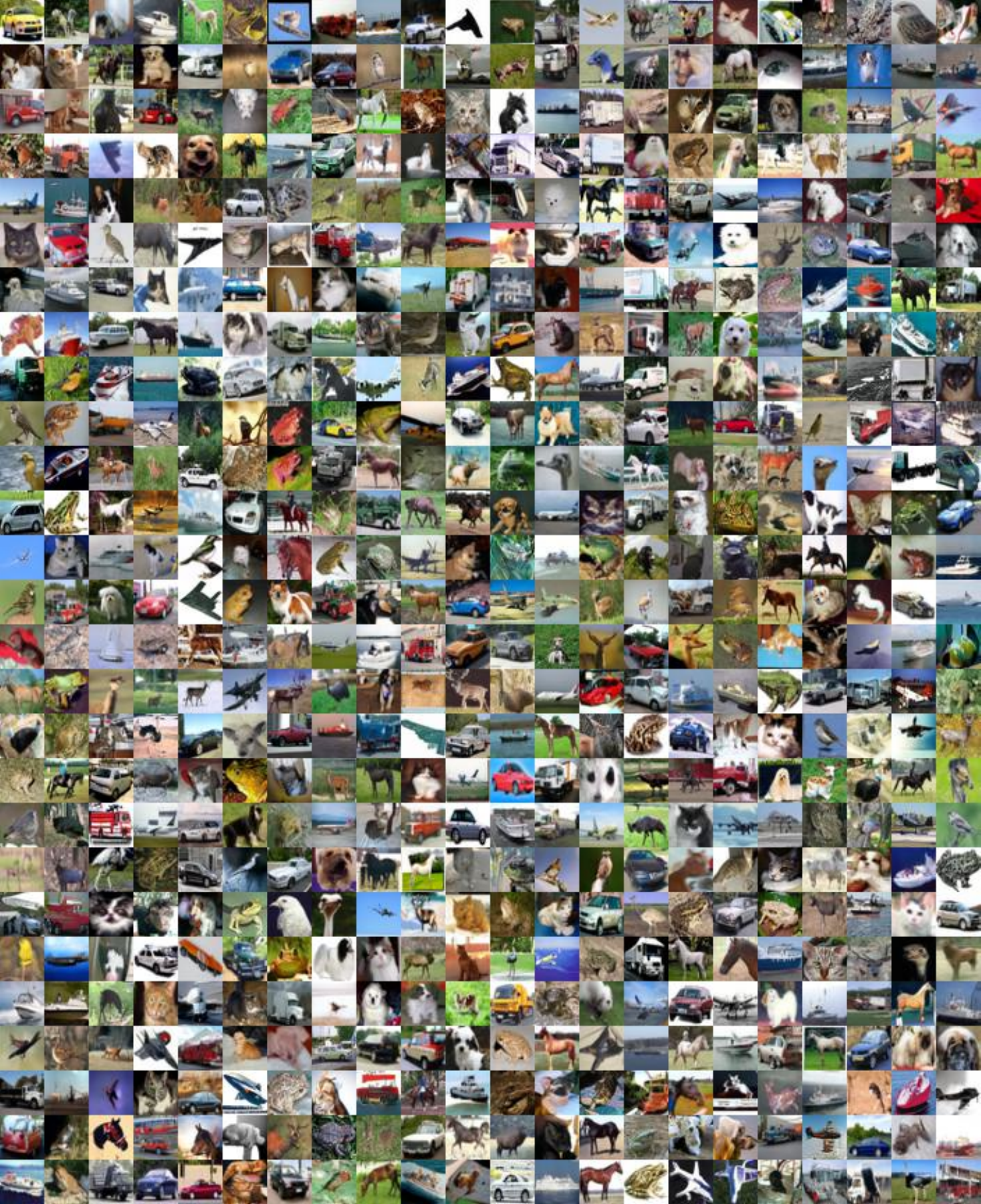}
    \caption{Uncurated samples on CIFAR-10, unconditional, 2 steps.}
    \label{fig:img-add}
\end{figure}
\begin{figure}
    \centering
    \includegraphics[width=0.9\textwidth]{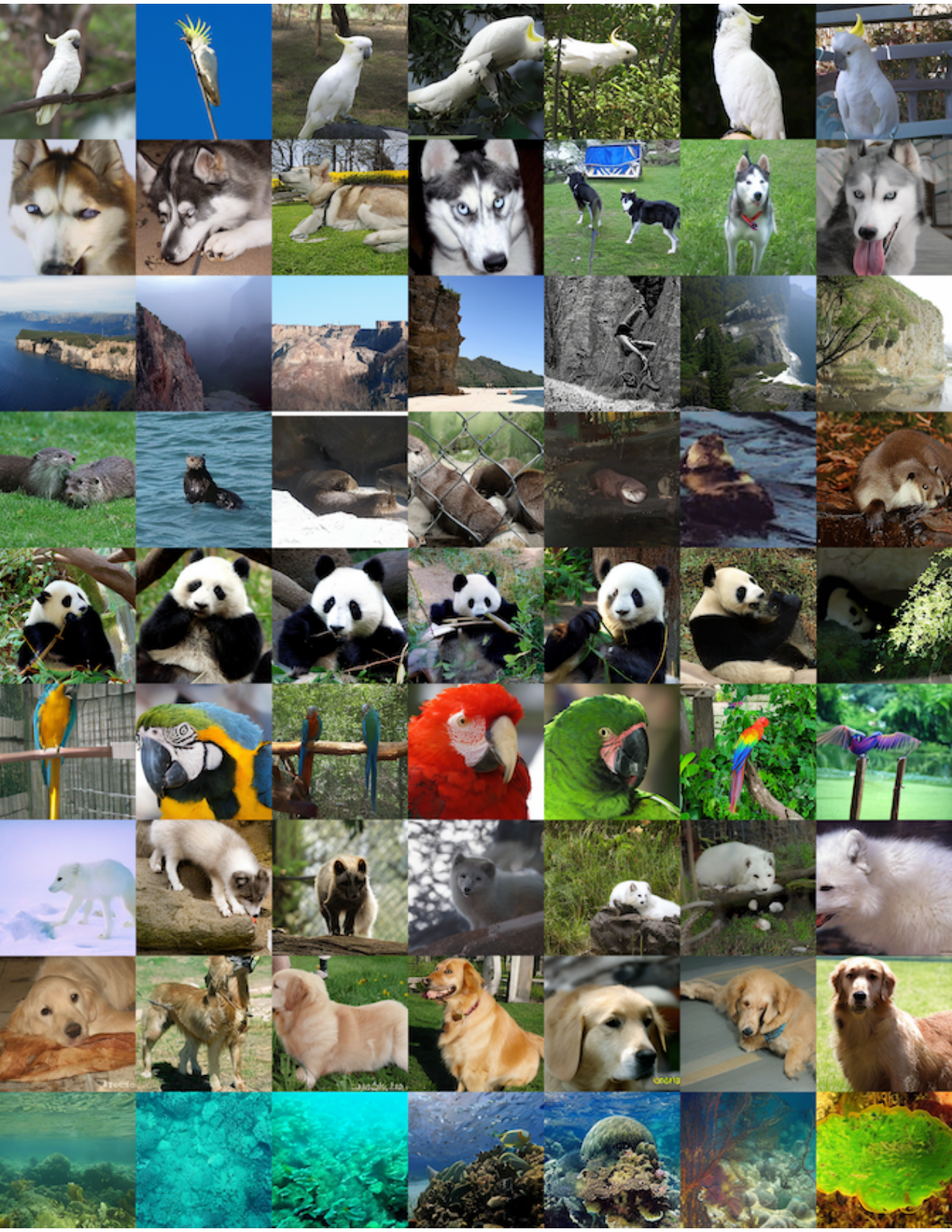}
    \caption{Uncurated samples on ImageNet-$256 \ttimes 256$ using DiT-XL/2 architecture. Guidance $w=1.5$, 8 steps.}
    \label{fig:img-add}
\end{figure}

\begin{figure}
    \centering
    \includegraphics[width=0.9\textwidth]{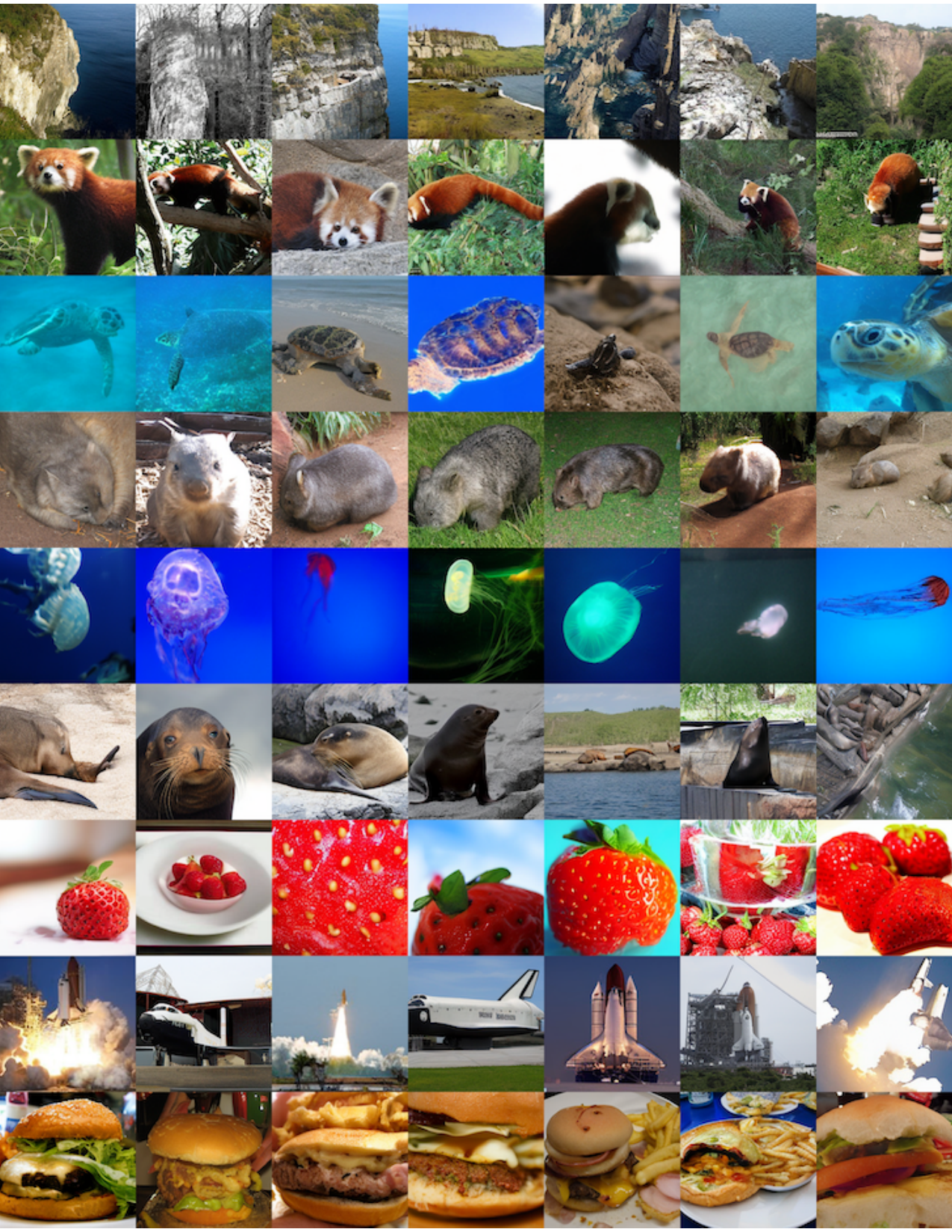}
    \caption{Uncurated samples on ImageNet-$256 \ttimes 256$ using DiT-XL/2 architecture. Guidance $w=1.5$, 8 steps.}
    \label{fig:img-add2}
\end{figure}

\end{document}